\renewcommand{\cite}{\citep}
\renewcommand{\le}{\leqslant}
\renewcommand{\ge}{\geqslant}
\begin{document}

\title{Stochastic Generative Hashing}

\author{
    $^*$Bo Dai$^{1}$, \footnote{Authors are equally contributed.} Ruiqi Guo$^{2}$, Sanjiv Kumar$^2$, Niao He$^3$, Le Song$^1$\\
    $^1$ Georgia Institute of Technology\\
    bodai@gatech.edu, lsong@cc.gatech.edu\\
    $^2$ Google Research, NYC\\
    \{guorq, sanjivk\}@google.com\\
    $^3$ University of Illinois at Urbana-Champaign\\
    niaohe@illinois.edu
}

\maketitle

\begin{abstract}
	Learning-based binary hashing has become a powerful paradigm for fast search and retrieval in massive databases. However, due to the requirement of discrete outputs for the hash functions, learning such functions is known to be very challenging. In addition, the objective functions adopted by existing hashing techniques are mostly chosen heuristically. In this paper, we propose a novel generative approach to learn hash functions through Minimum Description Length principle such that the learned hash codes maximally compress the dataset and can also be used to regenerate the inputs. We also develop an efficient learning algorithm based on the stochastic distributional gradient, which avoids the notorious difficulty caused by binary output constraints, to jointly optimize the parameters of the hash function and the associated generative model. Extensive experiments on a variety of large-scale datasets show that the proposed method achieves better retrieval results than the existing state-of-the-art methods. 
\end{abstract}


\section{Introduction}\label{sec:intro}

\setlength{\abovedisplayskip}{3pt}
\setlength{\abovedisplayshortskip}{3pt}
\setlength{\belowdisplayskip}{3pt}
\setlength{\belowdisplayshortskip}{3pt}
\setlength{\jot}{2pt}

\setlength{\floatsep}{2ex}
\setlength{\textfloatsep}{2ex}

Search for similar items in web-scale datasets is a fundamental step in a number of applications, especially in image and document retrieval. Formally, given a reference dataset $X = \{x_i\}_{i=1}^N$ with $x\in \Xcal\subset \RR^d$, we want to retrieve similar items from $X$ for a given query $y$ according to some similarity measure $sim(x, y)$. When the negative Euclidean distance is used, \ie,  $sim(x, y) = -\|x-y\|_2$, this corresponds to $L_2$ Nearest Neighbor Search (L2NNS) problem; when the inner product is used, \ie, $sim(x, y) = x^\top y$, it becomes a Maximum Inner Product Search~(MIPS) problem. In this work, we focus on L2NNS for simplicity, however our method handles MIPS problems as well, as shown in the supplementary material~\ref{appendix:sgh_mips}. Brute-force linear search is expensive for large datasets. To alleviate the time and storage bottlenecks, two research directions have been studied extensively: (1) partition the dataset so that only a subset of data points is searched; (2) represent the data as codes so that similarity computation can be carried out more efficiently. The former often resorts to search-tree or bucket-based lookup; while the latter relies on binary hashing or quantization. These two groups of techniques are orthogonal and are typically employed together in practice. 

In this work, we focus on speeding up search via binary hashing. Hashing for similarity search was popularized by influential works such as Locality Sensitive Hashing~\cite{lsh,gionis1999similarity,charikar2002similarity}. The crux of binary hashing is to utilize a hash function, $f(\cdot) : \Xcal\rightarrow \{0, 1\}^{l}$, which maps the original samples in $\Xcal\in \RR^d$ to $l$-bit binary vectors $h \in \{0, 1\}^l$ while preserving the similarity measure, \eg, Euclidean distance or inner product. Search with such binary representations can be efficiently conducted using Hamming distance computation, which is supported via \emph{POPCNT} on modern CPUs and GPUs. Quantization based techniques~\cite{babenko2014additive, pq, cq} have been shown to give stronger empirical results but tend to be less efficient than Hamming search over binary codes~\cite{douze2016polysemous,kmeanshashing}.

Data-dependent hash functions are well-known to perform better than randomized ones~\cite{hashingsurvey}. Learning hash functions or binary codes has been discussed in several papers, including spectral hashing~\cite{spectralhashing}, semi-supervised hashing~\cite{WanKumCha10}, iterative quantization~\cite{itq}, and  others~\cite{anchorgraph,bilinearhashing,circulanthashing,shen2015learning, guo2015quantization}. The main idea behind these works is to optimize some objective function that captures the preferred properties of the hash function in a supervised or unsupervised fashion. 

Even though these methods have shown promising performance in several applications, they suffer from two main drawbacks: (1) the objective functions are often heuristically constructed without a principled characterization of goodness of hash codes, and (2) when optimizing, the binary constraints are crudely handled through some relaxation, leading to inferior results~\cite{liu2014discrete}. In this work, we introduce Stochastic Generative Hashing (SGH) to address these two key issues. We propose a generative model which captures both the encoding of binary codes $h$ from input $x$ and the decoding of input $x$ from $h$. This provides a principled hash learning framework, where the hash function is learned by Minimum Description Length (MDL) principle. Therefore, its generated codes can compress the dataset maximally. Such a generative model also enables us to optimize distributions over discrete hash codes without the necessity to handle discrete variables. Furthermore, we introduce a novel distributional stochastic gradient descent method which exploits distributional derivatives and generates higher quality hash codes. Prior work on binary autoencoders~\cite{CarRaz15} also takes a generative view of hashing but still uses relaxation of binary constraints when optimizing the parameters, leading to inferior performance as shown in the experiment section. We also show that binary autoencoders can be seen as a special case of our formulation. In this work, we mainly focus on the unsupervised setting\footnote{The proposed algorithm can be extended to supervised/semi-supervised setting easily as described in the supplementary material~\ref{appendix:generalization}.}.

\section{Stochastic Generative Hashing (SGH)}\label{sec:stoc_generative}

We start by first formalizing the two key issues that motivate the development of the proposed algorithm.

\textbf{Generative view}. Given an input $x\in \RR^d$, most hashing works in the literature emphasize modeling the forward process of \emph{generating binary codes from input}, \ie, $h(x) \in \{0, 1\}^l$, to ensure that the generated hash codes preserve the local neighborhood structure in the original space. Few works focus on modeling the reverse process of \emph{generating input from binary codes}, so that the reconstructed input has small reconstruction error. In fact, the generative view provides a natural learning objective for hashing. Following this intuition, we model the process of generating $x$ from $h$, $p(x|h)$ and derive the corresponding hash function $q(h|x)$ from the generative process. Our approach is not tied to any specific choice of $p(x|h)$ but can adapt to any generative model appropriate for the domain. In this work,  we show that even using a simple generative model (Section~\ref{sec:generative}) already achieves the state-of-the-art performance. 

\textbf{Binary constraints}. The other issue arises from dealing with binary constraints. One popular approach is to relax the constraints from $\{0, 1\}$~\cite{spectralhashing}, but this often leads to a large optimality gap between the relaxed and non-relaxed objectives.  Another approach is to enforce the model parameterization to have a particular structure so that when applying alternating optimization, the algorithm can alternate between updating the parameters and binarization efficiently. For example, \cite{itq, angularhashing} imposed an orthogonality constraint on the projection matrix, while \cite{circulanthashing} proposed to use circulant constraints, and \cite{ZhaZhaLiGuo14} introduced Kronecker Product structure. Although such constraints alleviate the difficulty with optimization, they substantially reduce the model flexibility. In contrast, we avoid such constraints and propose to optimize the \emph{distributions} over the binary variables to avoid directly working with binary variables. This is attained by resorting to the stochastic neuron reparametrization (Section~\ref{sec:learn}), which allows us to back-propagate through the layers of weights using the stochsastic gradient estimator.

Unlike~\cite{CarRaz15} which relies on solving expensive integer programs, our model is end-to-end trainable using \emph{distributional stochastic gradient descent} (Section~\ref{sec:dist_grad}). Our algorithm requires no iterative steps unlike iterative quantization (ITQ) \cite{itq}. The training procedure is much more efficient with guaranteed convergence compared to alternating optimization for ITQ. 

In the following sections, we first introduce the generative hashing model $p(x|h)$ in Section~\ref{sec:generative}. Then, we describe the corresponding process of generating hash codes given input $x$, $q(h|x)$ in Section~\ref{sec:hash}. Finally, we describe the training procedure based on the Minimum Description Length (MDL) principle and the stochastic neuron reparametrization in Sections~\ref{sec:training} and~\ref{sec:learn}. We also introduce the distributional stochastic gradient descent algorithm in Section~\ref{sec:dist_grad}. 

\subsection{Generative Model $p(x|h)$}\label{sec:generative}
Unlike most works which start with the hash function $h(x)$, we first introduce a generative model that defines the likelihood of generating input $x$ given its binary code $h$, \ie, $p(x|h)$. It is also referred as a \textit{decoding} function. The corresponding hash codes are derived from an encoding function $q(h|x)$, described in Section~\ref{sec:hash}.

We use a simple Gaussian distribution to model the generation of $x$ given $h$:
\begin{equation}
p(x, h) \!=\! p(x|h)p(h), \textrm{ where} ~ p(x|h) \!=\! \Ncal(Uh,\!\rho^2I) \label{eqn:generative}
\end{equation}
and $U=\{u_i\}_{i=1}^l,\,  u_i\in \RR^d$ is a codebook with  $l$ codewords. The prior $p(h) \sim \Bcal(\theta) = \prod_{i=1}^l \theta_i^{h_i}(1 - \theta_i)^{1-h_i}$ is modeled as the multivariate Bernoulli distribution on the hash codes, where $\theta = [\theta_i]_{i=1}^l \in [0, 1]^l$. Intuitively, this is an additive model which reconstructs $x$ by summing the selected columns of $U$ given $h$, with a Bernoulli prior on the distribution of hash codes. The joint distribution can be written as:
\begin{eqnarray}\label{eqn:reduced_mrf}
p(x, h) \propto \exp\rbr{\frac{1}{2\rho^2}\underbrace{\rbr{x^\top x + h^\top U^\top Uh - 2x^\top Uh}}_{\|x - U^\top h\|_2^2} -  {(\log\frac{\theta}{1 - \theta})}^\top h }
\end{eqnarray}
This generative model can be seen as a restricted form of general Markov Random Fields in the sense that the parameters for modeling correlation between latent variables $h$ and correlation between $x$ and $h$ are shared. However, it is more flexible compared to Gaussian Restricted Boltzmann machines~\cite{Krizhevsky09,RanHin10} due to an extra quadratic term for modeling correlation between latent variables. We first show that this generative model preserves local neighborhood structure of the $x$ when the Frobenius norm of $U$ is bounded.
\begin{proposition}\label{prop:ann_preserve}
If $\|U\|_F$ is bounded, then the Gaussian reconstruction error, $\| x - U h_x \|_2$ is a surrogate for Euclidean neighborhood preservation.
\end{proposition}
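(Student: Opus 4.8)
The plan is to compare the true Euclidean distance $\|x-y\|_2$ between two database points with the distance $\|Uh_x - Uh_y\|_2$ between their \emph{reconstructions}, and to show that the gap between the two is controlled entirely by the Gaussian reconstruction errors $\|x - Uh_x\|_2$ and $\|y - Uh_y\|_2$. Since retrieval is ultimately carried out in code space, establishing that $\|Uh_x - Uh_y\|_2$ faithfully approximates $\|x-y\|_2$ whenever the reconstruction error is small is exactly the sense in which the reconstruction error serves as a surrogate: minimizing it pulls the code-induced geometry toward the input geometry.

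First I would insert the reconstructions as pivots and apply the triangle inequality in both directions to obtain
\[
\big|\,\|x-y\|_2 - \|Uh_x - Uh_y\|_2\,\big| \le \|x - Uh_x\|_2 + \|y - Uh_y\|_2 .
\]
This is the crux: the deviation between the observable, code-based distance and the target Euclidean distance is bounded uniformly by the sum of per-point reconstruction errors, with no dependence on the data scale. Next I would connect $\|Uh_x - Uh_y\|_2 = \|U(h_x - h_y)\|_2$ to the Hamming distance that search actually uses. Because $h_x,h_y\in\{0,1\}^l$, the difference satisfies $\|h_x - h_y\|_2^2 = d_H(h_x,h_y)$, and the bounded Frobenius norm enters through $\|U(h_x-h_y)\|_2 \le \|U\|_{\mathrm{op}}\,\|h_x-h_y\|_2 \le \|U\|_F\,\sqrt{d_H(h_x,h_y)}$. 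Combining the two bounds yields $\|x-y\|_2 \le \|U\|_F\sqrt{d_H(h_x,h_y)} + \|x-Uh_x\|_2 + \|y-Uh_y\|_2$, i.e. small Hamming distance together with small reconstruction error forces small Euclidean distance, which is precisely neighborhood preservation.

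The role of the hypothesis that $\|U\|_F$ is bounded is thus to prevent a single bit flip from inducing an arbitrarily large change in the reconstruction, so that the Hamming and Euclidean metrics stay on comparable scales; without it the factor multiplying $\sqrt{d_H}$ could blow up and the surrogate guarantee would be vacuous. The step I expect to be the main obstacle — and the reason the claim should be read as one-directional rather than a two-sided isometry — is the passage from $\|U(h_x-h_y)\|_2$ to Hamming distance: the upper bound via $\|U\|_F$ is immediate, but a matching lower bound would require controlling the smallest singular value of $U$ on code differences, which boundedness of $\|U\|_F$ alone does not provide. Fortunately the one-sided bound is all that is needed for a surrogate statement, since the goal is only to certify that controlling the reconstruction error is \emph{sufficient} to preserve Euclidean neighborhoods. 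I would also take care at the outset to fix what $h_x$ denotes (the code assigned to $x$ by the encoder $q(h|x)$) so that the reconstruction $Uh_x$ is well defined throughout.
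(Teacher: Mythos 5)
Your proof is correct and follows essentially the same route as the paper's: insert the reconstructions $Uh_x$, $Uh_y$ as pivots, apply the triangle inequality, and bound $\|U(h_x - h_y)\|_2 \le \|U\|_F \|h_x - h_y\|_2$, yielding the same one-sided surrogate inequality $\|x-y\|_2 - \|U\|_F\|h_x-h_y\|_2 \le \|x-Uh_x\|_2 + \|y-Uh_y\|_2$. Your refinements --- stating the two-sided bound with the absolute value, making the link $\|h_x-h_y\|_2^2 = d_H(h_x,h_y)$ to Hamming search explicit, and noting that a matching lower bound would require controlling the smallest singular value of $U$, which boundedness of $\|U\|_F$ alone cannot give --- are all correct and merely sharpen what the paper leaves implicit.
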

\begin{proof}
Given two points $x, y \in \RR^d$, their Euclidean distance is bounded by
\begin{eqnarray*}
&&\|x - y\|_2 \\
&=&\|(x - U^\top h_x) - (y - U^\top h_y) + (U^\top h_x - U^\top h_y)\|_2\\
&\le& \|x -U^\top h_x\|_2 + \|y - U^\top h_y\|_2 + \|U^\top (h_x - h_y)\|_2 \\
&\le& \|x - U^\top h_x\|_2 + \|y - U^\top h_y\|_2 + \|U\|_F\|h_x - h_y\|_2 
\end{eqnarray*}
where $h_x$ and $h_y$ denote the binary latent variables corresponding to $x$ and $y$, respectively. Therefore, we have
$$
\|x - y\|_2 -  \|U\|_F\|h_x - h_y\|_2 \le \|x - U^\top h_x\|_2 + \|y - U^\top h_y\|_2
$$
which means minimizing the Gaussian reconstruction error, \ie, $-\log p(x|h)$, will lead to Euclidean neighborhood preservation. 
\end{proof}
A similar argument can be made with respect to MIPS neighborhood preservation as shown in the supplementary material~\ref{appendix:sgh_mips}. Note that the choice of $p(x|h)$ is not unique, and any generative model that leads to neighborhood preservation can be used here. In fact, one can even use more sophisticated models with multiple layers and nonlinear functions. In our experiments, we find complex generative models tend to perform similarly to the Gaussian model on datasets such as \texttt{SIFT-1M} and \texttt{GIST-1M}. Therefore, we use the Gaussian model for simplicity.

\subsection{Encoding Model $q(h|x)$}\label{sec:hash}
Even with the simple Gaussian model~(\ref{eqn:generative}), computing the posterior $p(h|x) = \frac{p(x, h)}{p(x)}$ is not tractable, and finding the MAP solution of the posterior involves solving an expensive integer programming subproblem. Inspired by the recent work on variational auto-encoder~\cite{KinWel13,MniGre14,GreDanMniBlu14}, we propose to bypass these difficulties by parameterizing the encoding function as
\begin{eqnarray}\label{eqn:encoder}
q(h|x) = \prod_{k=1}^l q(h_k=1|x)^{h_k} q(h_k=0|x)^{1 - h_k},
\end{eqnarray}
to approximate the exact posterior $p(h|x)$. With the linear parametrization, $h = [h_k]_{k=1}^l \sim \Bcal(\sigma(W^\top x))$ with $W = [w_k]_{k=1}^l$. At the training step, a hash code is obtained by sampling from $\Bcal(\sigma(W^\top x))$. At the inference step, it is still possible to sample  $h$. More directly, the MAP solution of the encoding function (\ref{eqn:encoder}) is readily given by
\vspace{-3mm}
\[
h(x) = \argmax_{h} q(h|x) = \frac{\operatorname{sign}(W^\top x) + 1}{2}
\]
This involves only a linear projection followed by a sign operation, which is common in the hashing literature. Computing $h(x)$ in our model thus has the same amount of computation as ITQ~\cite{itq}, except without the orthogonality constraints. 


\subsection{Training Objective}\label{sec:training}

Since our goal is to reconstruct $x$ using the least information in binary codes, we train the variational auto-encoder using the Minimal Description Length (MDL) principle, which finds the best parameters that maximally compress the training data. The MDL principle seeks to minimize the expected amount of information to communicate $x$:
$$L(x) = \sum_h q(h|x) (L(h) + L(x|h))$$
where $L(h)=-\log p(h) + \log q(h|x)$ is the description length of the hashed representation $h$ and $L(x|h)=-\log p(x|h)$ is the description length of $x$ having already communicated $h$ in~\cite{HinVan93, HinZem94, MniGre14}. By summing over all training examples $x$, we obtain the following training objective, which we wish to minimize with respect to the parameters of $p(x|h)$ and $q(h|x)$:
\begin{equation}\label{eqn:helmholtz}
\min_{\Theta=\{W, U, \beta, \rho\}} H(\Theta) := \sum_x L(x; \Theta) = -\sum_{x} \sum_{h} q(h|x) (\log p(x, h) - \log q(h|x)),
\end{equation}
where $U, \rho$ and $\beta := \log\frac{\theta}{1 - \theta}$ are parameters of the generative model $p(x, h)$ as defined in (\ref{eqn:generative}), and $W$ comes from the encoding function $q(h|x)$ defined in (\ref{eqn:encoder}). This objective is sometimes called Helmholtz (variational) free energy~\cite{Williams80, Zellner88,DaiHeDaiSon16}. When the true posterior $p(h|x)$ falls into the family of (\ref{eqn:encoder}), $q(h|x)$ becomes the true posterior $p(h|x)$, which leads to the shortest description length to represent $x$. 

We emphasize that this objective no longer includes binary variables $h$ as parameters and therefore avoids optimizing with discrete variables directly. This paves the way for continuous optimization methods such as stochastic gradient descent (SGD) to be applied in training. As far as we are aware, this is the first time such a procedure has been used in the problem of unsupervised learning to hash. Our methodology serves as a viable alternative to the relaxation-based approaches commonly used in the past.

\subsection{Reparametrization via Stochastic Neuron}\label{sec:learn}

Using the training objective of (\ref{eqn:helmholtz}), we can directly compute the gradients w.r.t. parameters of $p(x|h)$. However, we cannot compute the stochastic gradients w.r.t. $W$ because it depends on the stochastic binary variables $h$. In order to back-propagate through stochastic nodes of $h$, two possible solutions have been proposed. First, the reparametrization trick~\cite{KinWel13} which works by introducing auxiliary noise variables in the model. However, it is  difficult to apply when the stochastic variables are discrete, as is the case for $h$ in our model. On the other hand, the gradient estimators based on REINFORCE trick~\cite{BenLeoCou13} suffer from high variance. Although some variance reduction remedies have been proposed~\cite{MniGre14,GuLevSutMni15}, they are either biased or require complicated extra computation in practice. 

In next section, we first provide an \emph{unbiased} estimator of the gradient w.r.t. $W$ derived based on distributional derivative, and then, we derive a \emph{simple} and \emph{efficient} approximator. Before we derive the estimator, we first introduce the {stochastic neuron} for reparametrizing Bernoulli distribution. A stochastic neuron reparameterizes each Bernoulli variable $h_k(z)$ with $z \in (0, 1)$. Introducing random variables $\xi \sim \Ucal(0,1)$, the stochastic neuron is defined as
\begin{eqnarray} \label{eqn:doubly_sn}
\tilde{h}(z, \xi):= \begin{cases}
    1       & \quad \text{if } z\ge \xi  \\
    0  & \quad \text{if } z < \xi \\
\end{cases}.
\end{eqnarray}
Because $\PP(\tilde{h}(z,\xi)=1)=z$, we have $\tilde{h}(z,\xi)\sim \Bcal(z)$. We use the stochastic neuron~\eq{eqn:doubly_sn} to reparameterize our binary variables $h$ by replacing $[h_k]_{k=1}^l(x) \sim \Bcal(\sigma(w_k^\top x))$ with $[\tilde{h}_k(\sigma(w_k^\top x), \xi_k)]_{k=1}^l$. Note that $\tilde{h}$ now behaves deterministically given $\xi$. This gives us the reparameterized version of our original training objective (\ref{eqn:helmholtz}):
\begin{align}\label{eqn:reparam_helmholtz}
\tilde{H}(\Theta) = \sum_{x}\Htil(\Theta; x) := \sum_{x} \EE_{\xi}\sbr{\ell(\htil, x)},
\end{align}
where $\ell(\htil, x) :=  -\log p(x, \tilde{h}(\sigma(W^\top x), \xi)) + \log q(\tilde{h}(\sigma(W^\top  x), \xi) |x)$ with $\xi\sim \Ucal(0, 1)$. With such a reformulation, the new objective can now be optimized by exploiting the distributional stochastic gradient descent, which will be explained in the next section.

\section{Distributional Stochastic Gradient Descent}\label{sec:dist_grad}

For the objective in~\eq{eqn:reparam_helmholtz}, given a point $x$ randomly sampled from $\{x_i\}_{i=1}^N$, the stochastic gradient $\widehat\nabla_{U, \beta, \rho}\Htil(\Theta; x)$ can be easily computed in the standard way. However, with the reparameterization, the function $\Htil(\Theta; x)$ is no longer differentiable with respect to $W$ due to the discontinuity of the stochastic neuron  $\htil(z, \xi)$. Namely, the SGD algorithm is not readily applicable. To overcome this difficulty, we will adopt the notion of \emph{distributional derivative} for generalized functions or distributions~\cite{Grubb08}. 

\subsection{Distributional derivative of Stochastic Neuron}

Let $\Omega\subset\RR^d$ be an open set. Denote $\Ccal_0^\infty(\Omega)$ as the space of the functions that are infinitely differentiable with compact support in $\Omega$. Let $\Dcal'(\Omega)$ be the space of continuous linear functionals on $\Ccal_0^\infty(\Omega)$, which can be considered as the dual space. The elements in space  $\Dcal'(\Omega)$ are often called general {\sl distributions}. We emphasize this definition of distributions is more general than that of traditional probability distributions.
%
\begin{algorithm}[t!]
\caption{\textbf{Distributional-SGD} }
  \text{\bf Input:} $\{x_i\}_{i=1}^N$\\[-4mm]
  \begin{algorithmic}[1]\label{alg:distributional_sgd}
    \STATE Initialize $\Theta_0 = \{W, U, \beta, \rho\}$ randomly.
    \FOR{$i=1,\ldots, t$}
      \STATE Sample $x_i$ uniformly from $\{x_i\}_{i=1}^N$.
      \STATE Sample $\xi_i \sim \Ucal([0, 1]^l)$.      
      \STATE Compute stochastic gradients $\widehat\nabla_{\Theta} \Htil(\Theta_{i};x_i)$ or $\widehat{\tilde\nabla}_{\Theta} \Htil(\Theta_{i};x_i)$, defined in~\eq{eq:unbiased_full_grad} and~\eq{eq:biased_full_grad}, respectively.
      \STATE Update parameters as
       $$
       \hspace{-14mm}
       \Theta_{i+1}=\Theta_{i} - \gamma_i \widehat\nabla_{\Theta} \Htil(\Theta_{i};x_i), \text{or}
       $$
       $$
       \Theta_{i+1}=\Theta_{i} - \gamma_i \widehat{\tilde\nabla}_{\Theta} \Htil(\Theta_{i};x_i), \text{respectively}.
       $$
    \ENDFOR\\
   \end{algorithmic}
\end{algorithm}
%
\begin{definition}[Distributional derivative]\cite{Grubb08}\label{def:dist_grad}
Let $u\in \Dcal'(\Omega)$, then a distribution $v$ is called the distributional derivative of $u$, denoted as $v = Du$, if it satisfies
\begin{eqnarray*}
\int_\Omega v\phi dx = -\int_\Omega u \partial \phi dx,\quad \forall \phi\in \Ccal^\infty_0(\Omega).
\end{eqnarray*} 
\end{definition}
It is straightforward to verify that for  given $\xi$, the function $\htil(z,\xi)\in \Dcal'(\Omega)$ and moreover, $D_z \htil(z, \xi) = \delta_\xi(z)$, which is exactly the Dirac-$\delta$ function. Based on the definition of distributional derivatives and chain rules, we are able to compute the distributional derivative of the function $\Htil(\Theta;x)$, which is provided in the following lemma. 
\begin{lemma}\label{lemma:dist_derivative}
For a given sample $x$, the distributional derivative of function $\Htil(\Theta;x)$ w.r.t. $W$ is given by 
\begin{eqnarray}\label{eq:new_grad_I}
D_{W}\Htil(\Theta;x) = \EE_{\xi}\sbr{\Delta_{\htil} \ell(\htil(\sigma(W^\top x), \xi))\sigma(W^\top x)\bullet(1-\sigma(W^\top x)) x^\top},
\end{eqnarray}
where $\bullet$ denotes point-wise product and $\Delta_{\htil} \ell(\htil)$ denotes the finite difference defined as $\sbr{\Delta_{\htil}\ell(\htil)}_{k} = \ell(\htil^{1}_{k}) - \ell(\htil^{0}_{k})$, where $[\htil^{i}_k]_l = \htil_l$ if $k\neq l$, otherwise $[\htil^{i}_k]_l = i$, $i\in\cbr{0, 1}$.
\end{lemma}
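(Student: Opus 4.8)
The plan is to reduce the vector-valued computation to $l$ independent scalar distributional-derivative problems, one per stochastic neuron, and then reassemble them via the chain rule. Fix the sample $x$ and a draw $\xi = (\xi_1, \dots, \xi_l)$. For the $k$-th coordinate I would freeze all other neurons at their sampled values $\tilde{h}_l(\sigma(w_l^\top x), \xi_l)$ for $l \neq k$, and regard $\ell$ purely as a function of the scalar activation $z_k := \sigma(w_k^\top x)$. Since $\tilde{h}_k(z_k, \xi_k)$ is the unit step that jumps from $0$ to $1$ at $z_k = \xi_k$, the composed map $z_k \mapsto \ell(\tilde{h}(z_k, \xi))$ is itself a step function, equal to $\ell(\tilde{h}^0_k)$ for $z_k < \xi_k$ and to $\ell(\tilde{h}^1_k)$ for $z_k \geq \xi_k$; its jump is exactly the finite difference $[\Delta_{\tilde{h}} \ell]_k = \ell(\tilde{h}^1_k) - \ell(\tilde{h}^0_k)$ from the statement.

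Next I would compute the distributional derivative of this step function. Writing it as $\ell(\tilde{h}^0_k) + [\Delta_{\tilde{h}} \ell]_k \, H(z_k - \xi_k)$ with $H$ the Heaviside function, Definition~\ref{def:dist_grad} together with the already-noted identity $D_z \tilde{h}(z, \xi) = \delta_\xi(z)$ gives $D_{z_k} \ell = [\Delta_{\tilde{h}} \ell]_k \, \delta_{\xi_k}(z_k)$. I would then pull this back through the smooth activation $z_k = \sigma(w_k^\top x)$: because $\sigma$ is a smooth diffeomorphism onto $(0,1)$, the ordinary chain rule applies to the outer smooth map, and differentiating $z_k$ with respect to $w_k$ contributes the factor $\sigma(w_k^\top x)(1 - \sigma(w_k^\top x)) \, x^\top$. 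Stacking the $l$ resulting rows produces the matrix $\Delta_{\tilde{h}} \ell \bullet \sigma(W^\top x) \bullet (1 - \sigma(W^\top x)) \, x^\top$, still carrying the Dirac factors $\delta_{\xi_k}(\sigma(w_k^\top x))$.

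Finally I would take the expectation over $\xi \sim \Ucal([0,1]^l)$. Because $\xi_k$ enters only through $\delta_{\xi_k}(z_k)$, while the coefficient $[\Delta_{\tilde{h}} \ell]_k$ depends only on the other coordinates $\xi_l$ with $l \neq k$, integrating in $\xi_k$ against the uniform density (identically $1$ on $(0,1)$, whose support contains $z_k = \sigma(w_k^\top x) \in (0,1)$) simply evaluates the remaining factors and removes the delta, leaving $[\Delta_{\tilde{h}} \ell]_k \, \sigma(w_k^\top x)(1-\sigma(w_k^\top x)) x^\top$. The residual expectation over the surviving $\xi_l$ then reproduces~\eq{eq:new_grad_I}.

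I expect the main obstacle to be the rigorous justification of the chain rule in the distributional setting, since products and compositions of distributions are not generally well-defined. The argument I would give is that the only nonsmooth ingredient is the inner step function $\tilde{h}$, whose distributional derivative is known, that the outer dependence through $\sigma$ is smooth and hence admits the usual pullback of a distribution by a submersion, and that the finite-difference coefficient is genuinely constant in the variable being differentiated. A secondary technical point is to confirm that distributional differentiation commutes with the expectation over $\xi$; this I would handle by testing against $\phi \in \Ccal_0^\infty$ and invoking Fubini, using that $\ell$ and its jumps are integrable over the compact parameter region.
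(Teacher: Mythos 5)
Your proposal is correct and follows essentially the same route as the paper's proof: a per-coordinate scalar reduction in which $z_k \mapsto \ell(\htil(z_k,\xi))$ is a step function whose distributional derivative is the jump $\sbr{\Delta_{\htil}\ell}_k$ times a Dirac mass at $\xi_k$, integration of that Dirac against the uniform density of $\xi_k$ (which evaluates to $1$ on $(0,1)\ni\sigma(w_k^\top x)$) to yield the finite difference with the expectation over the remaining $\xi_l$ surviving, and the smooth chain rule through $\sigma(W^\top x)$ contributing the factor $\sigma(W^\top x)\bullet(1-\sigma(W^\top x))x^\top$. The only cosmetic differences are that the paper obtains the scalar jump formula directly from Definition~\ref{def:dist_grad} by integration by parts against a test function and interchanges $D_k$ with $\EE_\xi$ before invoking its Chain Rule I, whereas you decompose via a Heaviside function, pull back through $\sigma$ first, and explicitly flag the Fubini justification for the interchange that the paper leaves implicit.
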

We can therefore combine distributional derivative estimators~\eq{eq:new_grad_I} with stochastic gradient descent algorithm (see e.g., ~\cite{NemJudLanSha09} and its variants~\cite{KinBa14,BotCurNoc16}), which we designate as \emph{Distributional SGD}. The detail is presented in Algorithm~\ref{alg:distributional_sgd}, where we denote 
\begin{equation}\label{eq:unbiased_full_grad}
\widehat\nabla_{\Theta} \Htil(\Theta_{i};x_i) = \sbr{\widehat D_W \Htil(\Theta_{i};x_i), \widehat\nabla_{U, \beta, \rho} \Htil(\Theta_{i};x_i)}
\end{equation} 
as the unbiased stochastic estimator of the gradient at $\Theta_{i}$ constructed by sample $x_i, \xi_i$. Compared to the existing algorithms for learning to hash which require substantial effort on optimizing over binary variables, the proposed distributional SGD is much simpler and also amenable to online settings~\cite{HuaYanZhe13,LenWuCheBaietal15}.

In general, the distributional derivative estimator~\eq{eq:new_grad_I} requires two forward passes of the model for \emph{each dimension}. To further accelerate the computation, we approximate the distributional derivative $D_{W}\Htil(\Theta;x)$ by exploiting the mean value theorem and Taylor expansion by 
\begin{eqnarray}\label{eq:new_grad_II}
\Dtil_{W}\Htil(\Theta;x) := \EE_{\xi}\sbr{\nabla_{\htil} \ell(\htil(\sigma(W^\top x), \xi))\sigma(W^\top x)\bullet(1-\sigma(W^\top x)) x^\top}, 
\end{eqnarray}
which can be computed for each dimension in \emph{one} pass. Then, we can exploit this estimator
\begin{equation}\label{eq:biased_full_grad}
\widehat{\tilde\nabla}_\Theta \Htil(\Theta_{i}; x_i) = \sbr{\widehat \Dtil_W \Htil(\Theta_{i};x_i), \widehat\nabla_{U, \beta, \rho} \Htil(\Theta_{i};x_i)}
\end{equation}
in Algorithm~\ref{alg:distributional_sgd}. Interestingly, the approximate stochastic gradient estimator of the stochastic neuron we established through the distributional derivative coincides with the heuristic ``pseudo-gradient'' constructed~\cite{RaiBerAlaDin14}. Please refer to the supplementary material~\ref{appendix:dist_derivative} for details for the derivation of the approximate gradient estimator~\eq{eq:new_grad_II}.

\subsection{Convergence of Distributional SGD}
One caveat here is that due to the potential discrepancy of the distributional derivative and the traditional gradient, whether the distributional derivative is still a descent direction and whether the SGD algorithm integrated with distributional derivative converges or not remains unclear in general. However, for our learning to hash problem, one can easily show that the distributional derivative in~\eq{eq:new_grad_I} is indeed the true gradient. 
\begin{proposition}\label{thm:derivative_relationship}
The distributional derivative $D_W \Htil(\Theta;x)$ is equivalent to the traditional gradient $\nabla_W H(\Theta;x)$.
\end{proposition}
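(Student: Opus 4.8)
The plan is to show the two objects coincide by computing each side independently and matching them. On one side I have the distributional derivative $D_W\Htil(\Theta;x)$ from Lemma~\ref{lemma:dist_derivative}, which is already written out explicitly as an expectation over $\xi$ of a finite-difference expression times the Jacobian of $\sigma(W^\top x)$. On the other side I need the ordinary gradient $\nabla_W H(\Theta;x)$ of the \emph{original} (non-reparametrized) objective $H(\Theta;x) = \sum_h q(h|x)\,\ell(h,x)$, where $\ell(h,x) = -\log p(x,h) + \log q(h|x)$ does not itself depend on $W$ through its first argument in a way that produces a $\sigma'$ factor — rather, the only $W$-dependence outside $\ell$ sits in the weights $q(h|x)$. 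So my first step is to differentiate $H(\Theta;x) = \sum_h q(h|x)\ell(h,x)$ with respect to $W$ by pushing the derivative onto the Bernoulli weights $q(h|x) = \prod_k \sigma(w_k^\top x)^{h_k}(1-\sigma(w_k^\top x))^{1-h_k}$.

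The key computation is a standard log-derivative (score function) manipulation, which is clean here precisely because $q(h|x)$ factorizes coordinatewise. I would write $\nabla_W H = \sum_h \ell(h,x)\,\nabla_W q(h|x) = \sum_h \ell(h,x)\, q(h|x)\,\nabla_W \log q(h|x)$, and use $\partial_{w_k}\log q(h|x) = (h_k - \sigma(w_k^\top x))\,x$, since $\partial_z\log\bigl(z^{h_k}(1-z)^{1-h_k}\bigr) = \frac{h_k - z}{z(1-z)}$ evaluated at $z=\sigma(w_k^\top x)$ and $\sigma' = \sigma(1-\sigma)$ cancel the denominator. The plan is then to collapse the sum over all $h$ into a per-coordinate sum: for coordinate $k$, only the marginal over $h_k$ survives the factor $(h_k-\sigma_k)$, and summing $h_k\in\{0,1\}$ against its Bernoulli weights yields exactly the difference $\ell(\htil^1_k) - \ell(\htil^0_k)$ that appears as the finite difference $\Delta_{\htil}\ell$ in Lemma~\ref{lemma:dist_derivative}, multiplied by $\sigma_k(1-\sigma_k)$. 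Matching these expressions term by term, together with recognizing that the expectation over $\xi\sim\Ucal(0,1)^l$ in~\eq{eq:new_grad_I} reproduces the same Bernoulli marginalization over each $h_k$ (because $\PP(\htil(z,\xi)=1)=z$), gives the claimed equality.

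The main obstacle I anticipate is the bookkeeping that identifies the $\xi$-expectation in the distributional-derivative formula with the explicit Bernoulli sum in the score-function computation: I must verify that taking $\EE_\xi$ of $\Delta_{\htil}\ell$ — where the finite difference flips the $k$-th coordinate while holding the others at their sampled values $\htil_l(z_l,\xi_l)$ — genuinely equals the full marginalization $\sum_h q(h|x)(h_k - \sigma_k)\ell(h,x)$ after the cancellation. This requires checking that the remaining coordinates are integrated against their correct Bernoulli marginals under $\EE_\xi$, which follows from independence of the $\xi_k$ but deserves an explicit line. A secondary subtlety is confirming that $\ell(\htil,x)$ in the reparametrized objective and $\ell(h,x)$ in the original objective are the same function evaluated at a binary argument, so that the finite difference and the $h_k$-sum are literally comparing the same quantity; since $\htil(z,\xi)\in\{0,1\}$ almost surely this identification is legitimate, but I would state it to rule out any discrepancy arising from the generalized-function setting. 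Once these identifications are in place, the equality is immediate and no convergence or descent-direction concern remains, since the distributional derivative is simply the classical gradient in disguise.
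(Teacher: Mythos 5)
Your proof is correct in substance but takes a genuinely different route from the paper's. The paper argues abstractly: it notes $\Htil(\Theta;x)=H(\Theta;x)$, asserts that both $D_W\Htil(\Theta;x)$ and $\nabla_W H(\Theta;x)$ are continuous and integrable, and then concludes via uniqueness of distributional derivatives --- since $\int_\Omega Du\,\phi\,dx=-\int_\Omega u\,\partial\phi\,dx$ and $\int_\Omega \nabla u\,\phi\,dx=-\int_\Omega u\,\partial\phi\,dx$ for all test functions $\phi\in\Ccal_0^\infty(\Omega)$, Du Bois--Reymond's lemma (Lemma 3.2 of \cite{Grubb08}) forces $Du=\nabla u$; no explicit formula for either side is ever computed. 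You instead verify the identity by direct calculation: the score-function step $\nabla_{w_k}\log q(h|x)=(h_k-\sigma(w_k^\top x))x$, the per-coordinate collapse producing $\sigma_k(1-\sigma_k)$ times the finite difference $\ell(\htil_k^1)-\ell(\htil_k^0)$, and the identification of $\EE_\xi$ in~\eq{eq:new_grad_I} with Bernoulli marginalization over the remaining coordinates (valid by independence of the $\xi_k$) are all sound and do match Lemma~\ref{lemma:dist_derivative} term by term. What your route buys is concreteness: it exhibits the distributional derivative as exactly the expected REINFORCE gradient, and it substantiates the regularity facts the paper merely asserts (``one can easily verify''); what the paper's route buys is brevity and generality, since the duality argument needs no closed form and applies whenever the objective is $\Ccal^1$ with integrable derivatives. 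One step in your writeup does need repair: when you ``push the derivative onto the Bernoulli weights'' you silently drop the term $\sum_h q(h|x)\,\nabla_W\ell(h,x)$ coming from the explicit $W$-dependence of $\ell$ through its $\log q(h|x)$ summand. That term equals $\sum_h q(h|x)\,\nabla_W\log q(h|x)=\nabla_W\sum_h q(h|x)=0$ --- the expected score vanishes --- but as written your starting identity $\nabla_W H=\sum_h \ell(h,x)\,\nabla_W q(h|x)$ is unjustified without this line, and the matching with Lemma~\ref{lemma:dist_derivative} (whose derivation likewise differentiates only through the argument $\htil$) only closes once the vanishing of this term is stated on both sides.
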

\begin{proof} First of all, by definition, we have $\Htil(\Theta;x)=H(\Theta;x)$. One can easily verify that under mild condition, both $D_W \Htil(\Theta;x)$ and $\nabla_W H(\Theta;x)$ are continuous and $1$-norm bounded. Hence, it suffices to show that for any distribution $u\in \Ccal^1(\Omega)$ and $Du, \nabla u\in \Lcal_1(\Omega)$, $Du=\nabla u$. For any $\phi\in \Ccal_0^\infty(\Omega)$, by definition of the distributional derivative, we have
$\int_\Omega Du\phi dx = -\int_\Omega u \partial \phi dx$. On the other hand, we always have 
$\int_\Omega \nabla u\phi dx = -\int u \partial \phi dx$. 
Hence, $\int_\Omega (Du-\nabla u)\phi dx=0$ for all $\phi\in \Ccal_0^\infty(\Omega)$. By the Du Bois-Reymond's lemma (see Lemma 3.2 in \cite{Grubb08}), we have $Du=\nabla u$. 
\end{proof}
Consequently, the distributional SGD algorithm enjoys the same convergence property as the traditional SGD algorithm. Applying theorem 2.1 in~\cite{GhaLan13}, we arrive at 
\begin{theorem}\label{thm:convergence}
Under the assumption that $H$ is $L$-Lipschitz smooth and the variance of the stochastic distributional gradient~\eq{eq:unbiased_full_grad} is bounded by $\sigma^2$ in the distributional SGD, for the solution $\Theta_R$ sampled from the trajectory $\cbr{\Theta_i}_{i=1}^t$ with probability $P(R=i) =\frac{2\gamma_i - L\gamma_i^2}{\sum_{i=1}^t 2\gamma_i - L\gamma_i^2}$ where $\gamma_i\sim\Ocal\rbr{1/\sqrt{t}}$, we have
{\small
$$
\EE\sbr{\Big\|\nabla_{\Theta} \Htil(\Theta_R)\Big\|^2}\sim \Ocal\rbr{\frac{1}{\sqrt{t}}}.
$$ 
}
\end{theorem}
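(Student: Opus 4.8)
The plan is to reduce the statement to a direct application of the randomized stochastic gradient analysis of~\cite{GhaLan13} for nonconvex smooth stochastic optimization. The crucial enabling fact is Proposition~\ref{thm:derivative_relationship}, which identifies the distributional derivative $D_W \Htil(\Theta;x)$ with the ordinary gradient $\nabla_W H(\Theta;x)$; combined with the identity $\Htil(\Theta)=H(\Theta)$ noted in its proof, this shows that Distributional-SGD is, despite the non-differentiability of the stochastic neuron, genuine stochastic gradient descent on the smooth objective $H(\Theta)$. Consequently none of the subtleties of generalized functions survive into the convergence argument, and the standard machinery for nonconvex SGD becomes available with $H$ in the role of the objective.

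First I would verify that the estimator $\widehat\nabla_{\Theta}\Htil(\Theta_i;x_i)$ of~\eq{eq:unbiased_full_grad} is an unbiased estimator of $\nabla_{\Theta} H(\Theta_i)$. The components with respect to $U,\beta,\rho$ are ordinary derivatives of a differentiable integrand, so that taking expectation over the uniformly sampled $x_i$ recovers the corresponding partials of $H=\sum_x L(x;\Theta)$. For the $W$-component I would appeal to the representation~\eq{eq:new_grad_I}, whose outer $\EE_\xi$ is estimated by the single fresh draw $\xi_i\sim\Ucal([0,1]^l)$, so that averaging over $x_i$ and $\xi_i$ returns exactly $D_W H=\nabla_W H$ by Proposition~\ref{thm:derivative_relationship}. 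Together these give $\EE\sbr{\widehat\nabla_{\Theta}\Htil(\Theta_i;x_i)}=\nabla_{\Theta}H(\Theta_i)$, which is precisely the unbiasedness hypothesis required by~\cite{GhaLan13}.

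With unbiasedness established, the two remaining hypotheses of Theorem~2.1 in~\cite{GhaLan13} coincide with the assumptions in the statement: $L$-Lipschitz smoothness of $H$ and the bounded-variance condition $\sigma^2$. Instantiating that theorem with the randomized index $R$ drawn according to $P(R=i)\propto 2\gamma_i-L\gamma_i^2$ yields, up to the explicit constants of~\cite{GhaLan13}, a bound of the form
\begin{equation*}
\EE\sbr{\big\|\nabla_{\Theta}\Htil(\Theta_R)\big\|^2}\le \frac{2\rbr{H(\Theta_1)-H^*}+L\sigma^2\sum_{i=1}^t\gamma_i^2}{\sum_{i=1}^t\rbr{2\gamma_i-L\gamma_i^2}},
\end{equation*}
where $H^*=\inf_{\Theta}H(\Theta)$ is finite since the free energy is bounded below (a reconstruction term plus a relative entropy). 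Finally I would substitute the prescribed schedule $\gamma_i\sim\Ocal(1/\sqrt{t})$: the denominator grows like $\sum_i 2\gamma_i\sim\Ocal(\sqrt{t})$ while the numerator remains $\Ocal(1)$ because $\sum_i\gamma_i^2\sim\Ocal(1)$, delivering the advertised $\Ocal(1/\sqrt{t})$ rate.

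I expect the only genuine obstacle to be the unbiasedness of the $W$-component, since that is where the distributional-derivative formalism meets the sampling scheme. However, Proposition~\ref{thm:derivative_relationship} has already done the heavy lifting by collapsing the distributional derivative onto the classical gradient, so once that identification is invoked the remainder is a routine instantiation of the Ghadimi--Lan bound. The smoothness and bounded-variance conditions are taken as hypotheses rather than proved, so no effort is spent establishing them.
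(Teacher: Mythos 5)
Your proposal is correct and takes essentially the same route as the paper: the paper's proof also reduces the theorem to the Ghadimi--Lan bound (restated as Lemma~\ref{lemma:general_unbiased_convergence}, which gives $\sum_{i=1}^t\rbr{\gamma_i - \frac{L}{2}\gamma_i^2}\EE\sbr{\|\nabla_{\Theta}\Htil(\Theta_i)\|^2}\le \Htil(\Theta_0)-\Htil(\Theta^*)+\frac{L\sigma^2}{2}\sum_{i=1}^t\gamma_i^2$), with Proposition~\ref{thm:derivative_relationship} supplying the identification of the distributional derivative with the true gradient, and then samples $\Theta_R$ with $P(R=i)\propto 2\gamma_i-L\gamma_i^2$ and uses $\gamma_i\sim\Ocal(1/\sqrt{t})$ to conclude the $\Ocal(1/\sqrt{t})$ rate. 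Your explicit verification of unbiasedness of the estimator in~\eq{eq:unbiased_full_grad} and the spelled-out final bound are details the paper leaves implicit, but the argument is the same.
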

In fact, with the approximate gradient estimators~\eq{eq:new_grad_II}, the proposed algorithm is also converging in terms of first-order conditions, \ie,
\begin{theorem}\label{thm:biased_convergence}
Under the assumption that the variance of the approximate stochastic distributional gradient~\eq{eq:biased_full_grad} is bounded by $\sigma^2$, for the solution $\Theta_R$ sampled from the trajectory $\cbr{\Theta_i}_{i=1}^t$ with probability $P(R =i)=\frac{\gamma_i}{\sum_{i=1}^t\gamma_i}$ where $\gamma_i\sim\Ocal\rbr{1/\sqrt{t}}$, we have
{\small
$$
\EE\sbr{\rbr{\Theta_R - \Theta^*}^\top \tilde\nabla_\Theta\Htil(\Theta_R)}\sim\Ocal\rbr{\frac{1}{\sqrt{t}}},
$$
}
where $\Theta^*$ denotes the optimal solution. 
\end{theorem}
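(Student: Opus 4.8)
The plan is to run the textbook ``squared-distance-to-optimum'' argument for stochastic gradient descent, but to carry the \emph{approximate} gradient $\tilde\nabla_\Theta\Htil$ through the analysis rather than the true gradient $\nabla_\Theta H$. The crucial observation is that although the approximate estimator in~\eq{eq:new_grad_II} is biased with respect to the exact gradient, the per-sample stochastic estimator $\widehat{\tilde\nabla}_\Theta\Htil(\Theta_i;x_i)$ of~\eq{eq:biased_full_grad} is an \emph{unbiased} estimator of the full approximate gradient, i.e. $\EE_{x_i,\xi_i}\sbr{\widehat{\tilde\nabla}_\Theta\Htil(\Theta_i;x_i)\mid \Theta_i}=\tilde\nabla_\Theta\Htil(\Theta_i)$. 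This is exactly why the convergence criterion is stated with $\tilde\nabla$ rather than $\nabla$: the algorithm follows the approximate gradient in expectation, so the natural object to control is the variational-inequality gap $\rbr{\Theta_i-\Theta^*}^\top\tilde\nabla_\Theta\Htil(\Theta_i)$, not a stationarity measure of $H$.

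First I would write $g_i := \widehat{\tilde\nabla}_\Theta\Htil(\Theta_i;x_i)$ and expand the one-step recursion
$$\|\Theta_{i+1}-\Theta^*\|^2 = \|\Theta_i-\Theta^*\|^2 - 2\gamma_i\, g_i^\top\rbr{\Theta_i-\Theta^*} + \gamma_i^2\|g_i\|^2.$$
Taking the conditional expectation given $\Theta_i$ and using the unbiasedness above replaces $g_i$ by $\tilde\nabla_\Theta\Htil(\Theta_i)$ in the inner-product term, while the variance hypothesis, combined with a uniform bound on $\|\tilde\nabla_\Theta\Htil(\Theta_i)\|$, bounds $\EE\|g_i\|^2$ by a constant $M^2$. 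Rearranging isolates the target inner product:
$$2\gamma_i\,\EE\sbr{\rbr{\Theta_i-\Theta^*}^\top\tilde\nabla_\Theta\Htil(\Theta_i)} = \EE\|\Theta_i-\Theta^*\|^2 - \EE\|\Theta_{i+1}-\Theta^*\|^2 + \gamma_i^2\,\EE\|g_i\|^2.$$

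Next I would sum this identity from $i=1$ to $t$. The distance terms telescope, so their contribution is at most the initial gap $D^2:=\|\Theta_1-\Theta^*\|^2$ (the nonnegative terminal term is dropped), and the final term is bounded by $M^2\sum_{i=1}^t\gamma_i^2$. Dividing by $2\sum_{i=1}^t\gamma_i$ and recognizing that the randomized output rule $P(R=i)=\gamma_i/\sum_{j}\gamma_j$ is precisely a $\gamma_i$-weighted average yields
$$\EE\sbr{\rbr{\Theta_R-\Theta^*}^\top\tilde\nabla_\Theta\Htil(\Theta_R)} \le \frac{D^2 + M^2\sum_{i=1}^t\gamma_i^2}{2\sum_{i=1}^t\gamma_i}.$$
Substituting the stepsize $\gamma_i\sim\Ocal(1/\sqrt{t})$ makes $\sum_i\gamma_i\sim\Ocal(\sqrt{t})$ and $\sum_i\gamma_i^2\sim\Ocal(1)$, which balances the two terms and gives the claimed $\Ocal(1/\sqrt{t})$ rate.

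The routine parts are the recursion expansion and the telescoping sum; the conceptual obstacle is purely in the bias bookkeeping. I must ensure that at no point do I need $\widehat{\tilde\nabla}$ to be unbiased for $\nabla H$ (it is not), and that the second-moment constant $M^2$ is finite, which I obtain by combining the stated variance bound with a uniform bound on the approximate gradient norm; the latter follows from boundedness of the parameters in this model (cf.\ the bounded $\|U\|_F$ of Proposition~\ref{prop:ann_preserve}) or can be taken as a standing assumption. Unlike Theorem~\ref{thm:convergence}, I would \emph{not} invoke $L$-Lipschitz smoothness: the gap criterion stated here is exactly what the distance-to-optimum argument produces without smoothness, which is consistent with the theorem omitting that hypothesis for the biased estimator.
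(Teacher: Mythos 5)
Your proposal is correct and follows essentially the same route as the paper's own proof: the identical squared-distance recursion $\|\Theta_{i+1}-\Theta^*\|^2$ expansion, conditional expectation using unbiasedness of $\widehat{\tilde\nabla}_\Theta\Htil(\Theta_i;x_i)$ for the \emph{approximate} gradient $\tilde\nabla_\Theta\Htil(\Theta_i)$, telescoping, and the $\gamma_i$-weighted randomized output rule, with no smoothness assumption. The only (cosmetic) difference is that the paper bounds $\EE\|\widehat{\tilde\nabla}_\Theta\Htil(\Theta_i;x_i)\|^2$ directly by $\sigma^2$, implicitly reading the variance hypothesis as a second-moment bound, whereas you decompose it as variance plus a uniform bound $M^2$ on the mean gradient norm --- a slightly more careful bookkeeping of the same step.
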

For the detailed proof of theorem~\ref{thm:convergence} and~\ref{thm:biased_convergence}, please refer to the supplementary material~\ref{appendix:convergence_dist_sgd}.

\section{Connections}\label{sec:connection}

The proposed stochastic generative hashing is a general framework. In this section, we reveal the connection to several existing algorithms.

\noindent\textbf{Iterative Quantization~(ITQ).} If we fix some $\rho >0$, and $U = WR$ where $W$ is formed by eigenvectors of the covariance matrix and $R$ is an orthogonal matrix, we have $U^\top U = I$. If we assume the joint distribution as
$$
p(x, h) \propto \Ncal(WRh, \rho^2I)\Bcal(\theta),
$$
and parametrize $q(h|x_i) = \delta_{b_i}(h)$, then from the objective in~\eq{eqn:helmholtz} and ignoring the irrelevant terms, we obtain the optimization
\vspace{-2mm}
\begin{equation}\label{eqn:itq}
\min_{R, b} \sum_{i=1}^{N} \|x_i - WRb_i\|^2,
\end{equation}
which is exactly the objective of iterative quantization~\cite{itq}.

\noindent\textbf{Binary Autoencoder~(BA).} If we use the deterministic linear encoding function, \ie, $q(h|x) = \delta_{\frac{1 + \sgn(W^\top x)}{2}}(h)$, and prefix some $\rho >0$, and ignore the irrelevant terms, the optimization~\eq{eqn:helmholtz} reduces to
\begin{equation}\label{eqn:ba}
\min_{U, W} \sum_{i=1}^N \Big\|x_i - Uh\Big\|^2, \,\, \st \,\,h = \frac{1 + \sgn(W^\top x)}{2},
\end{equation}
which is the objective of a binary autoencoder~\cite{CarRaz15}. 

In BA, the encoding procedure is deterministic, therefore, the entropy term $\EE_{q(h|x)}\sbr{\log q(h|x)} = 0$. In fact, the entropy term, if non-zero, performs like a regularization and helps to avoid wasting bits. Moreover, without the stochasticity, the optimization~\eq{eqn:ba} becomes extremely difficult due to the binary constraints. While for the proposed algorithm, we exploit the stochasticity to bypass such difficulty in optimization. The stochasticity enables us to accelerate the optimization as shown in section~\ref{sec:empirical_sgd}.

\section{Experiments}\label{sec:experiments}

In this section, we evaluate the performance of the proposed distributional SGD on commonly used datasets in hashing. Due to the efficiency consideration, we conduct the experiments mainly with the approximate gradient estimator~\eq{eq:new_grad_II}. We evaluate the model and algorithm from several aspects to demonstrate the power of the proposed SGH: \textbf{(1) Reconstruction loss.} To demonstrate the flexibility of generative modeling, we compare the $L2$ reconstruction error to that of ITQ~\cite{itq}, showing the benefits of modeling without the orthogonality constraints. \textbf{(2) Convergence of the distributional SGD}. We evaluate the reconstruction error showing that the proposed algorithm indeed converges, verifying the theorems. \textbf{(3) Training time.} The existing generative works require a significant amount of time for training the model. In contrast, our SGD algorithm is very fast to train both in terms of number of examples needed and the wall time.  \textbf{(4) Nearest neighbor retrieval.} We show Recall K@N plots on standard large scale nearest neighbor search benchmark datasets of \texttt{MNIST}, \texttt{SIFT-1M}, \texttt{GIST-1M} and \texttt{SIFT-1B}, for all of which we achieve state-of-the-art among binary hashing methods. \textbf{(5) Reconstruction visualization.} Due to the generative nature of our model, we can regenerate the original input with very few bits. On \texttt{MNIST} and \texttt{CIFAR10}, we qualitatively illustrate the templates that correspond to each bit and the resulting reconstruction.

We used several benchmarks datasets, \ie, (1) \texttt{MNIST} which contains 60,000 digit images of size $28 \times 28$ pixels, (2) \texttt{CIFAR-10} which contains 60,000 $32\times32$ pixel color images in 10 classes, (3) \texttt{SIFT-1M} and (4) \texttt{SIFT-1B} which contain $10^6$ and $10^9$ samples, each of which is a $128$ dimensional vector, and (5) \texttt{GIST-1M} which contains $10^6$ samples, each of which is a $960$ dimensional vector. 	

\subsection{Reconstruction loss}\label{sec:reconstruction}

%
\begin{figure}[t]
\begin{center}
  \begin{tabular}{cc}
    \includegraphics[width=0.4\columnwidth, trim={0.25cm 0.1cm 1.15cm 0.6cm},clip]{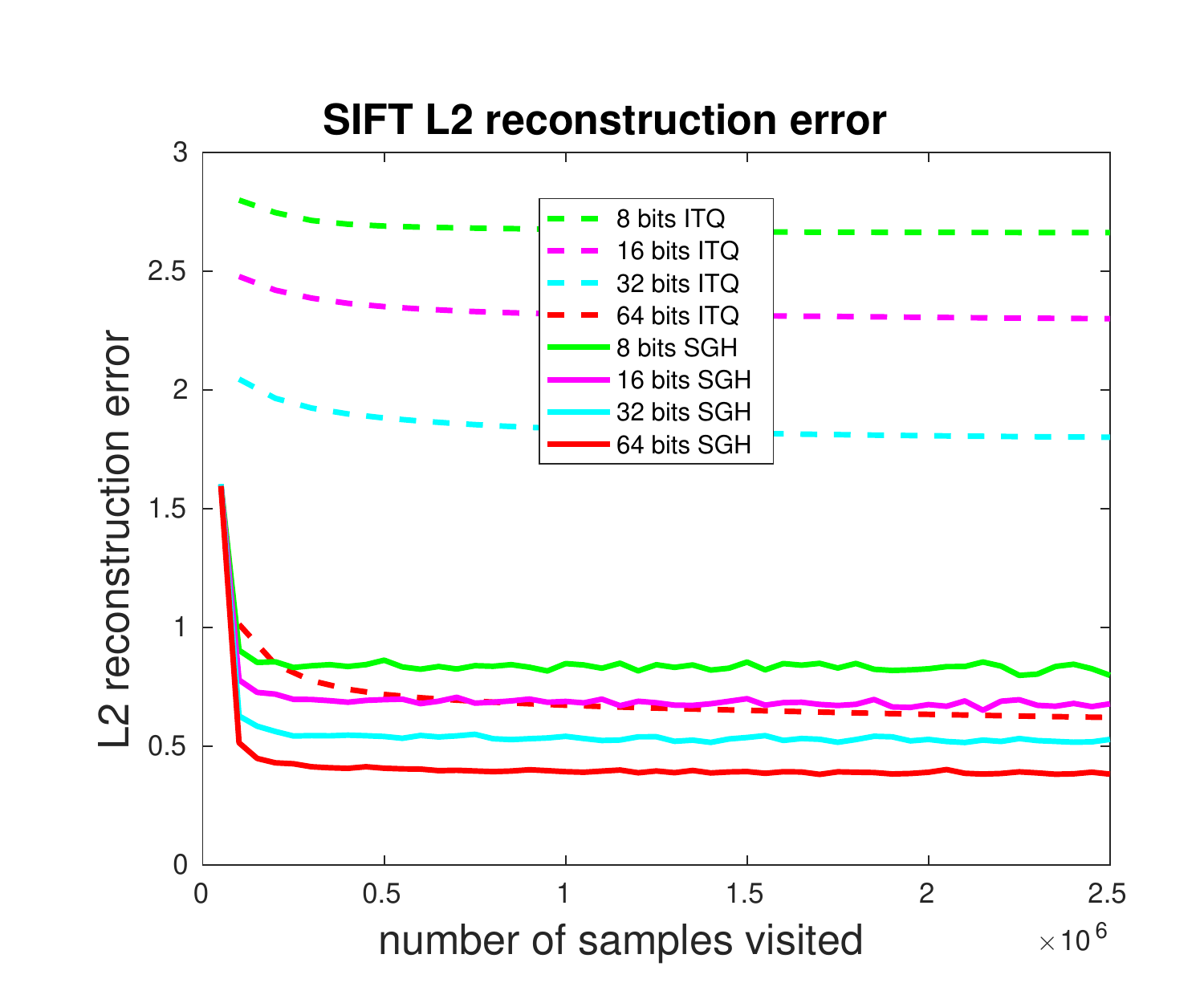}&
    \includegraphics[width=0.4\columnwidth, trim={0.3cm 0.8cm 1.8cm 0.6cm},clip]{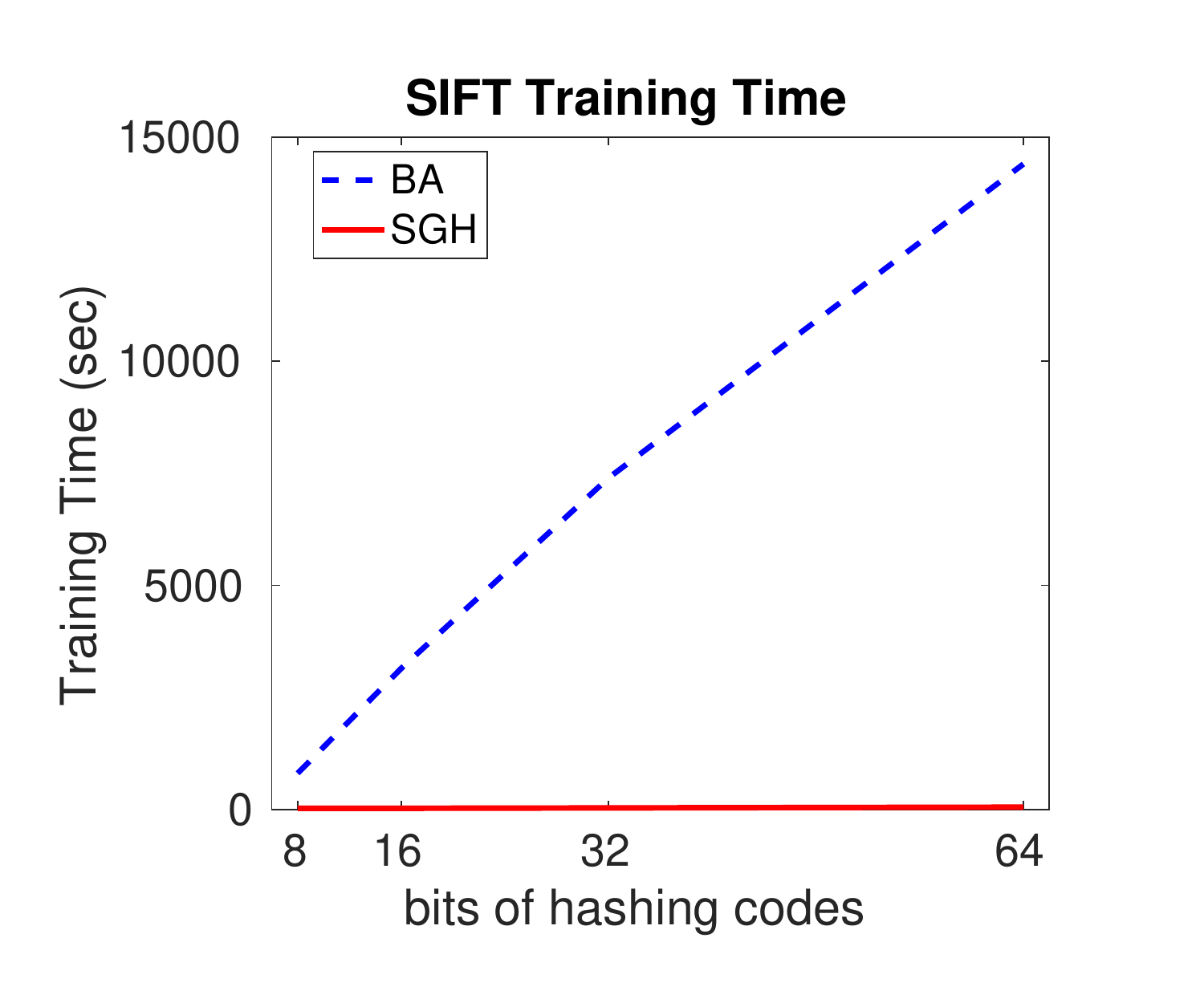}\\
    (a) {Reconstruction Error} &\hspace{-4mm} (b)  {Training Time}  \\
  \end{tabular}
  \vspace{-2mm}
  \caption{(a) Convergence of reconstruction error with number of samples seen by SGD, and (b) training time comparison of BA and SGH on \texttt{SIFT-1M} over the course of training with varying number of bits. }
  \label{fig:reconstruction}
\end{center}
\end{figure}
\begin{table}[t]
\vspace{-1mm}
\caption{Training time on \texttt{SIFT-1M} in second.} \label{table:time_comp}
\vspace{-5mm}
\begin{center}
    \begin{tabular}{| l | l | l | l | l |}
    \hline
    Method & 8 bits & 16 bits & 32 bits  & 64 bits\\ \hline
    SGH & 28.32 &  29.38 & 37.28 & 55.03\\
    ITQ & 92.82 & 121.73 & 173.65 & 259.13\\
    \hline
    \end{tabular}
\end{center}
\vspace{-1mm}
\end{table}
Because our method has a generative model $p(x|h)$, we can easily compute the regenerated input $\tilde{x}=\argmax p(x|h)$, and then compute the $L_2$ loss of the regenerated input and the original $x$, \ie, $\|x-\tilde{x}\|^2_2$. ITQ also trains by minimizing the binary quantization loss, as described in Equation (2) in~\cite{itq}, which is essentially $L_2$ reconstruction loss when the magnitude of the feature vectors is compatible with the radius of the binary cube. We plotted the $L_2$ reconstruction loss of our method and ITQ on \texttt{SIFT-1M} in Figure~\ref{fig:reconstruction}(a) and on \texttt{MNIST} and \texttt{GIST-1M} in Figure~\ref{fig:more_reconstruction}, where the x-axis indicates the number of examples seen by the training algorithm and the y-axis shows the average $L_2$ reconstruction loss. The training time comparison is listed in Table~\ref{table:time_comp}. Our method (SGH) arrives at a better reconstruction loss with comparable or even less time compared to ITQ. The lower reconstruction loss demonstrates our claim that the flexibility of the proposed model afforded by removing the orthogonality constraints indeed brings extra modeling ability. Note that ITQ is generally regarded as a technique with  fast training among the existing binary hashing algorithms, and most other algorithms~\cite{kmeanshashing, sphericalhashing, CarRaz15} take much more time to train.

\subsection{Empirical study of Distributional SGD}\label{sec:empirical_sgd}

We demonstrate the convergence of the distributional derivative with Adam~\cite{KinBa14} numerically on \texttt{SIFT-1M}, \texttt{GIST-1M} and \texttt{MINST} from $8$ bits to $64$ bits. The convergence curves on \texttt{SIFT-1M} are shown in Figure~\ref{fig:reconstruction} (a). The results on \texttt{GIST-1M} and \texttt{MNIST} are similar and shown in Figure~\ref{fig:more_reconstruction} in supplementary material~\ref{appendix:more_exp}. Obviously, the proposed algorithm, even with a biased gradient estimator, converges quickly, no matter how many bits are used. It is reasonable that with more bits, the model fits the data better and the reconstruction error can be reduced further. 

In line with the expectation, our distributional SGD trains much faster since it bypasses integer programming. We benchmark the actual time taken to train our method to convergence and compare that to binary autoencoder hashing~(BA)~\cite{CarRaz15} on \texttt{SIFT-1M}, \texttt{GIST-1M} and \texttt{MINST}. We illustrate the performance on \texttt{SIFT-1M} in Figure~\ref{fig:reconstruction}(b)
. The results on \texttt{GIST-1M} and \texttt{MNIST} datasets follow a similar trend as shown in the supplementary material~\ref{appendix:more_exp}. Empirically, BA takes significantly more time to train on all bit settings due to the expensive cost for solving integer programming subproblem. Our experiments were run on AMD 2.4GHz Opteron CPUs$\times 4$ and 32G memory. Our implementation of the stochastic neuron as well as the whole training procedure was done in TensorFlow. We have released our code on GitHub\footnote{\href{https://github.com/doubling/Stochastic_Generative_Hashing}{https://github.com/doubling/Stochastic\_Generative\_Hashing}}. For the competing methods, we directly used the code released by the authors.

\subsection{Large scale nearest neighbor retrieval}
We compared the stochastic generative hashing on an L2NNS task with several state-of-the-art unsupervised algorithms, including $K$-means hashing~(KMH)~\cite{kmeanshashing}, iterative quantization~(ITQ)~\cite{itq}, spectral hashing~(SH)~\cite{spectralhashing}, spherical hashing~(SpH)~\cite{sphericalhashing}, binary autoencoder~(BA)~\cite{CarRaz15}, and scalable graph hashing~(GH)~\cite{JiaLi15}. We demonstrate the performance of our binary codes by doing standard benchmark experiments of Approximate Nearest Neighbor (ANN) search by comparing the retrieval recall. In particular, we compare with other unsupervised techniques that also generate binary codes. For each query, linear search in \emph{Hamming} space is conducted to find the approximate neighbors. 

Following the experimental setting of~\cite{kmeanshashing}, we plot the Recall10@N curve for \texttt{MNIST}, \texttt{SIFT-1M}, \texttt{GIST-1M}, and \texttt{SIFT-1B} datasets under varying number of bits (16, 32 and 64) in Figure~\ref{fig:recall}. On the \texttt{SIFT-1B} datasets, we only compared with ITQ since the training cost of the other competitors is prohibitive. The recall is defined as the fraction of retrieved true nearest neighbors to the total number of true nearest neighbors. The Recall10@N is the recall of 10 ground truth neighbors in the N retrieved samples.  Note that Recall10@N is generally a more challenging criteria than Recall@N (which is essentially Recall1@N), and better characterizes the retrieval results. For completeness, results of various Recall K@N curves can be found in the supplementary material which show similar trend as the Recall10@N curves.

\begin{figure*}[t]

\begin{center}
  \begin{tabular}{cccc}
    \includegraphics[width=0.245\columnwidth, trim={0.5cm 0.5cm 1cm -0.4cm},clip]{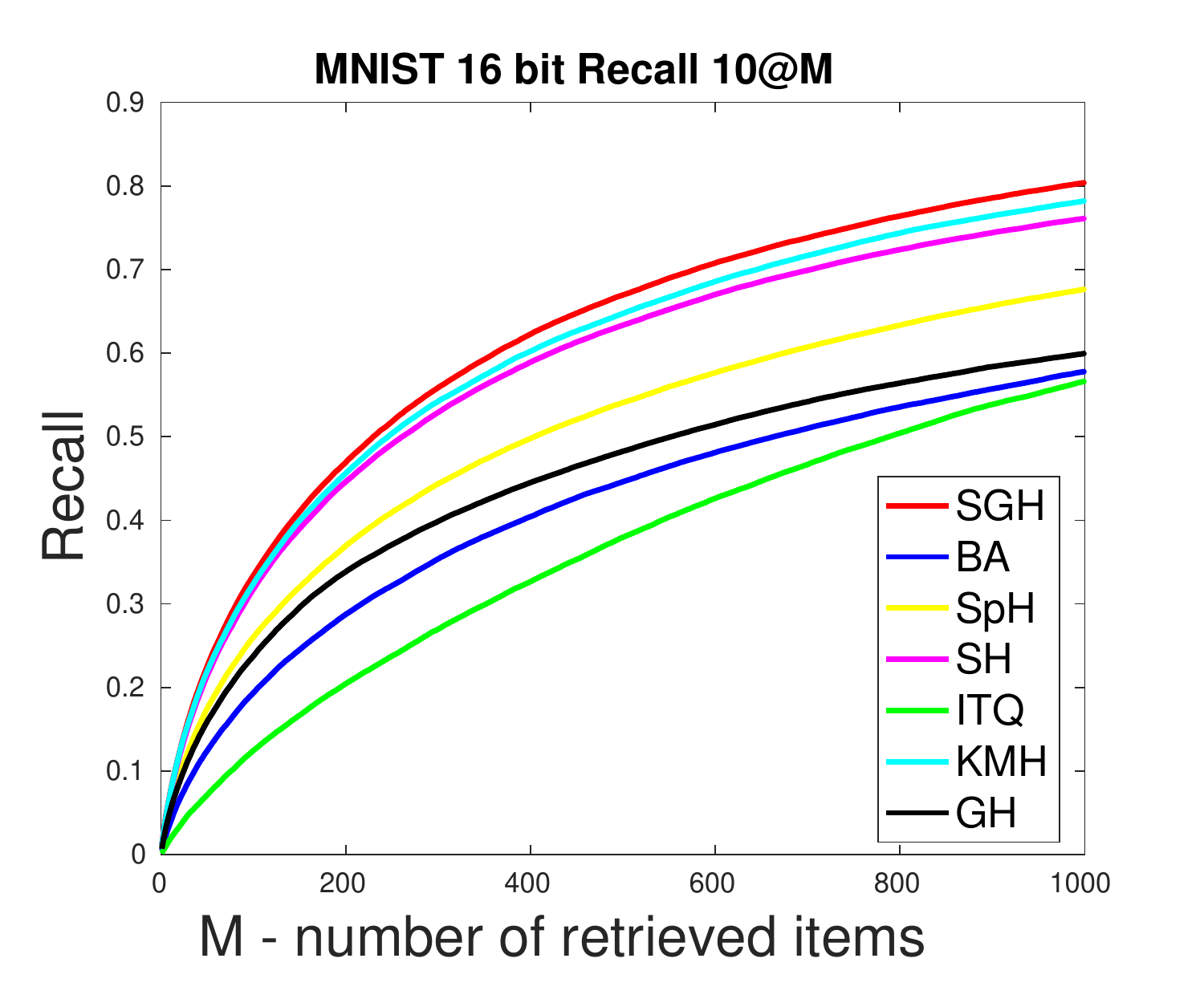}& \hspace{-5mm}
    \includegraphics[width=0.245\columnwidth, trim={0.8cm 1cm 1.6cm 1cm},clip]{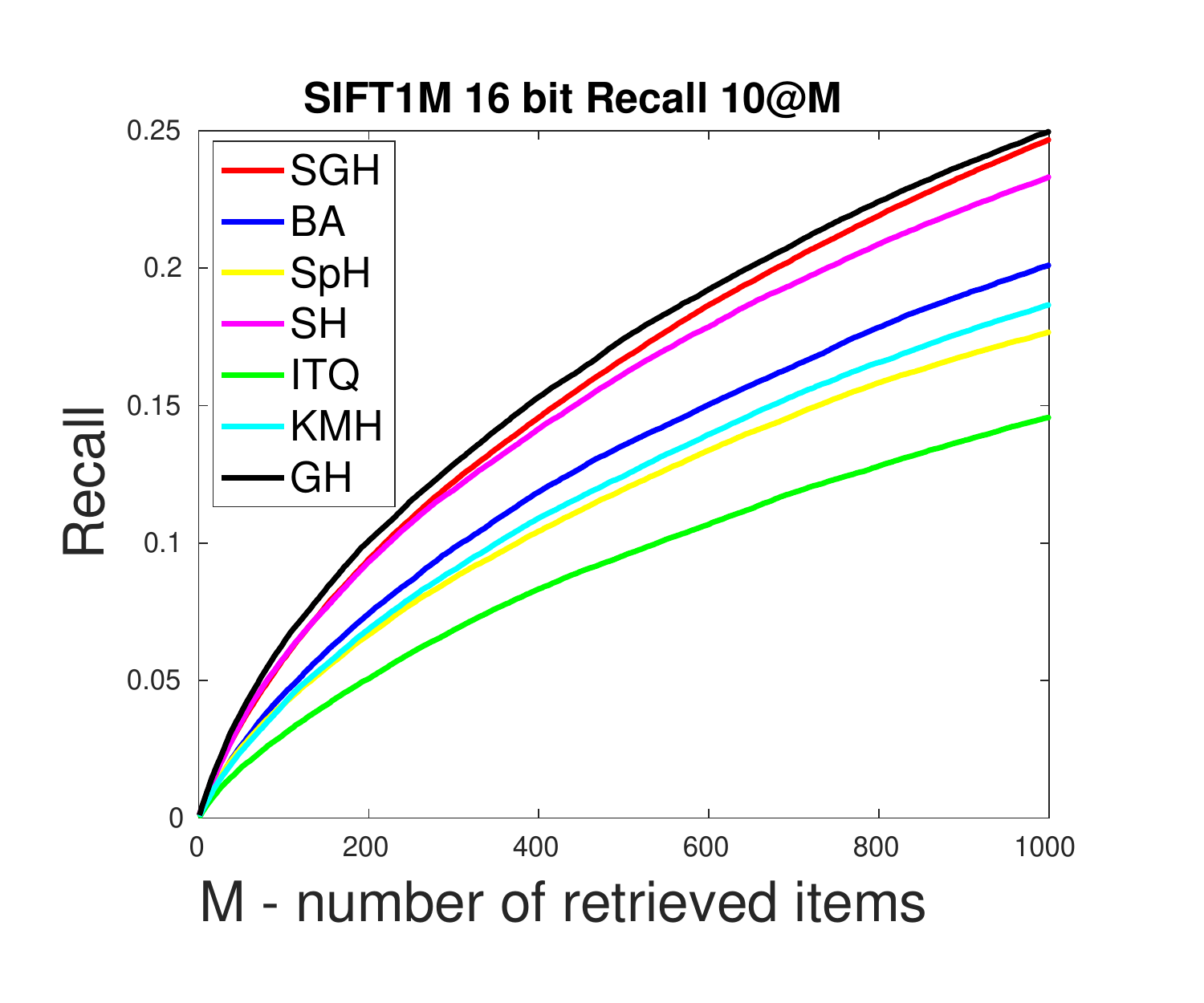} & \hspace{-5mm}
    \includegraphics[width=0.245\columnwidth, trim={0.8cm 1cm 1.6cm 1cm},clip]{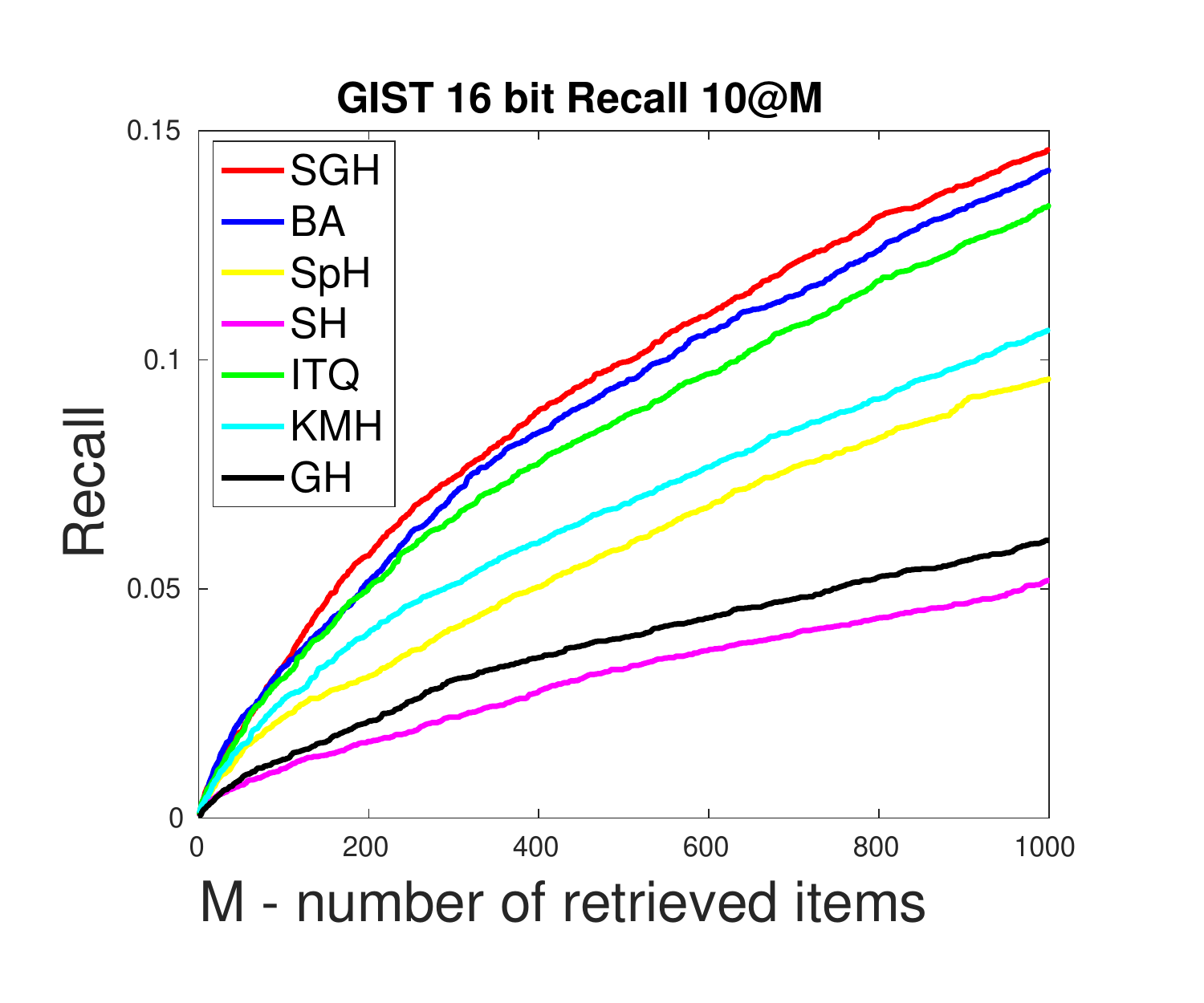} & \hspace{-5mm}
    \includegraphics[width=0.245\columnwidth,  height=0.205\columnwidth, trim={0cm 0cm 0cm 0.1cm},clip]{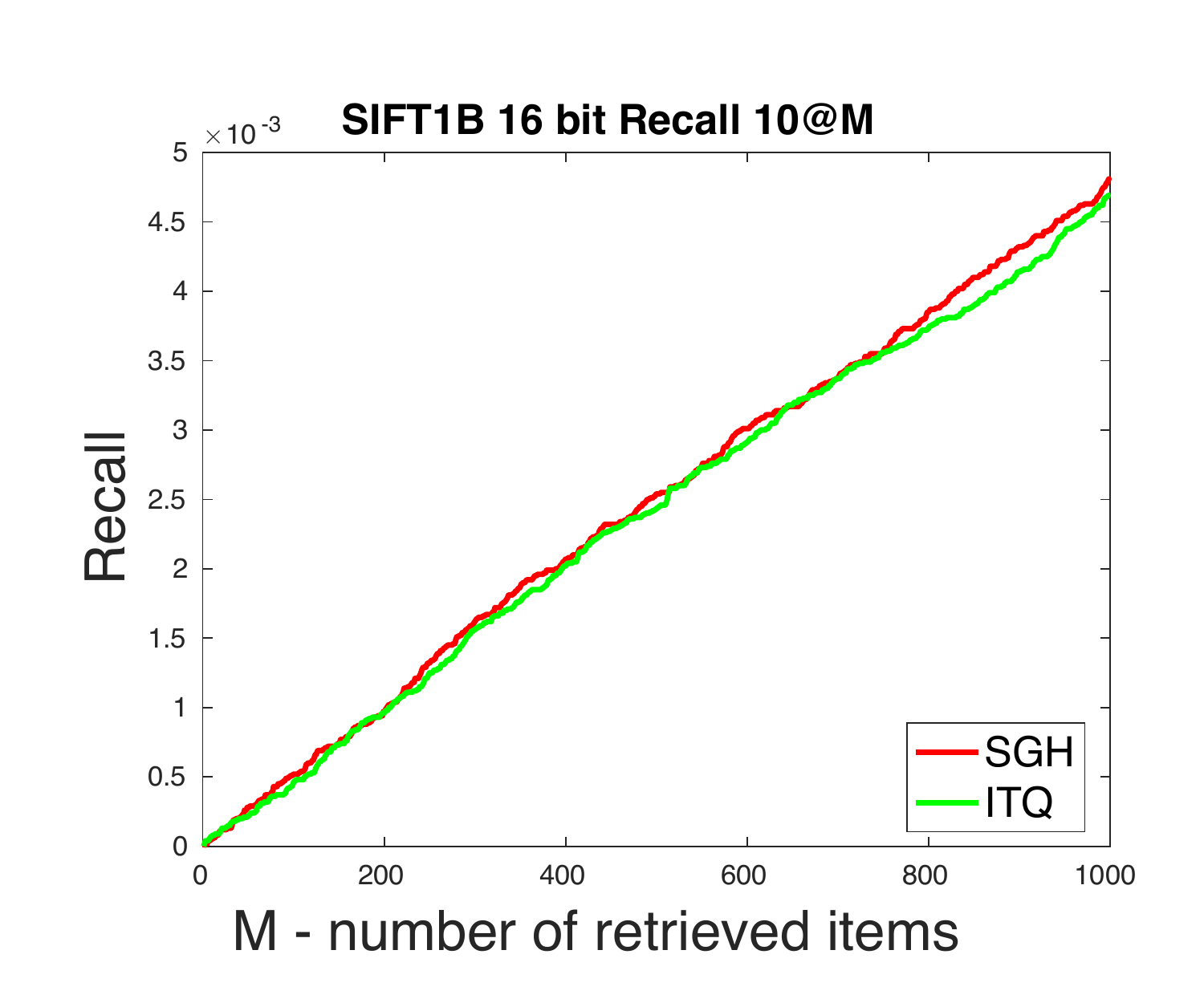} \\
    
    \includegraphics[width=0.245\columnwidth, trim={0.9cm 1cm 1.6cm 1cm},clip]{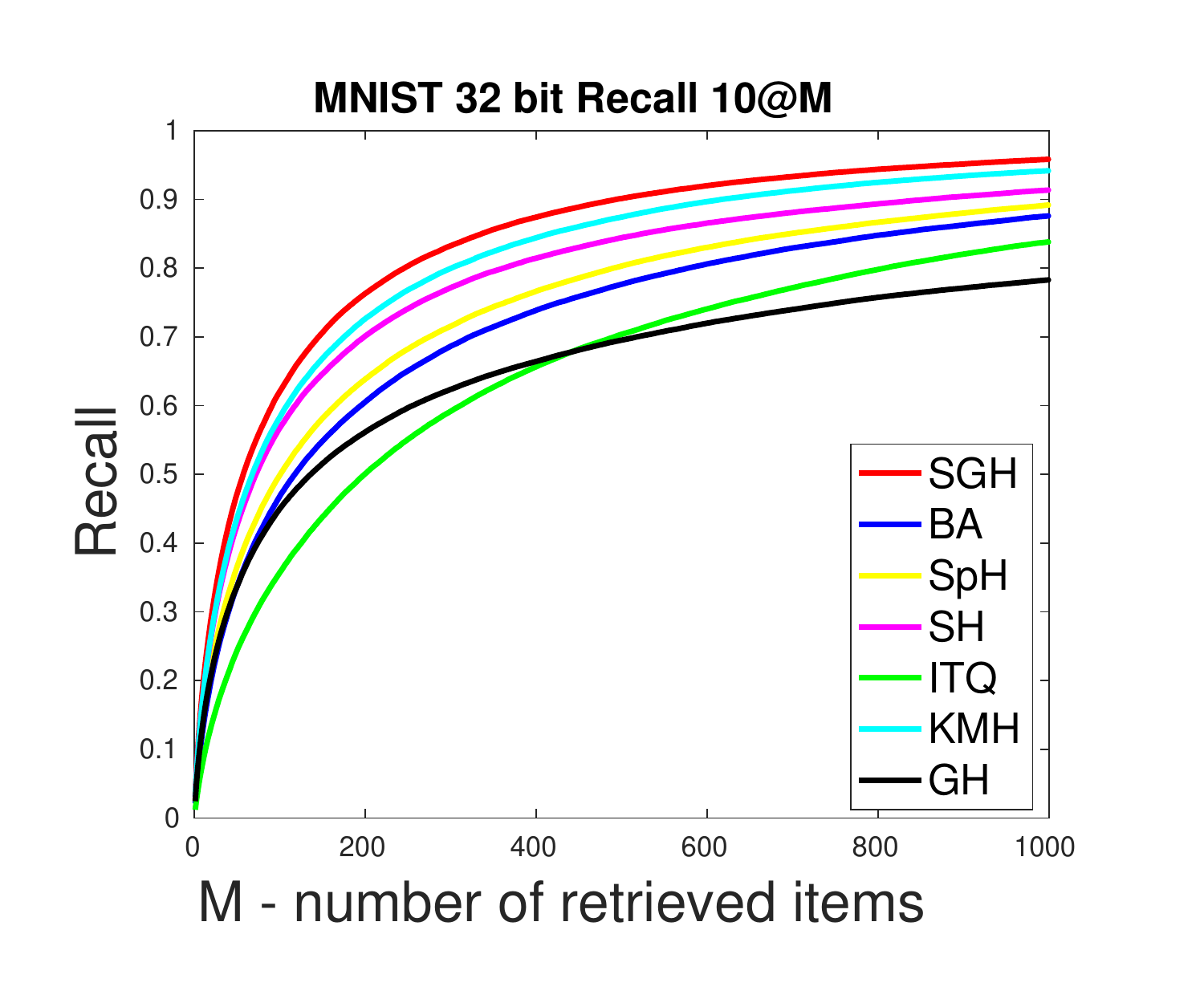} & \hspace{-5mm}
    \includegraphics[width=0.245\columnwidth, trim={0.8cm 1cm 1.6cm 1cm},clip]{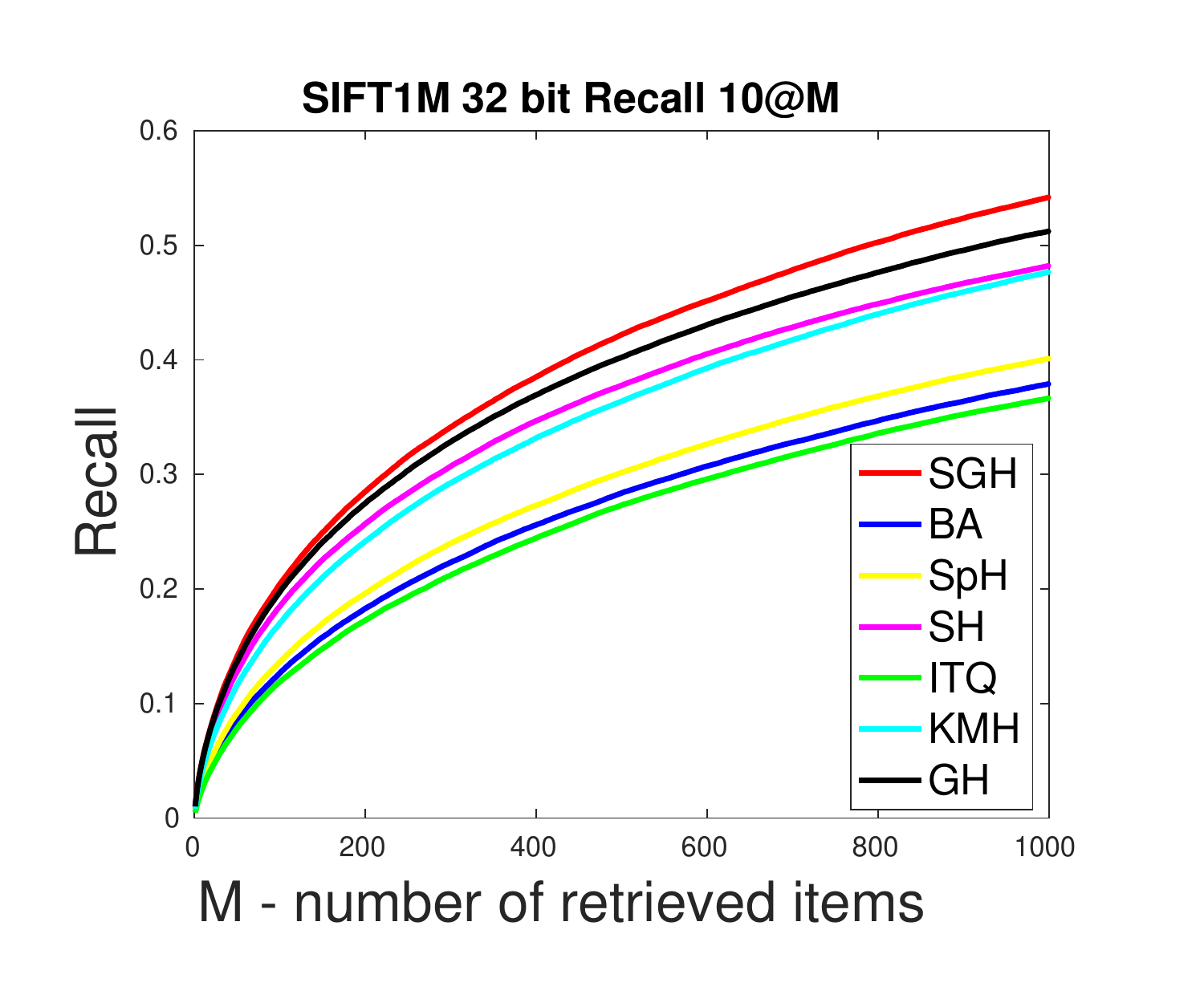} & \hspace{-5mm}
    \includegraphics[width=0.245\columnwidth, trim={0.8cm 1cm 1.6cm 1cm},clip]{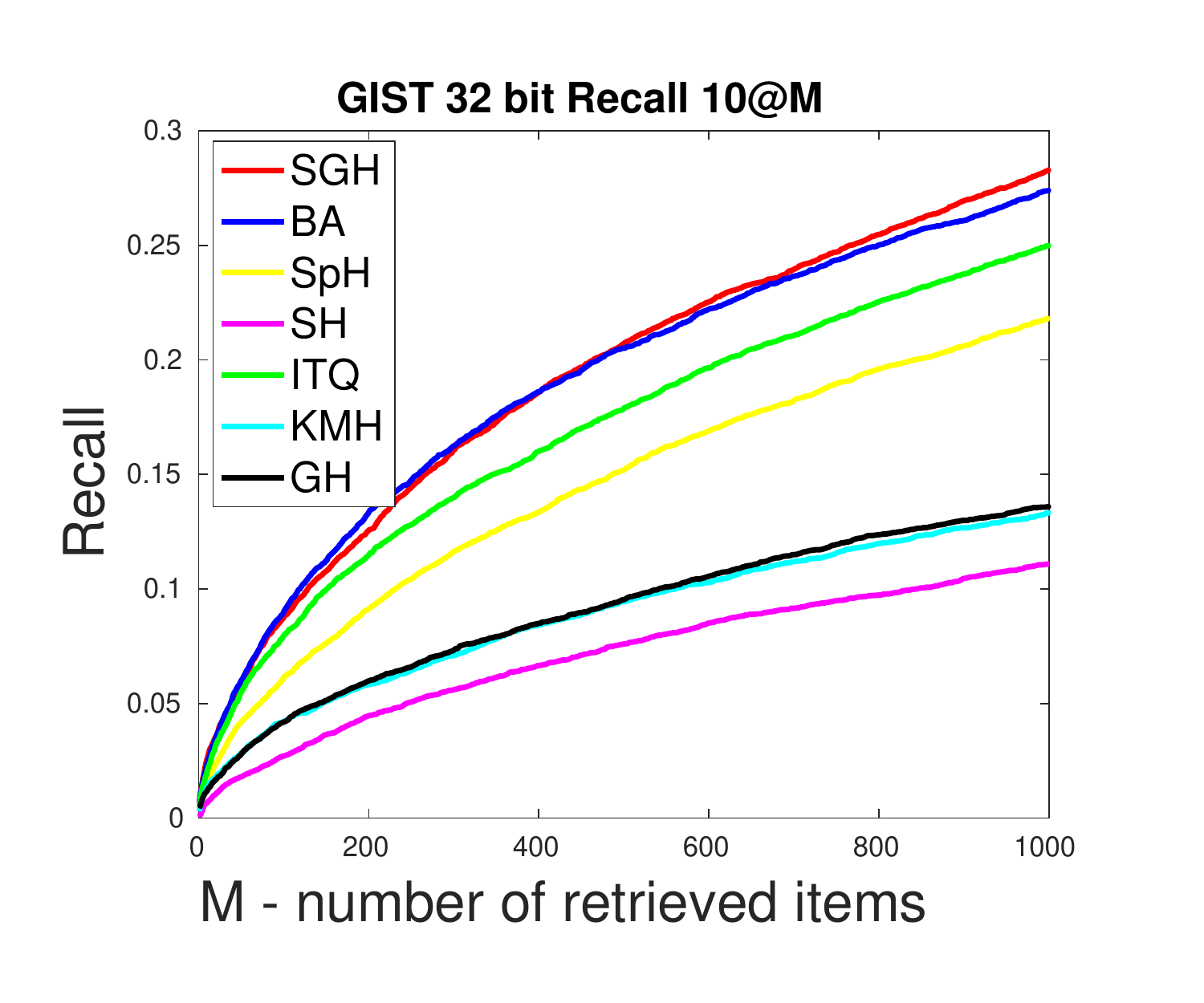} & \hspace{-5mm}
    \includegraphics[width=0.245\columnwidth, height=0.205\columnwidth, trim={0cm -0.1cm 0cm 0.1cm},clip]{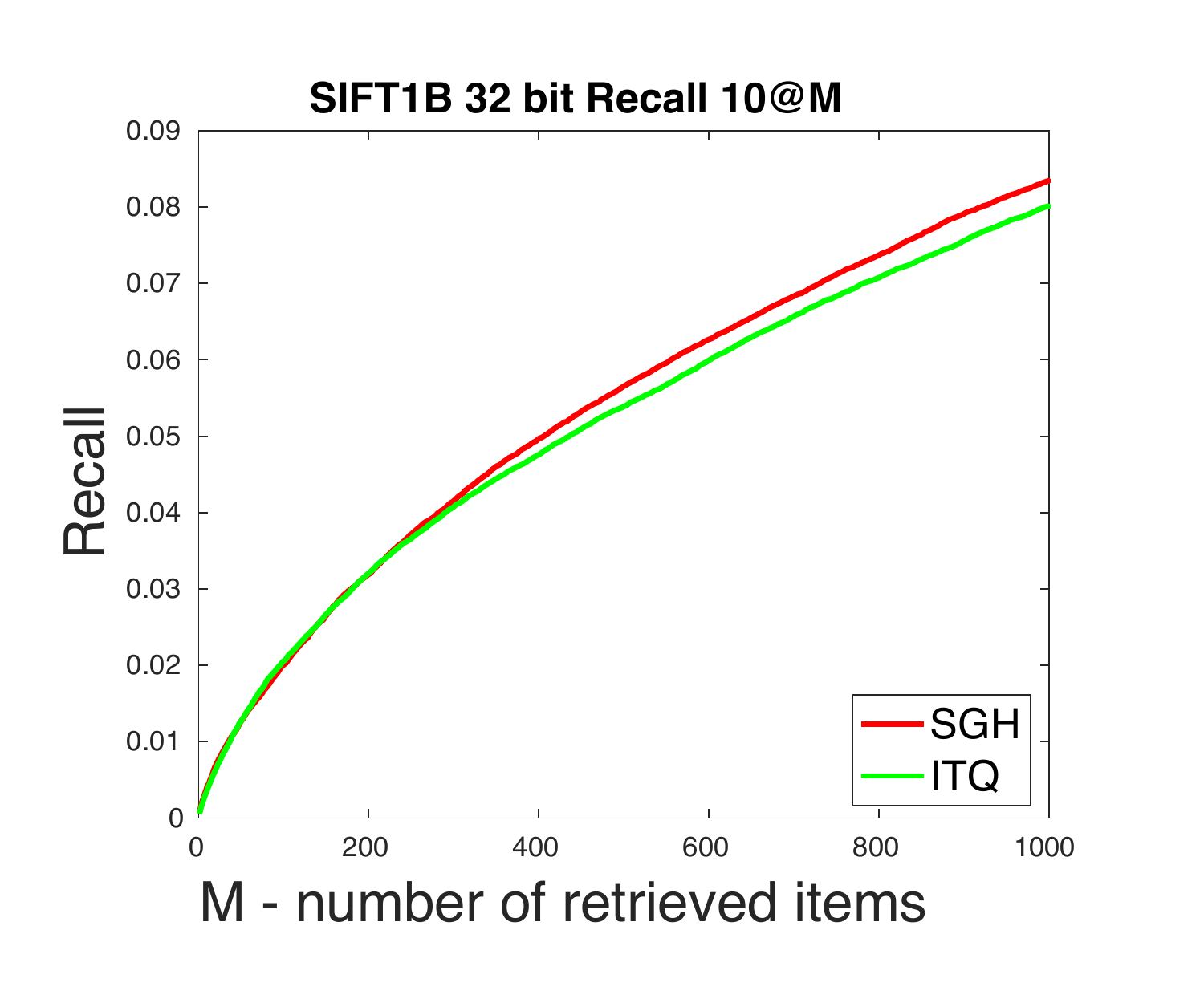} \\
    
    \includegraphics[width=0.245\columnwidth, trim={1cm 0.6cm 1.1cm 1.2cm},clip]{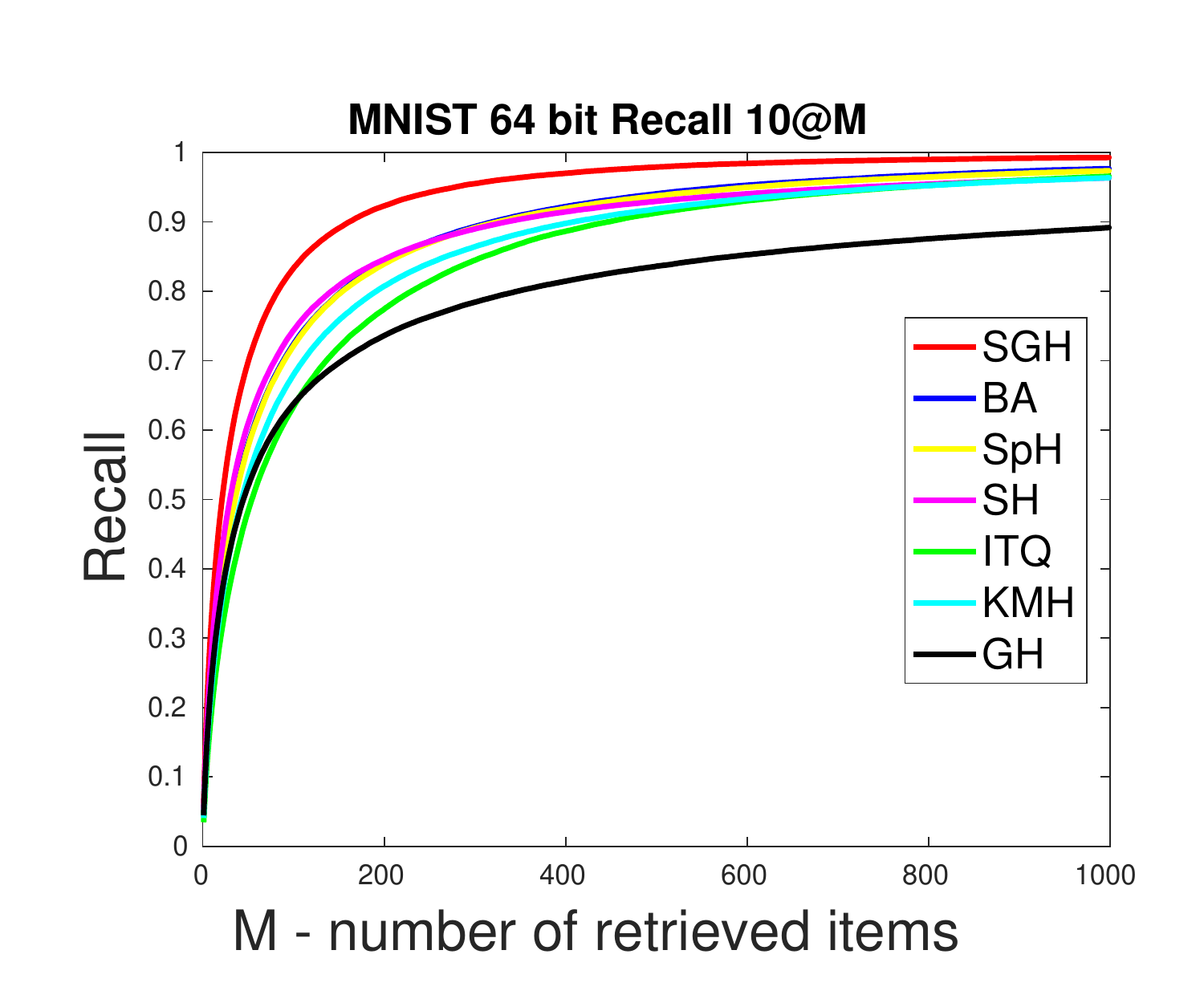} & \hspace{-5mm}
    \includegraphics[width=0.245\columnwidth, trim={0.8cm 1cm 1.6cm 1cm},clip]{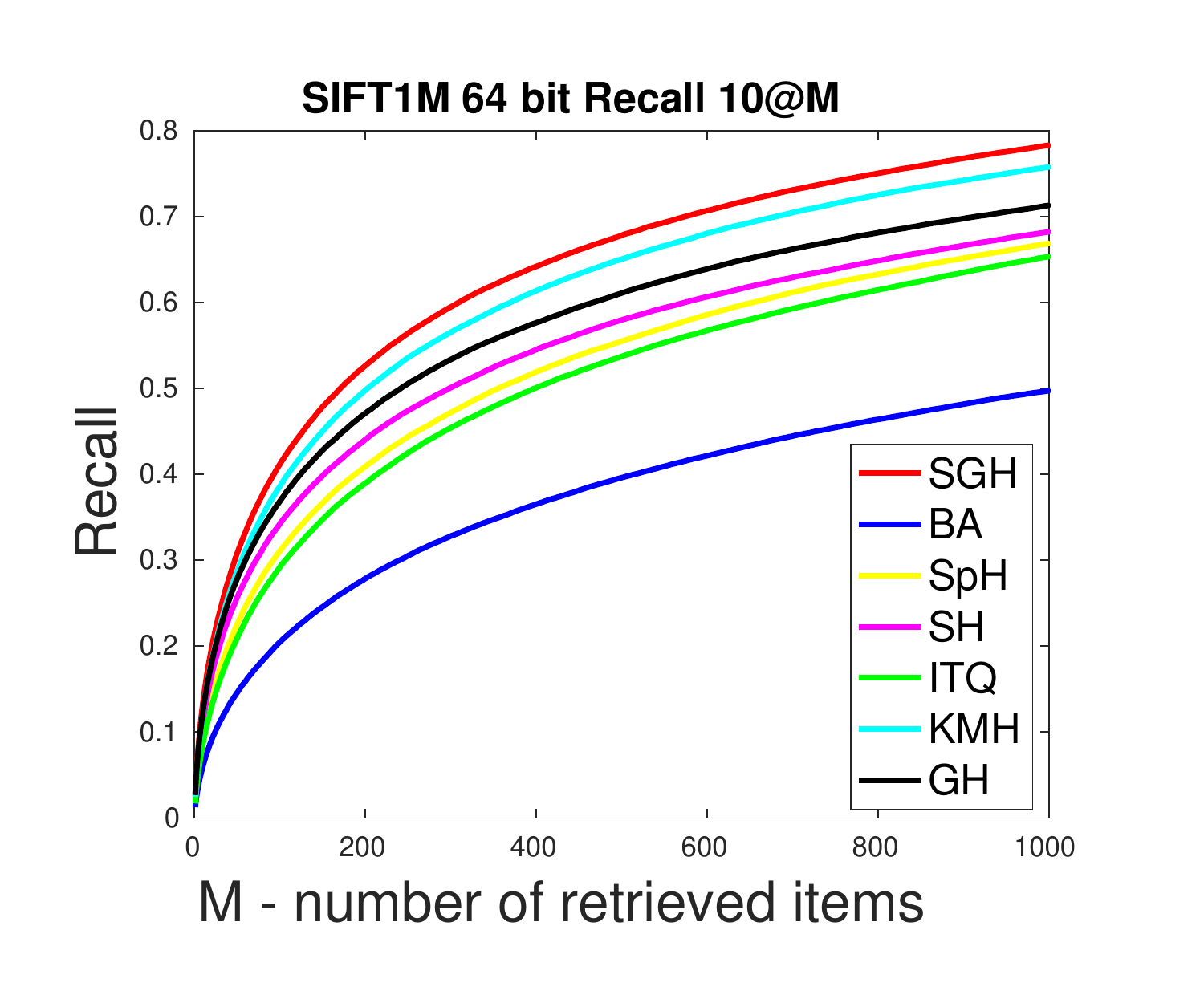} & \hspace{-5mm}
    \includegraphics[width=0.245\columnwidth, trim={0.8cm 1cm 1.6cm 1cm},clip]{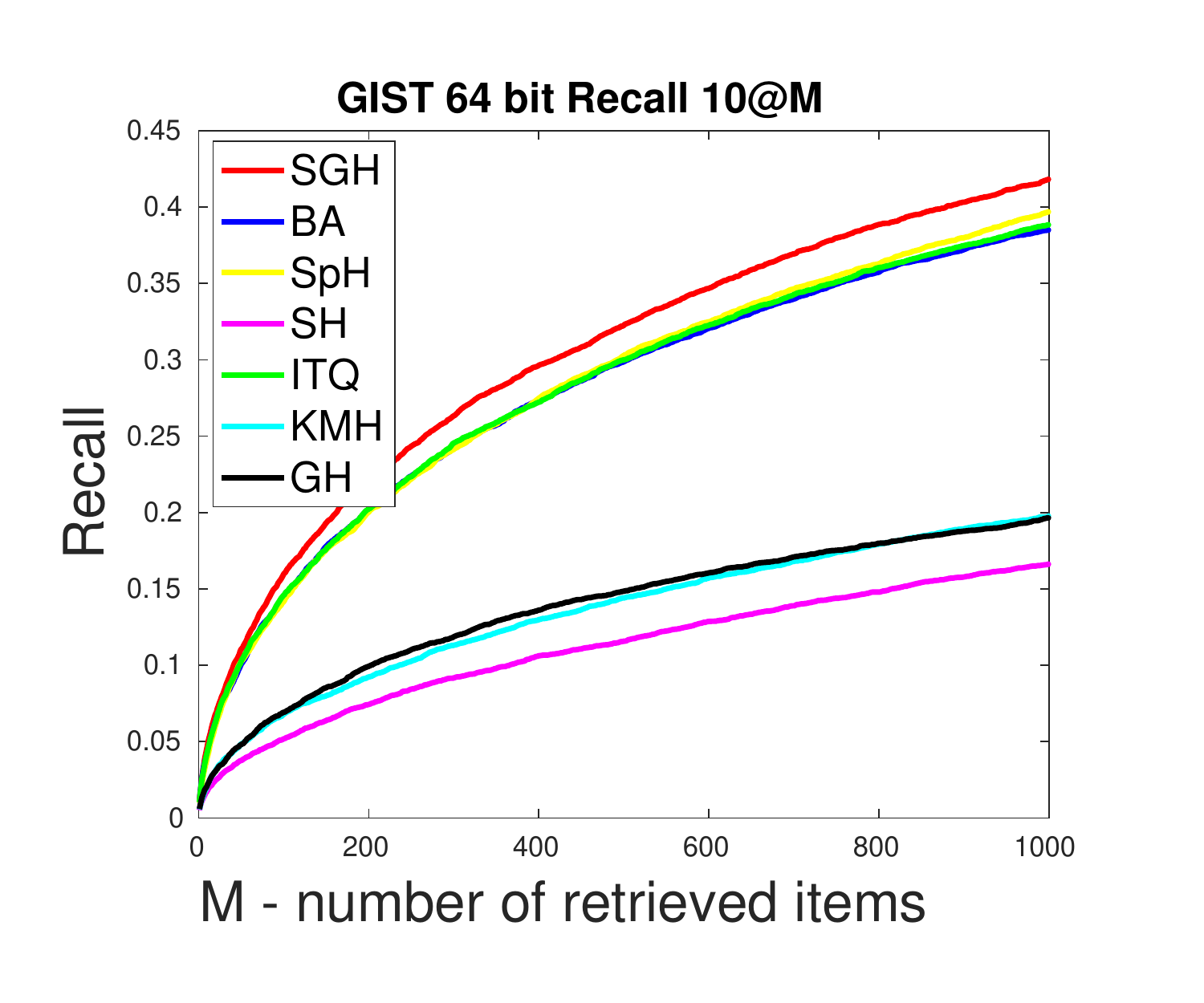} & \hspace{-5mm}
    \includegraphics[width=0.25\columnwidth, height=0.205\columnwidth]{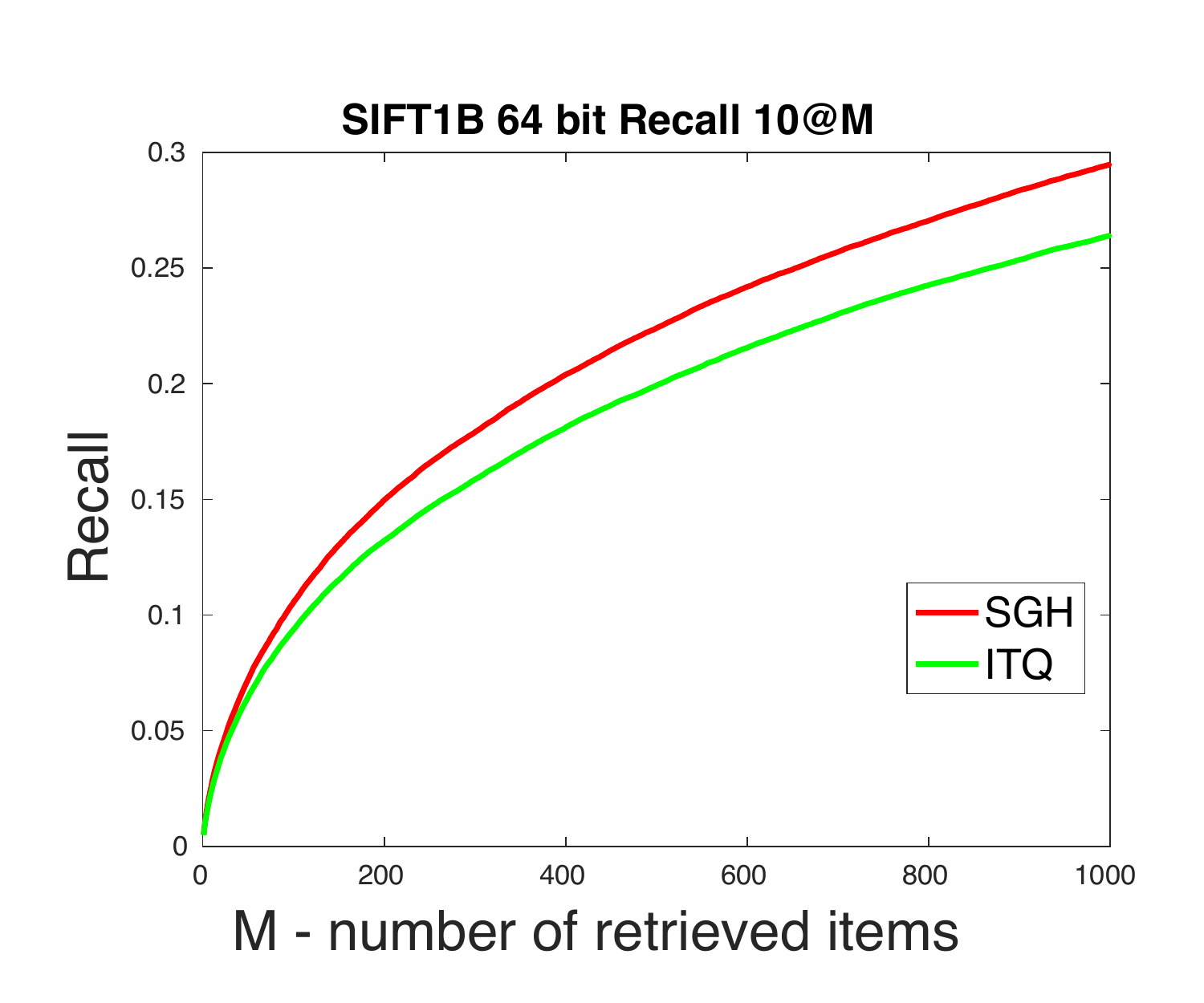} \\
  \end{tabular}
  \vspace{-2mm}
  \caption{L2NNS comparison on \texttt{MNIST}, \texttt{SIFT-1M}, and \texttt{GIST-1M} and \texttt{SIFT-1B} with the length of binary codes varying from $16$ to $64$ bits. We evaluate the performance with Recall 10@$M$ (fraction of top 10 ground truth neighbors in retrieved M), where $M$ increases up to $1000$. }
  \label{fig:recall}
\end{center}
\end{figure*}

Figure~\ref{fig:recall} shows that the proposed SGH consistently performs the best across all bit settings and all datasets. The searching time is the same for the same number of bits, because all algorithms use the same optimized implementation of POPCNT based Hamming distance computation and priority queue. 
We point out that many of the baselines need significant parameter tuning for each experiment to achieve a reasonable recall, except for ITQ and our method, where we fix hyperparameters for all our experiments and used a batch size of $500$ and learning rate of $0.01$ with stepsize decay. Our method is less sensitive to hyperparameters.

\subsection{Visualization of reconstruction}\label{sec:visualization}

\label{sec:visualization}
\begin{figure*}[th]
\centering
\vspace{-1mm}
\subfigure[Templates and re-generated images on \texttt{MNIST}]{
   \includegraphics[width=0.14\textwidth]{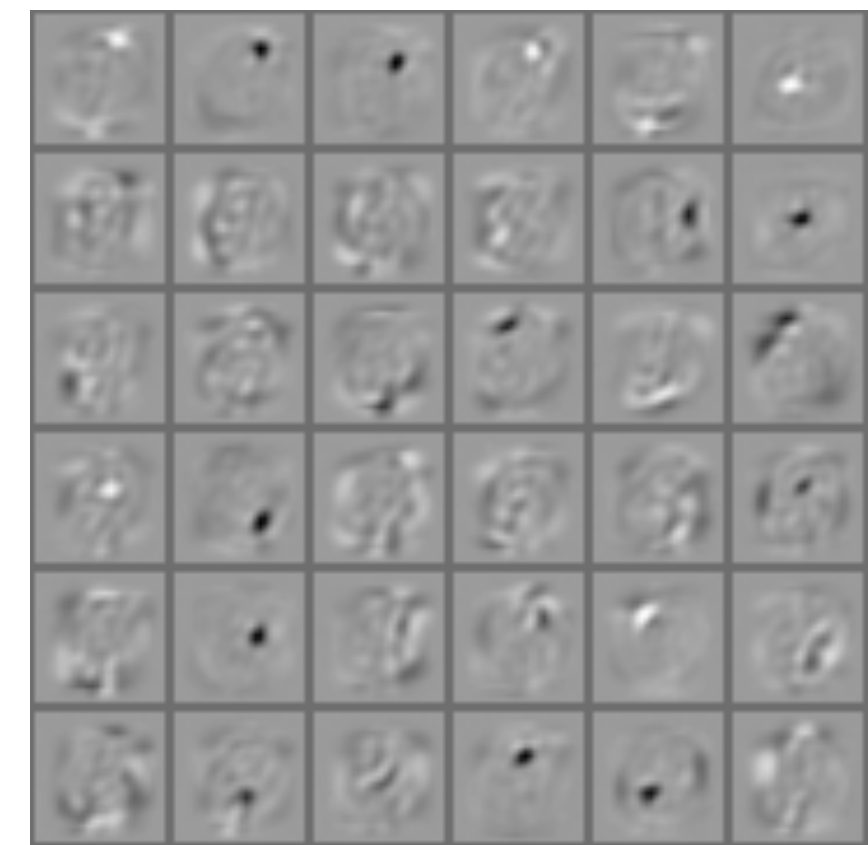}
   \includegraphics[width=0.84\textwidth]{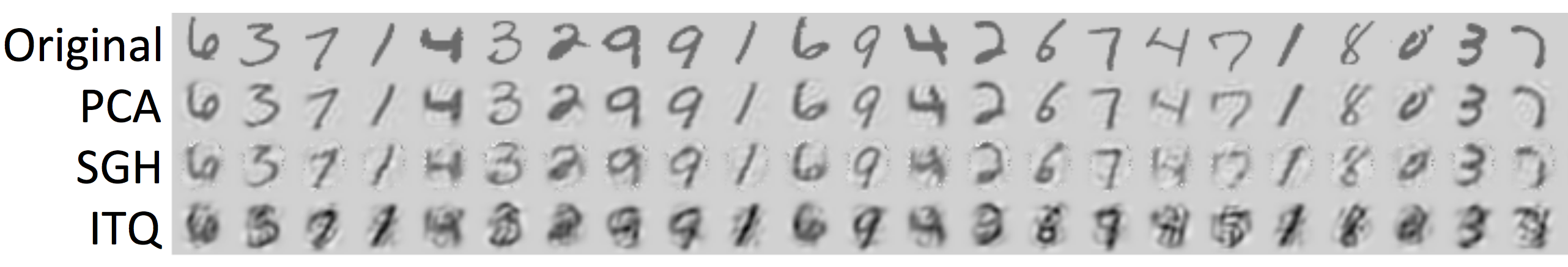}
}
\subfigure[Templates and re-generated images on \texttt{CIFAR-10}]{   
   \includegraphics[width=0.14\textwidth]{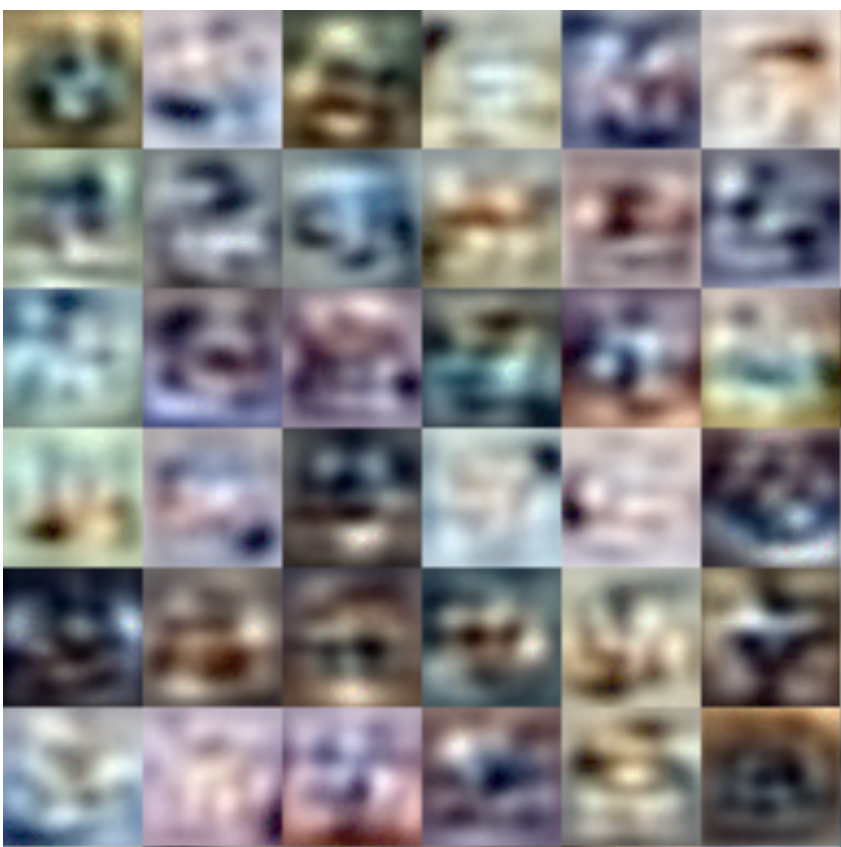}
   \includegraphics[width=0.84\textwidth]{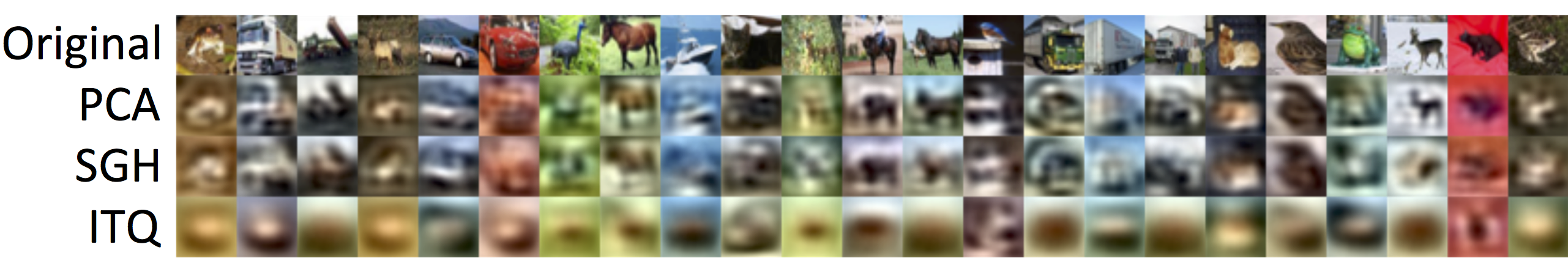}
}
\caption{Illustration of \texttt{MNIST} and \texttt{CIFAR-10} templates (left) and regenerated images (right) from different methods with $64$ hidden binary variables. In \texttt{MNIST}, the four rows and their number of bits used to encode them are, from the top: (1) original image, $28\times 28 \times 8 = 6272$ bits; (2) PCA with 64 components $64\times 32 = 2048$ bits; (3) SGH, 64 bits; (4) ITQ, 64 bits. In \texttt{CIFAR} : (1) original image, $30\times 30 \times 24 = 21600$ bits; (2) PCA with 64 components $64\times 32 = 2048$ bits; (3) SGH, 64 bits; (4) ITQ, 64 bits. The SGH reconstruction tends to be much better than that of ITQ, and is on par with PCA which uses 32 times more bits!}
\label{fig:regenerate}
\end{figure*}

One important aspect of utilizing a generative model for a hash function is that one can generate the input from its hash code. When the inputs are images, this corresponds to image generation, which allows us to visually inspect what the hash bits encode, as well as the differences in the original and generated images. 

In our experiments on \texttt{MNIST} and \texttt{CIFAR-10}, we first visualize the ``template'' which corresponds to each hash bit, \ie, each column of the decoding dictionary $U$. This gives an interesting insight into what each hash bit represents. Unlike PCA components, where the top few look like averaged images and the rest are high frequency noise, each of our image template encodes distinct information and looks much like filter banks of convolution neural networks. Empirically, each template also looks quite different and encodes somewhat meaningful information, indicating that no bits are wasted or duplicated. Note that we obtain this representation as a by-product, without explicitly setting up the model with supervised information, similar to the case in convolution neural nets.

We also compare the reconstruction ability of SGH with the that of ITQ and real valued PCA in Figure~\ref{fig:regenerate}. For ITQ and SGH, we use a $64$-bit hash code. For PCA, we kept 64 components, which amounts to $64\times 32=2048$ bits. Visually comparing with SGH, ITQ reconstructed images look much less recognizable on MNIST and much more blurry on CIFAR-10. Compared to PCA, SGH achieves similar visual quality while using a significantly lower ($32\times$ less) number of bits!

\vspace{-2mm}
\section{Conclusion}\label{sec:conclusion}

In this paper, we have proposed a novel \emph{generative} approach to learn binary hash functions. We have justified from a theoretical angle that the proposed algorithm is able to provide a good hash function that preserves Euclidean neighborhoods, while achieving fast learning and retrieval. Extensive experimental results justify the flexibility of our model, especially in reconstructing the input from the hash codes. Comparisons with approximate nearest neighbor search over several benchmarks demonstrate the advantage of the proposed algorithm empirically. We emphasize that the proposed generative hashing is a general framework which can be extended to semi-supervised settings and other learning to hash scenarios as detailed in the supplementary material. Moreover, the proposed distributional SGD with the unbiased gradient estimator and its approximator can be applied to general integer programming problems, which may be of independent interest.

\section*{Acknowledgements} 
LS is supported in part by NSF IIS-1218749, NIH BIGDATA 1R01GM108341, NSF CAREER IIS-1350983, NSF IIS-1639792 EAGER, ONR N00014-15-1-2340, NVIDIA, Intel and Amazon AWS.


\bibliographystyle{plainnat}
{
}

\clearpage
\newpage

\appendix
\onecolumn

\begin{appendix}

\begin{center}
{\Large \bf Supplementary Material}
\end{center}

\section{Distributional Derivative of Stochastic Neuron}\label{appendix:dist_derivative}

Before we prove the lemma~\ref{lemma:dist_derivative}, we first introduce the chain rule of distributional derivative.
\begin{lemma}\cite{Grubb08}\label{thm:chain_rule} Let $u\in \Dcal'(\Omega)$, we have
\begin{enumerate}
\item {\bf (Chain Rule I)} The distribution derivative of $v = u\circ f$ for any $f(x)\in \Ccal^1: \Omega'\rightarrow \Omega$ is given by $Dv = Du\frac{\partial f}{\partial x}$.
\item {\bf (Chain Rule II)} The distribution derivative of $v = f\circ u$ for any $f(x)\in \Ccal^1(\RR)$ with $f'$ bounded is given by $Dv = f'(u)Du$.
\end{enumerate}
\end{lemma}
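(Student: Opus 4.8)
The plan is to prove both identities in two stages: first verify them in the classical setting where $u$ is a smooth (hence ordinarily differentiable) function, in which case each is just the elementary multivariate chain rule, and then promote the identities to an arbitrary distribution $u\in\Dcal'(\Omega)$ by a mollification argument. Concretely, fix a standard mollifier $\eta_\varepsilon$ and set $u_\varepsilon := u * \eta_\varepsilon\in\Ccal^\infty$; since convolution commutes with distributional differentiation, $u_\varepsilon\to u$ and $Du_\varepsilon = (Du)*\eta_\varepsilon \to Du$ in the weak-$*$ topology of $\Dcal'(\Omega)$. The identity for general $u$ then follows by testing both sides against an arbitrary $\phi\in\Ccal_0^\infty(\Omega)$ and passing to the limit $\varepsilon\to 0$, provided the operations appearing on each side are continuous in this topology.

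For Chain Rule I, I would work directly from the defining integration-by-parts relation of Definition~\ref{def:dist_grad}, namely $\int_\Omega Dv\,\phi\,dx=-\int_\Omega v\,\partial\phi\,dx$. For the smooth approximant $u_\varepsilon$ the ordinary chain rule gives $\partial_x\rbr{u_\varepsilon\circ f}=\rbr{\partial u_\varepsilon\circ f}\,\frac{\partial f}{\partial x}$, i.e. $D\rbr{u_\varepsilon\circ f}=\rbr{Du_\varepsilon\circ f}\,\frac{\partial f}{\partial x}$. Because $f\in\Ccal^1$ is fixed, both the pullback (composition with $f$) and multiplication by the continuous Jacobian $\frac{\partial f}{\partial x}$ are continuous maps on $\Dcal'(\Omega)$, so letting $\varepsilon\to0$ yields $Dv=\rbr{Du\circ f}\,\frac{\partial f}{\partial x}$. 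This is precisely the mechanism that produces the Dirac term in the application: with $u=\tilde h(\cdot,\xi)$ one has $Du=\delta_\xi$, and composing with the inner map $\sigma(W^\top x)$ reproduces the factor $\sigma(W^\top x)\bullet(1-\sigma(W^\top x))x^\top$ in Lemma~\ref{lemma:dist_derivative}.

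For Chain Rule II the same scheme applies: for the smooth approximant, $\partial_x\rbr{f\circ u_\varepsilon}=f'(u_\varepsilon)\,\partial_x u_\varepsilon$, so that $D\rbr{f\circ u_\varepsilon}=f'(u_\varepsilon)\,Du_\varepsilon$. Here the hypothesis $f\in\Ccal^1(\RR)$ with $f'$ bounded is exactly what is needed to control the limit: boundedness of $f'$, together with continuity of $f$ and $f'$, ensures that $f(u_\varepsilon)\to f(u)$ and $f'(u_\varepsilon)\to f'(u)$ with enough uniformity for the product $f'(u_\varepsilon)\,Du_\varepsilon$ to converge to $f'(u)\,Du$ in $\Dcal'(\Omega)$, giving $Dv=f'(u)\,Du$.

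The main obstacle is not the formal calculation but the well-definedness and limit-passage of the two compositions. Pulling a distribution back by $f$ (Chain Rule I) is only a bona fide operation on $\Dcal'$ under a non-degeneracy condition on $f$ (e.g. $f$ a submersion, or $f'$ nonvanishing in one dimension), and composing $f$ with a distribution $u$ (Chain Rule II) requires $u$ to be regular enough---e.g. $u$ continuous or in a suitable Sobolev class---for $f(u)$ and $f'(u)$ to be meaningful pointwise. I would therefore isolate exactly these hypotheses, verify them in the piecewise-constant case $u=\tilde h(\cdot,\xi)$ relevant to the stochastic neuron (where the compositions are literal and $Du=\delta_\xi$), and for the general statement defer to the standard distribution-theory machinery of \cite{Grubb08}.
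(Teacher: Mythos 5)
This lemma is quoted in the paper directly from \cite{Grubb08} with no proof supplied, so there is no in-paper argument to match yours against; your mollification sketch is the natural textbook route one would take to establish it. Your plan is essentially sound: regularize $u_\varepsilon = u * \eta_\varepsilon$, apply the classical chain rule to the smooth approximant, and pass to the limit in $\Dcal'(\Omega)$. More importantly, you correctly diagnose that the lemma as printed is too generous: neither composition is well-defined for arbitrary $u \in \Dcal'(\Omega)$. The pullback $u \circ f$ in Chain Rule I requires a non-degeneracy condition on $f$ (a submersion, or nonvanishing derivative in one dimension), and $f \circ u$ in Chain Rule II requires $u$ to be a genuine function, with $Dv = f'(u)\,Du$ being the standard Sobolev chain rule for $u \in W^{1,1}_{\mathrm{loc}}$ and $f$ Lipschitz (which is what $f \in \Ccal^1(\RR)$ with $f'$ bounded gives you). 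Your decision to verify the identities directly in the piecewise-constant case $u = \htil(\cdot,\xi)$, where $Du = \delta_\xi$, is exactly what the downstream application needs: the paper's own appendix proof of Lemma~\ref{lemma:dist_derivative} computes the one-dimensional distributional derivative by hand via integration by parts and only then invokes Chain Rule I, so nothing beyond this low-order case is ever used.

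One wrinkle your limit passage glosses over: in Chain Rule I you assert that multiplication by the Jacobian $\frac{\partial f}{\partial x}$ is a continuous operation on $\Dcal'(\Omega)$, but for $f$ merely $\Ccal^1$ the Jacobian is only continuous, and the product of a distribution with a $\Ccal^0$ function is undefined unless the distribution has order zero. Hence the convergence $\rbr{Du_\varepsilon\circ f}\frac{\partial f}{\partial x} \to \rbr{Du\circ f}\frac{\partial f}{\partial x}$ is only meaningful when $Du\circ f$ is locally a measure --- which it is in the application, since $\delta_\xi$ pulls back along $\sigma(W^\top x)$ to a surface measure --- or when $f$ is assumed smoother. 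Folding this order-zero (or $f\in\Ccal^\infty$) restriction into the hypotheses you already propose to isolate would close the argument; as it stands, your sketch is correct in spirit, and notably more careful about hypotheses than the paper's bare citation, but the general-$u$ statement should be explicitly restricted rather than claimed in full.
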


\noindent{\bf Proof of Lemma~\ref{lemma:dist_derivative}.} Without loss of generality, we first consider $1$-dimension case. Given $\ell(\htil) : \RR \rightarrow \RR$, $\xi\sim\Ucal(0, 1)$, $\htil : \Omega\rightarrow \cbr{0, 1}$. For $\forall \phi\in \Ccal_{0}^\infty(\Omega)$, we have
\begin{eqnarray*}
&&\int \phi(x) \rbr{D\ell(\htil(x))} dx = -\int \phi'(x) \ell(x)dx\\
&=&-\rbr{\int_{-\infty}^0 \phi'(x) \ell(0)dx + \int_{0}^\infty \phi'(x) \ell(1)dx}\\
&=&-\rbr{\phi(x)\bigg|^{0}_{-\infty} \ell(0) + \phi(x)\bigg|^{\infty}_0 \ell(1)}\\
&=&\rbr{\ell(1) - \ell(0)}\phi(0)
\end{eqnarray*}
where the last equation comes from $\phi\in\Ccal^\infty_0(\Omega)$. We obtain 
$$
D\ell(\htil) = (\ell(1) - \ell(0))\delta(h) := \Delta\ell(h).
$$

We generalize the conclusion to $l$-dimension case with expectation over $\xi$, \ie, $\htil(\cdot, \xi):\Omega \rightarrow \cbr{0, 1}^l$, we have the partial distributional derivative for $k$-th coordinate as 
\begin{eqnarray*}
D_k\EE_{\cbr{\xi_i}_{i=1}^l}\sbr{\ell(\htil(z, \xi))} &=& \EE_{\cbr{\xi_i}_{i=1}^l}\sbr{D_k \ell(\htil(z, \xi))} = \EE_{\cbr{\xi_i}_{i=1, i\neq k}^l}\sbr{(\ell(\htil_k^{1}) - \ell(\htil_k^{0}))}.
\end{eqnarray*}
Therefore, we have the distributional derivative w.r.t. $W$ as 
\begin{eqnarray*}
D\EE_{\cbr{\xi_i}_{i=1}^l}\sbr{\ell(\htil(\sigma(W^\top x), \xi))} &=& \EE_{\cbr{\xi_i}_{i=1}^l}\sbr{D_k \ell(\htil(\sigma(W^\top x), \xi))}\\
\text{chain rule I} 
&=& \EE_{\cbr{\xi_i}_{i=1}^l}\sbr{D_{\htil_k} \ell(\htil(\sigma(W^\top x), \xi))\nabla_W\sigma(W^\top x)}\\
&=& \EE_\xi\sbr{\Delta_{\htil} \ell(\htil(\sigma(W^\top x), \xi))\sigma(W^\top x)\bullet\rbr{1 - \sigma(W^\top x)}x^\top}.
\end{eqnarray*}
\QED

To derive the approximation of the distributional derivative, we exploit the mean value theorem and Taylor expansion. Specifically, for a continuous and differential loss function $\ell(\cdot)$, there exists $\epsilon\in (0, 1)$
\begin{equation*}
\partial_{\htil_k}\ell(\htil)|_{\htil_k = \epsilon} = \sbr{\Delta_{\htil}\ell(\htil) }_k.
\end{equation*}
Moreover, for general smooth functions, we rewrite the $\partial_{\htil_i}\ell(\htil)|_{\htil_i = \epsilon}$ by Taylor expansion, \ie,
\begin{eqnarray*}
\partial_{\htil_k}\ell(\htil)|_{\htil_i = \epsilon} = \partial_{\htil_k}\ell(\htil)|_{\htil_i = 1} + \Ocal(\epsilon)\\
\partial_{\htil_k}\ell(\htil)|_{\htil_i = \epsilon} = \partial_{\htil_k}\ell(\htil)|_{\htil_i = 0} + \Ocal(\epsilon).
\end{eqnarray*} 
we have an approximator as 
\begin{equation}
\partial_{\htil_k}\ell(\htil)|_{\htil_k = \epsilon} \approx \sigma(w_k^\top x)\partial_{\htil_k}\ell(\htil)|_{\htil_k = 1} + (1 - \sigma(w_k^\top x))\partial_{\htil_k}\ell(\htil)|_{\htil_k = 0} = \EE_{\xi}\sbr{\nabla_{\htil}\ell(\htil, \xi)}.
\end{equation}
Plugging into the distributional derivative estimator~\eq{eq:new_grad_I}, we obtain a simple biased gradient estimator,
\begin{equation}
D_{W}\Htil(\Theta;x) \approx \Dtil_{W}\Htil(\Theta;x) := \EE_{\xi}\sbr{\nabla_{\htil} \ell(\htil(\sigma(W^\top x), \xi))\sigma(W^\top x)\bullet(1-\sigma(W^\top x)) x^\top}.
\end{equation}

\section{Convergence of Distributional SGD}\label{appendix:convergence_dist_sgd}

\begin{lemma}\cite{GhaLan13}\label{lemma:general_unbiased_convergence}
Under the assumption that $H$ is $L$-Lipschitz smooth and the variance of the stochastic distributional gradient~\eq{eq:unbiased_full_grad} is bounded by $\sigma^2$, the proposed distributional SGD outputs $\cbr{\Theta_i}_{i=1}^t$,
{\small
$$
\sum_{i=1}^t\rbr{\gamma_i - \frac{L}{2}\gamma_i^2}\EE\sbr{\Big\|\nabla_{\Theta} \Htil(\Theta_i)\Big\|^2}\le \Htil(\Theta_0) - \Htil(\Theta^*) + \frac{L\sigma^2}{2}\sum_{i=1}^t\gamma_i^2,
$$ 
}
where $\Theta_t = \{W_t, U_t, \beta_t, \rho_t\}$. 
\end{lemma}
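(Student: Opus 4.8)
The plan is to reproduce the standard one-step descent recursion for nonconvex stochastic gradient descent and then telescope it, in the spirit of Ghadimi and Lan. The single ingredient that makes this textbook argument applicable to our setting is Proposition~\ref{thm:derivative_relationship}: because the distributional derivative $D_W\Htil(\Theta;x)$ coincides with the ordinary gradient $\nabla_W H(\Theta;x)$, and $\Htil\equiv H$ by construction, the estimator $\widehat\nabla_\Theta\Htil(\Theta_i;x_i)$ of \eq{eq:unbiased_full_grad} is an \emph{unbiased} estimator of the true gradient, $\EE_{x_i,\xi_i}\sbr{\widehat\nabla_\Theta\Htil(\Theta_i;x_i)}=\nabla_\Theta\Htil(\Theta_i)$. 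First I would record this unbiasedness explicitly, where the expectation is taken over both the uniform draw of $x_i$ and the auxiliary noise $\xi_i$; the $U,\beta,\rho$ blocks are ordinary gradients, and the $W$ block follows from the distributional-derivative equivalence together with an interchange of $\EE_\xi$ and differentiation.

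Next I would invoke the $L$-Lipschitz smoothness of $H=\Htil$ and apply the descent lemma along the update $\Theta_{i+1}=\Theta_i-\gamma_i\widehat\nabla_\Theta\Htil(\Theta_i;x_i)$, obtaining $\Htil(\Theta_{i+1})\le \Htil(\Theta_i)-\gamma_i\langle \nabla\Htil(\Theta_i),\widehat\nabla_\Theta\Htil(\Theta_i;x_i)\rangle+\frac{L\gamma_i^2}{2}\|\widehat\nabla_\Theta\Htil(\Theta_i;x_i)\|^2$. Taking the conditional expectation given $\Theta_i$ and using unbiasedness collapses the cross term to $\gamma_i\|\nabla\Htil(\Theta_i)\|^2$, while the bias--variance decomposition $\EE\|\widehat\nabla\|^2=\|\nabla\Htil(\Theta_i)\|^2+\EE\|\widehat\nabla-\nabla\Htil(\Theta_i)\|^2$ combined with the variance bound $\sigma^2$ controls the quadratic term. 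Rearranging gives the per-step inequality $\rbr{\gamma_i-\frac{L}{2}\gamma_i^2}\EE\|\nabla\Htil(\Theta_i)\|^2\le \EE\Htil(\Theta_i)-\EE\Htil(\Theta_{i+1})+\frac{L\sigma^2}{2}\gamma_i^2$.

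Finally I would sum over $i=1,\dots,t$ so that the consecutive function-value terms telescope down to the initial value $\Htil(\Theta_0)$ minus the terminal one, and bound the terminal term by optimality, $\Htil(\Theta^*)\le\Htil(\Theta_{t+1})$, so that $-\Htil(\Theta_{t+1})\le-\Htil(\Theta^*)$; this produces exactly the claimed right-hand side $\Htil(\Theta_0)-\Htil(\Theta^*)+\frac{L\sigma^2}{2}\sum_{i=1}^t\gamma_i^2$. I expect the only genuinely non-routine step to be the verification of unbiasedness of the $W$ block: everything downstream is the standard Ghadimi--Lan recursion, whereas unbiasedness rests on Proposition~\ref{thm:derivative_relationship} and on justifying that $\EE_\xi$ commutes with the distributional derivative under the mild continuity and integrability conditions assumed there. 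A secondary bookkeeping point is the normalization between the summed objective $H=\sum_x L(x;\Theta)$ and the single-sample estimator, which merely rescales the gradient and is absorbed into the constants.
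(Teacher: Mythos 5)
Your proof is correct, but note that the paper itself offers no internal proof of this lemma: it is imported wholesale from \cite{GhaLan13} (it is Theorem~2.1 there, invoked again in the proof of Theorem~\ref{thm:convergence}), so what you have written is a reconstruction of the cited result rather than an alternative to an argument in the paper. Your reconstruction is exactly the standard Ghadimi--Lan recursion --- descent lemma along the update, conditional expectation plus unbiasedness to collapse the cross term, the decomposition $\EE\nbr{\widehat\nabla}^2=\nbr{\nabla\Htil(\Theta_i)}^2+\EE\nbr{\widehat\nabla-\nabla\Htil(\Theta_i)}^2\le\nbr{\nabla\Htil(\Theta_i)}^2+\sigma^2$, then telescoping and bounding the terminal value by $\Htil(\Theta^*)$ --- and you correctly isolate the one step that is not off-the-shelf in this setting: unbiasedness of the $W$-block of~\eq{eq:unbiased_full_grad}, which rests on Proposition~\ref{thm:derivative_relationship} (the paper likewise quarantines all distributional-derivative content there and in Lemma~\ref{lemma:dist_derivative}, which is what licenses citing the smooth-SGD result verbatim). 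Two cosmetic points, neither a gap: telescoping the per-step inequality over $i=1,\dots,t$ literally yields $\EE\Htil(\Theta_1)-\EE\Htil(\Theta_{t+1})$ on the right, and the stated $\Htil(\Theta_0)$ reflects the same harmless off-by-one present in Algorithm~\ref{alg:distributional_sgd}; and the coefficient $\gamma_i-\frac{L}{2}\gamma_i^2$ is informative only when $\gamma_i<2/L$, which is implicitly assumed (it is what makes the sampling weights $P(R=i)\propto 2\gamma_i-L\gamma_i^2$ in Theorem~\ref{thm:convergence} a valid distribution), though the inequality you derive holds regardless. Your closing remark on the normalization between $H(\Theta)=\sum_x L(x;\Theta)$ and the single-sample estimator is also right and matches the paper's (silent) convention of absorbing the factor $N$ into the step sizes and $\sigma^2$.
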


\noindent{\bf Proof of Theorem~\ref{thm:convergence}.} Lemma~\ref{lemma:general_unbiased_convergence} implies that by randomly sampling a search point $\Theta_R$ with probability $P(R = i) = \frac{2\gamma_i - L\gamma_i^2}{\sum_{i=1}^t 2\gamma_i - L\gamma_i^2}$ where $\gamma_i\sim\Ocal\rbr{1/\sqrt{t}}$ from trajectory $\cbr{\Theta_i}_{i=1}^t$, we have 
$$
\EE\sbr{\nbr{\nabla_\Theta \Htil(\Theta_R)}^2}\sim \Ocal\rbr{\frac{1}{\sqrt{t}}}.
$$
\QED

\begin{lemma}\label{lemma:general_biased_convergence}
Under the assumption that the variance of the approximate stochastic distributional gradient~\eq{eq:biased_full_grad} is bounded by $\sigma^2$, the proposed distributional SGD outputs $\cbr{\Theta_i}_{i=1}^t$ such that
$$
\sum_{i=1}^t\gamma_{i} \EE\sbr{\rbr{\Theta_i - \Theta^*}^\top \tilde\nabla_\Theta\Htil(\Theta_i)} \le \frac{1}{2}\rbr{\EE\sbr{\nbr{\Theta_0 - \Theta^*}^2} + \sum_{i=1}^t\gamma_{i}^2\sigma^2},
$$
where $\Theta^*$ denotes the optimal solution. 
\end{lemma}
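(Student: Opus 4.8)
The plan is to run the standard ``one-step contraction in distance to $\Theta^*$'' argument, but phrased \emph{entirely} in terms of the approximate direction $\tilde\nabla_\Theta\Htil$ rather than the true gradient, so that no Lipschitz-smoothness or descent property of $\Htil$ along $\tilde\nabla$ is required. First I would write the update from Algorithm~\ref{alg:distributional_sgd}, namely $\Theta_{i+1} = \Theta_i - \gamma_i \widehat{\tilde\nabla}_\Theta\Htil(\Theta_i; x_i)$, and expand the squared distance to the optimum:
$$
\nbr{\Theta_{i+1} - \Theta^*}^2 = \nbr{\Theta_i - \Theta^*}^2 - 2\gamma_i \rbr{\Theta_i - \Theta^*}^\top \widehat{\tilde\nabla}_\Theta\Htil(\Theta_i; x_i) + \gamma_i^2 \nbr{\widehat{\tilde\nabla}_\Theta\Htil(\Theta_i; x_i)}^2.
$$

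Second, conditioning on the history up to step $i$ and taking expectation over the fresh samples $x_i,\xi_i$, I would use that the stochastic estimator $\widehat{\tilde\nabla}_\Theta\Htil(\Theta_i; x_i)$ is conditionally unbiased for $\tilde\nabla_\Theta\Htil(\Theta_i)$, so the cross term becomes $-2\gamma_i \rbr{\Theta_i - \Theta^*}^\top \tilde\nabla_\Theta\Htil(\Theta_i)$, and invoke the assumed second-moment bound $\EE\sbr{\nbr{\widehat{\tilde\nabla}_\Theta\Htil(\Theta_i; x_i)}^2}\le \sigma^2$ to control the last term by $\gamma_i^2\sigma^2$. Third, I would rearrange to isolate the inner-product term, take total expectation, and sum over the iterations; the distance terms telescope, leaving $\EE\sbr{\nbr{\Theta_0 - \Theta^*}^2}$ minus the terminal distance, and dropping the nonnegative terminal term and dividing by two yields exactly the claimed bound
$$
\sum_{i=1}^t\gamma_{i} \EE\sbr{\rbr{\Theta_i - \Theta^*}^\top \tilde\nabla_\Theta\Htil(\Theta_i)} \le \frac{1}{2}\rbr{\EE\sbr{\nbr{\Theta_0 - \Theta^*}^2} + \sum_{i=1}^t\gamma_{i}^2\sigma^2}.
$$

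The main subtlety --- and the reason this is stated separately from Lemma~\ref{lemma:general_unbiased_convergence} --- is that because $\tilde\nabla$ is only an approximation of the true gradient $\nabla H$, I cannot relate $\rbr{\Theta_i - \Theta^*}^\top \tilde\nabla_\Theta\Htil(\Theta_i)$ to a decrease in $\Htil$ through smoothness, as the unbiased case does. The analysis must therefore measure progress purely through the Euclidean distance to $\Theta^*$ and can only certify a stationarity-type quantity along the direction actually followed by the algorithm, which is precisely why the conclusion is phrased as a bound on the weighted inner products rather than on gradient norms. The one genuine thing to check is the conditional unbiasedness of $\widehat{\tilde\nabla}_\Theta\Htil(\Theta_i; x_i)$ for $\tilde\nabla_\Theta\Htil(\Theta_i)$, so that the expected cross term is exactly the quantity appearing in the statement. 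Finally, combining this lemma with the sampling rule $P(R=i)\propto\gamma_i$ and the step sizes $\gamma_i\sim\Ocal\rbr{1/\sqrt{t}}$ from Theorem~\ref{thm:biased_convergence}, dividing the bound by $\sum_{i=1}^t\gamma_i$ immediately gives the $\Ocal\rbr{1/\sqrt{t}}$ first-order convergence rate.
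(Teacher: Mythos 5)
Your proposal is correct and takes essentially the same route as the paper's proof: expand $\nbr{\Theta_{i+1}-\Theta^*}^2$ under the update rule, take expectations using the conditional unbiasedness of $\widehat{\tilde\nabla}_\Theta\Htil(\Theta_i;x_i)$ for $\tilde\nabla_\Theta\Htil(\Theta_i)$ together with the second-moment bound $\sigma^2$, then telescope and drop the nonnegative terminal distance. Your closing remarks on why the conclusion is phrased via weighted inner products rather than a function-value or gradient-norm decrease are accurate commentary but do not alter the argument, which matches the paper's step for step.
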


\begin{proof}
Denote the optimal solution as $\Theta^*$, we have
\begin{eqnarray*}
\nbr{\Theta_{i+1} -\Theta^*}^2 &=& \nbr{\Theta_{i} - \gamma_{i}\widehat{\tilde\nabla}_\Theta\Htil(\Theta_{i}, x_{i}) - \Theta^*)}^2\\
&=& \nbr{\Theta_{i} - \Theta^*}^2 + \gamma_{i}^2 \nbr{\widehat{\tilde\nabla}_\Theta\Htil(\Theta_{i}, x_{i})}^2 - 2\gamma_{i}\rbr{\Theta_{i} - \Theta^*}^\top \widehat{\tilde\nabla}_\Theta\Htil(\Theta_{i}, x_{i}).
\end{eqnarray*}
Taking expectation on both sides and denoting $a_{j} = \nbr{\Theta_{j} -\Theta^*}^2 $, we have
\begin{eqnarray*}
\EE\sbr{a_{i+1}}\le\EE\sbr{a_{i}}  - 2\gamma_{i}\EE\sbr{\rbr{\Theta_{i} - \Theta^*}^\top \tilde\nabla_\Theta\Htil(\Theta_{i})} + \gamma_{i}^2\sigma^2.
\end{eqnarray*}
Therefore, 
\begin{eqnarray*}
\sum_{i=1}^t\gamma_{i} \EE\sbr{\rbr{\Theta_{i} - \Theta^*}^\top \tilde\nabla_\Theta\Htil(\Theta_{i})} \le \frac{1}{2}\rbr{\EE\sbr{a_0} + \sum_{i=1}^t\gamma_{i}^2\sigma^2}.
\end{eqnarray*}
\end{proof}

\noindent{\bf Proof of Theorem~\ref{thm:biased_convergence}.}
The lemma~\ref{lemma:general_biased_convergence} implies by randomly sampling a search point $\Theta_R$ with probability $P(R=i) = \frac{\gamma_i}{\sum_{i=1}^t\gamma_i}$ where $\gamma_i\sim\Ocal\rbr{1/\sqrt{t}}$ from trajectory $\cbr{\Theta_i}_{i=1}^t$, we have 
$$
\EE\sbr{\rbr{\Theta_R - \Theta^*}^\top \tilde\nabla_\Theta\Htil(\Theta_R)} \le \frac{\EE\sbr{\nbr{\Theta_0 - \Theta^*}^2}+ \sum_{i=1}^t\gamma_{i}^2\sigma^2}{2 \sum_{i=1}^t \gamma_i}\sim\Ocal\rbr{\frac{1}{\sqrt{t}}}.
$$
\QED

\section{More Experiments}\label{appendix:more_exp}

\subsection{Convergence of Distributional SGD and Reconstruction Error Comparison}

\begin{figure*}[th]
\begin{center}
  \begin{tabular}{cc}
    \includegraphics[width=0.325\columnwidth, trim={0.25cm 0cm 0.6cm 1cm},clip]{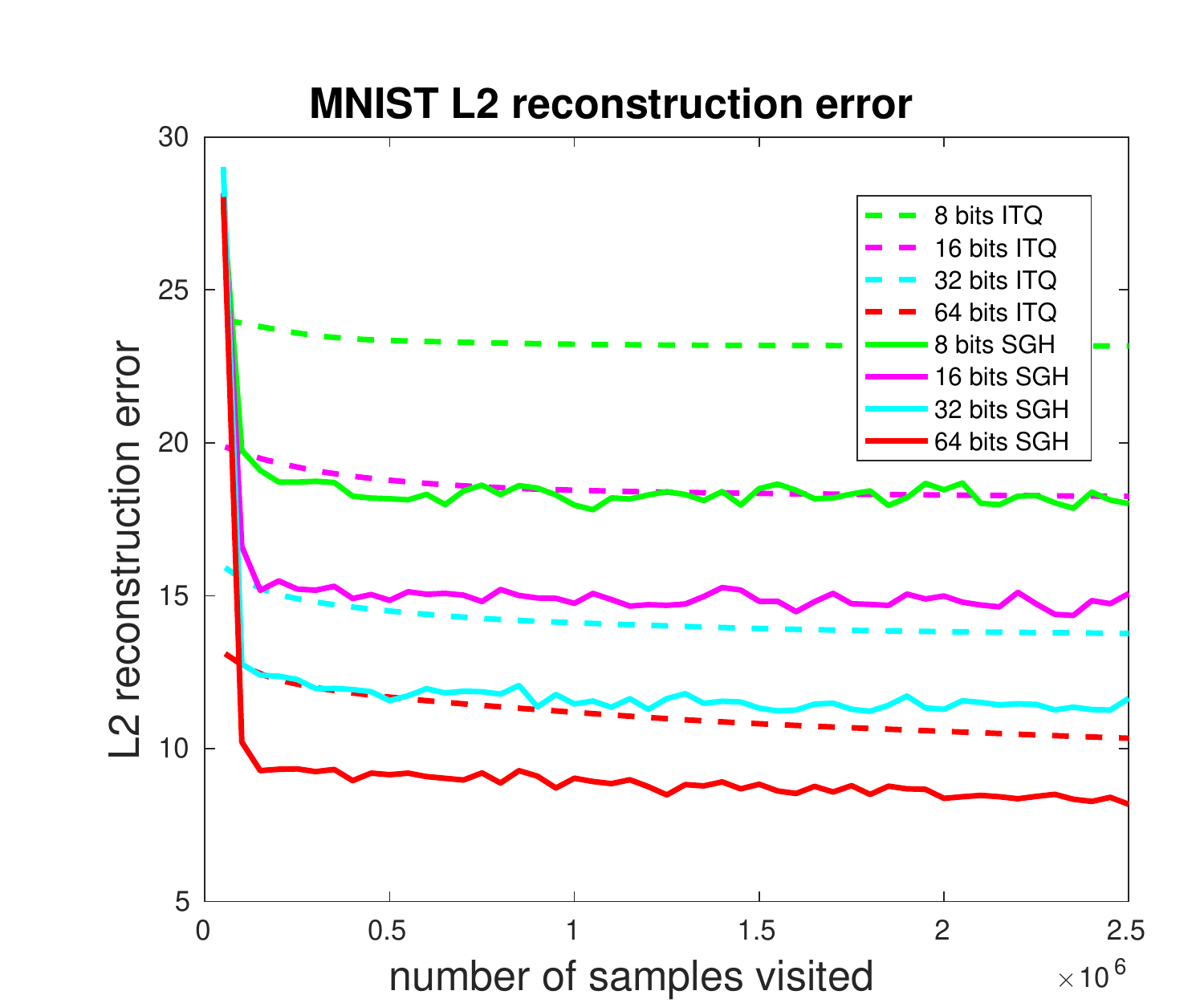}&
 	\hspace{-1mm}
    \includegraphics[width=0.325\columnwidth,  trim={0.25cm 0.8cm 1.5cm 1cm},clip]{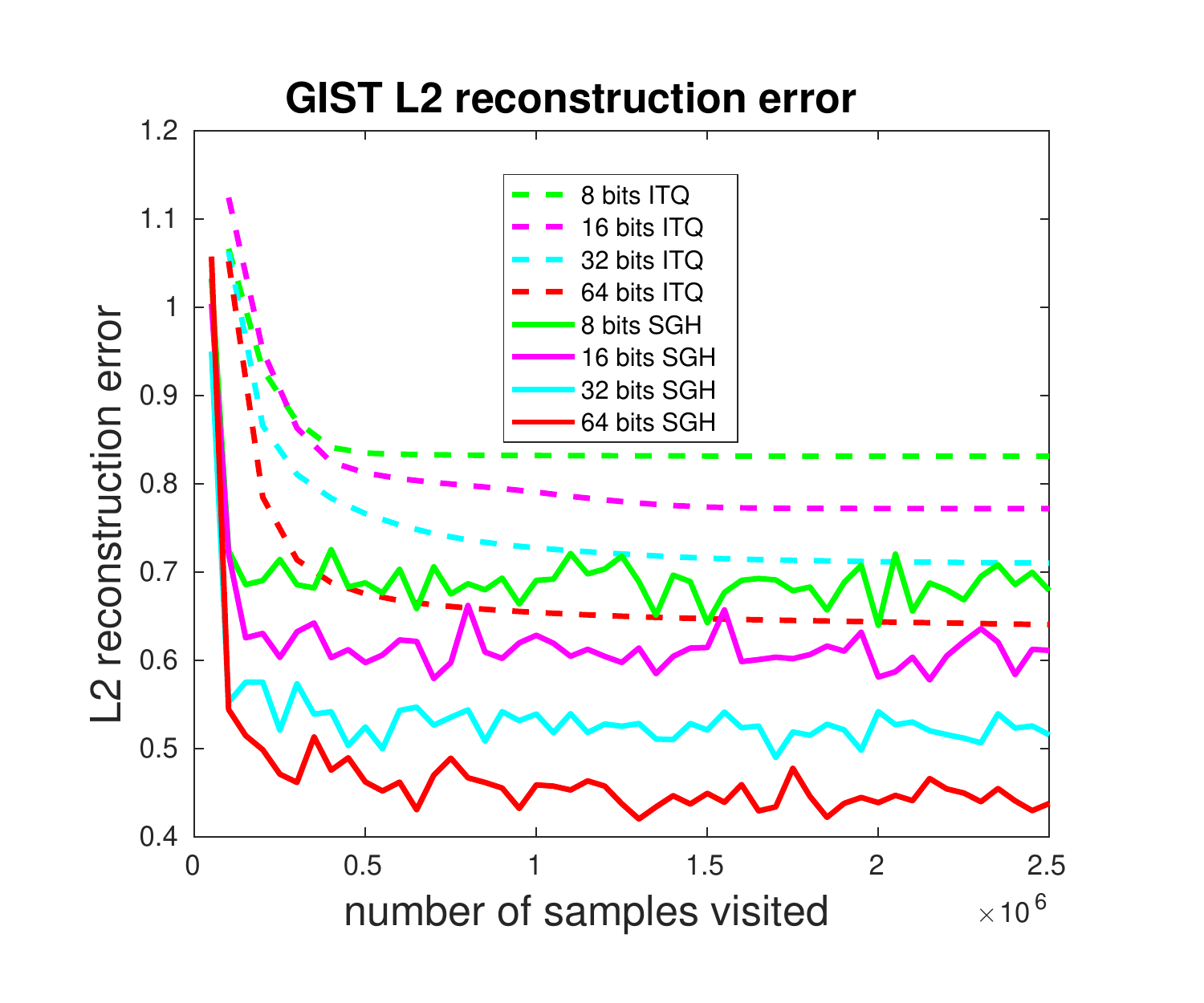}\\
     (a) \texttt{MNIST} &\hspace{-4mm} (b)  \texttt{GIST-1M}  \\
  \end{tabular}
  \caption{L2 reconstruction error convergence on \texttt{MNIST} and \texttt{GIST-1M} of ITQ and SGH over the course of training with varying of the length of the bits (8, 16, 32, 64, respectively). The x-axis represents the number of examples seen by the training algorithm. For ITQ, it sees the training dataset once in one iteration.}
  \label{fig:more_reconstruction}
\end{center}
\end{figure*}

We shows the reconstruction error comparison between ITQ and SGH on \texttt{MNIST} and \texttt{GIST-1M} in Figure~\ref{fig:more_reconstruction}. The results are similar to the performance on \texttt{SIFT-1M}. Because SGH optimizes a more expressive objective than ITQ (without orthogonality) and do not use alternating optimization, it find better solution with lower reconstruction error.

\subsection{Training Time Comparison}

\begin{figure*}[th]
\begin{center}
  \begin{tabular}{cc}
     \includegraphics[width=0.325\columnwidth, trim={0.25cm 0cm 0.6cm 1cm},clip]{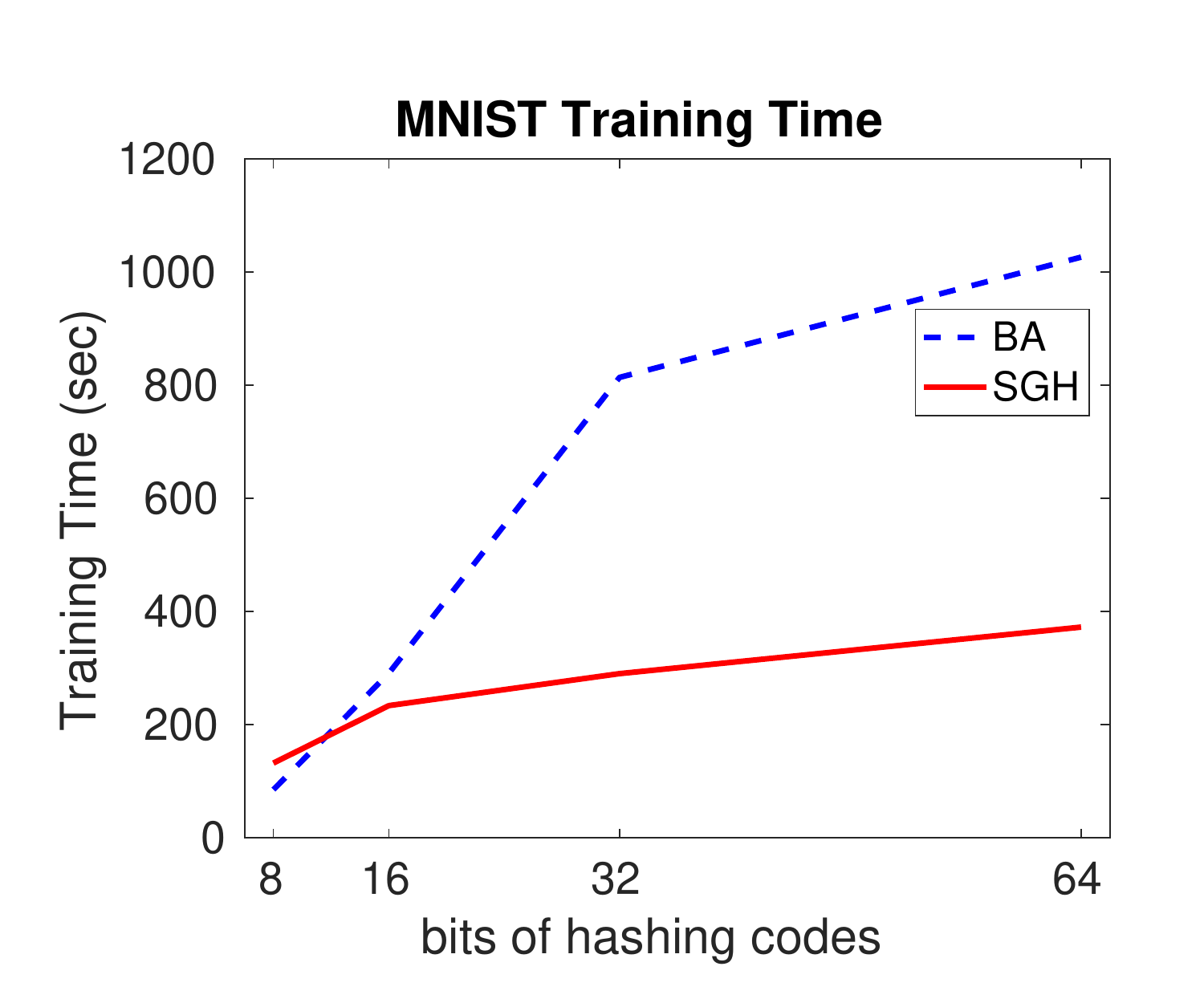}&
    \includegraphics[width=0.325\columnwidth, trim={0.25cm  5.6cm 5.5cm 3cm},clip]{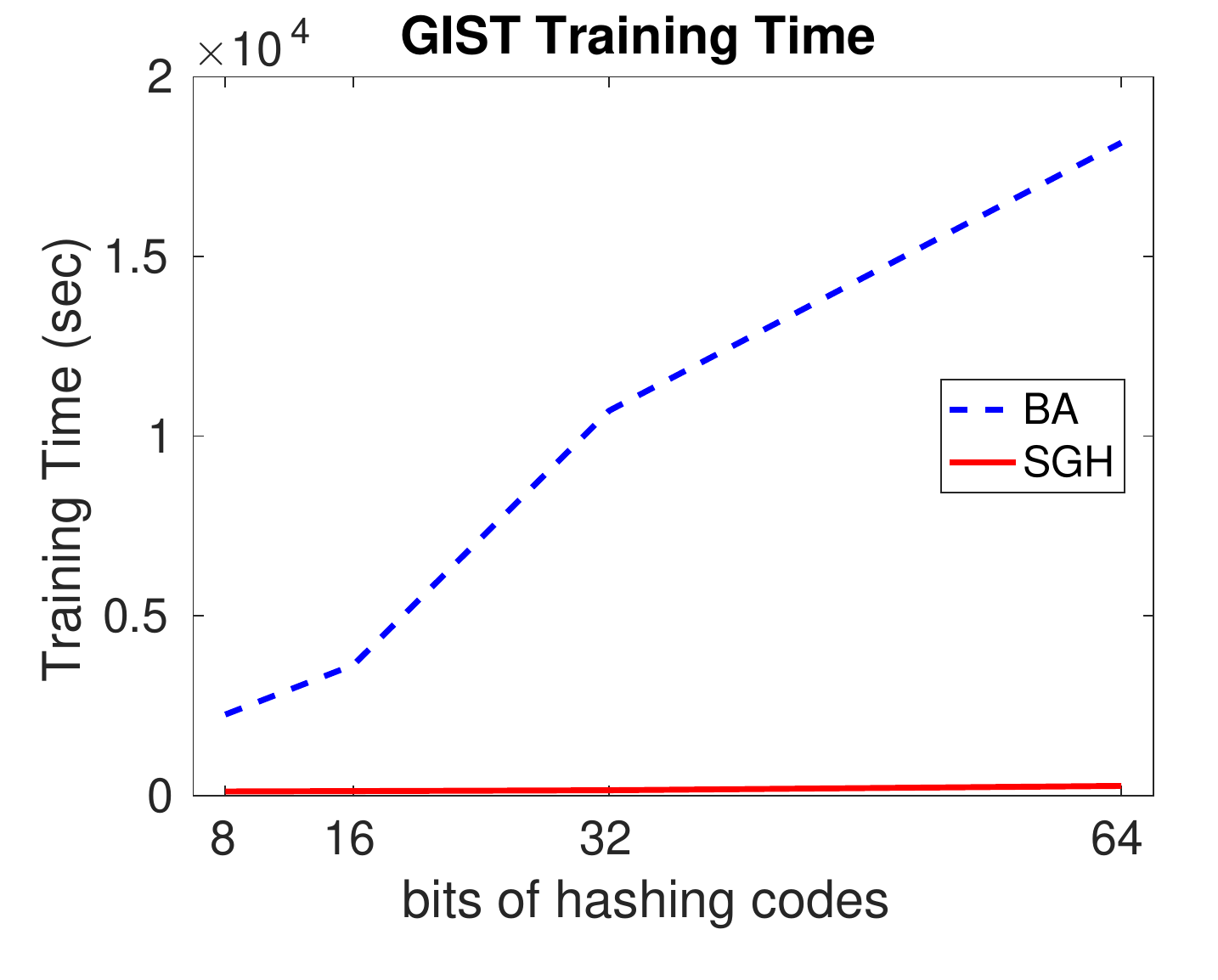}\\
    (a) \texttt{MNIST} &\hspace{-4mm} (b)  \texttt{GIST-1M}  \\
  \end{tabular}
  \caption{Training time comparison between BA and SGH on \texttt{MNIST} and \texttt{GIST-1M}.}
  \label{fig:more_train_time}
\end{center}
\end{figure*}
\begin{figure*}[t]
\begin{center}
\begin{tabular}{ccc}
    \includegraphics[width=0.3\columnwidth, height=0.255\columnwidth, trim={0.25cm 0cm 1cm 0.6cm},clip]{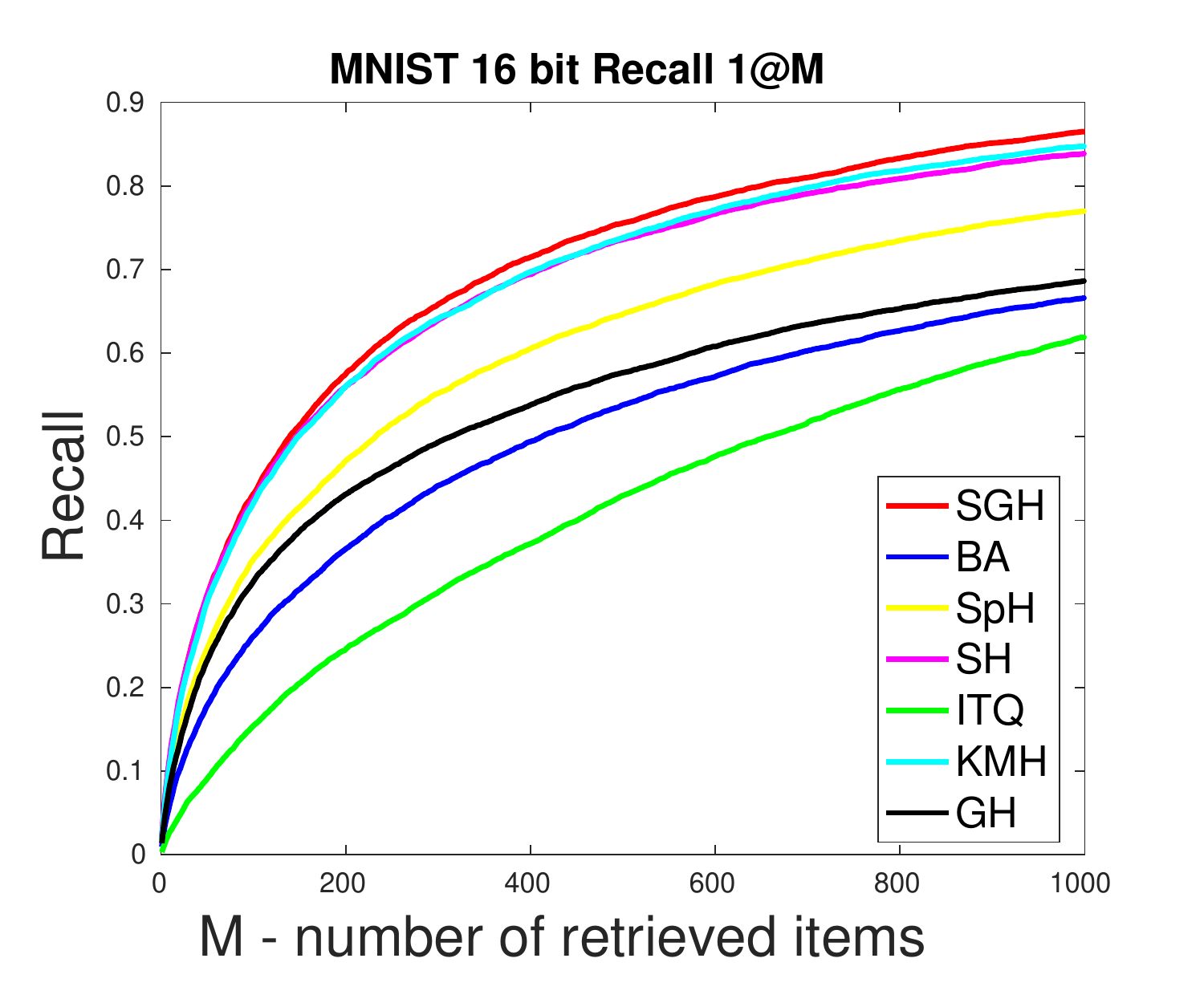}&\hspace{-5mm}
    \includegraphics[width=0.32\columnwidth, trim={0.25cm 0.6cm 1cm 1cm},clip]{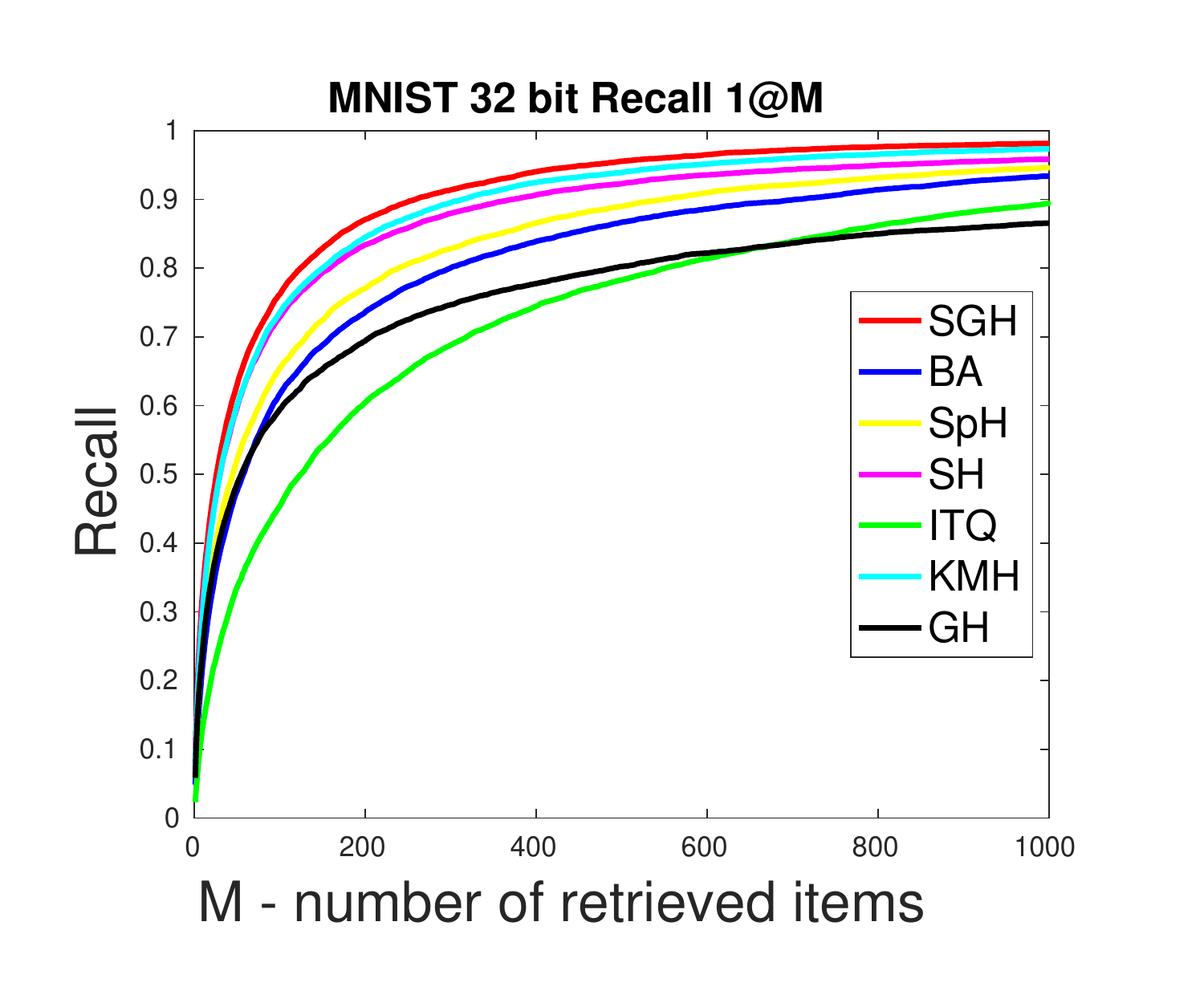}&\hspace{-5mm}
    \includegraphics[width=0.32\columnwidth, trim={0.25cm 0.6cm 1cm 1cm},clip]{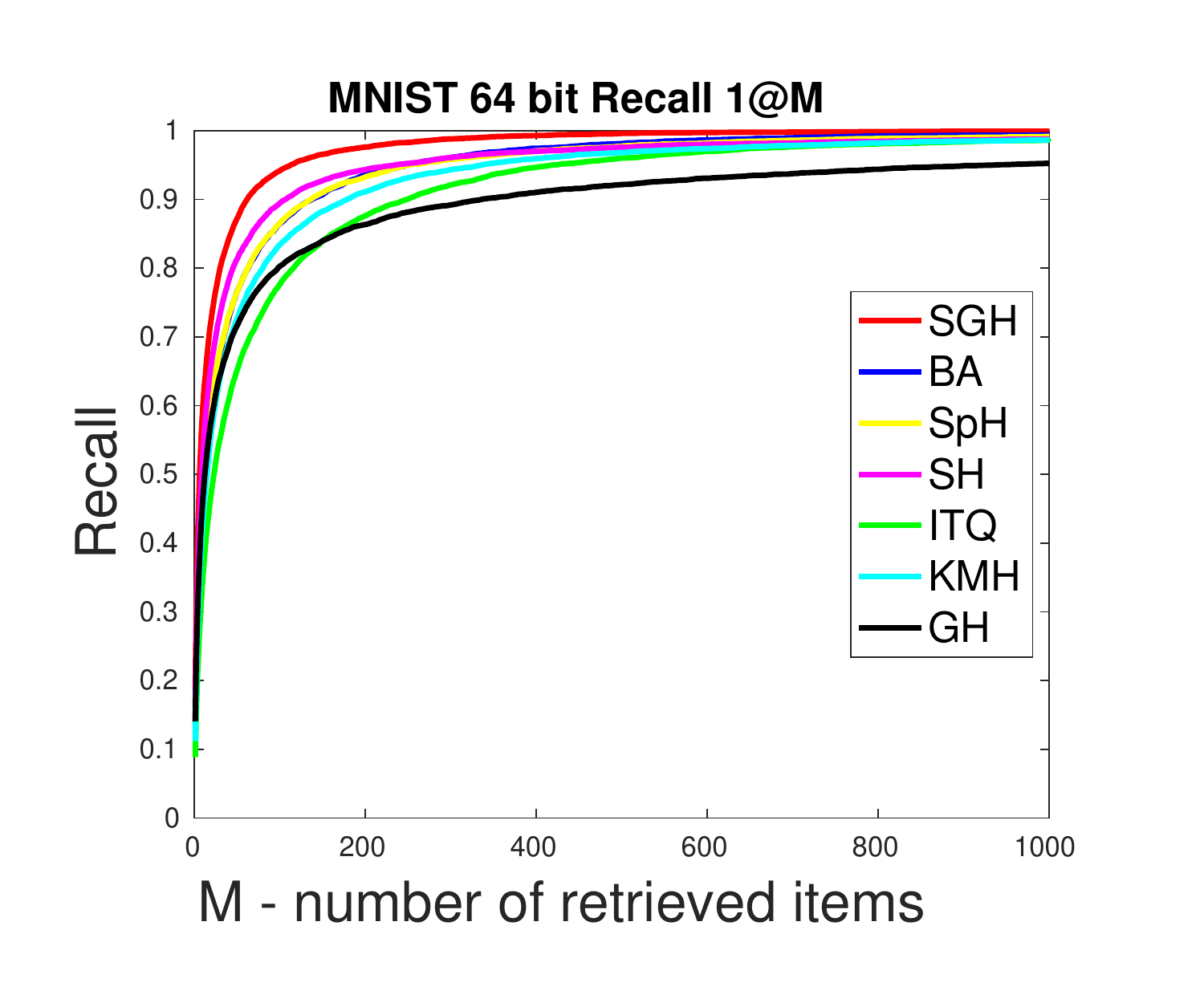}\\

    \includegraphics[width=0.33\columnwidth, trim={0.25cm 0.5cm 1cm 1cm},clip]{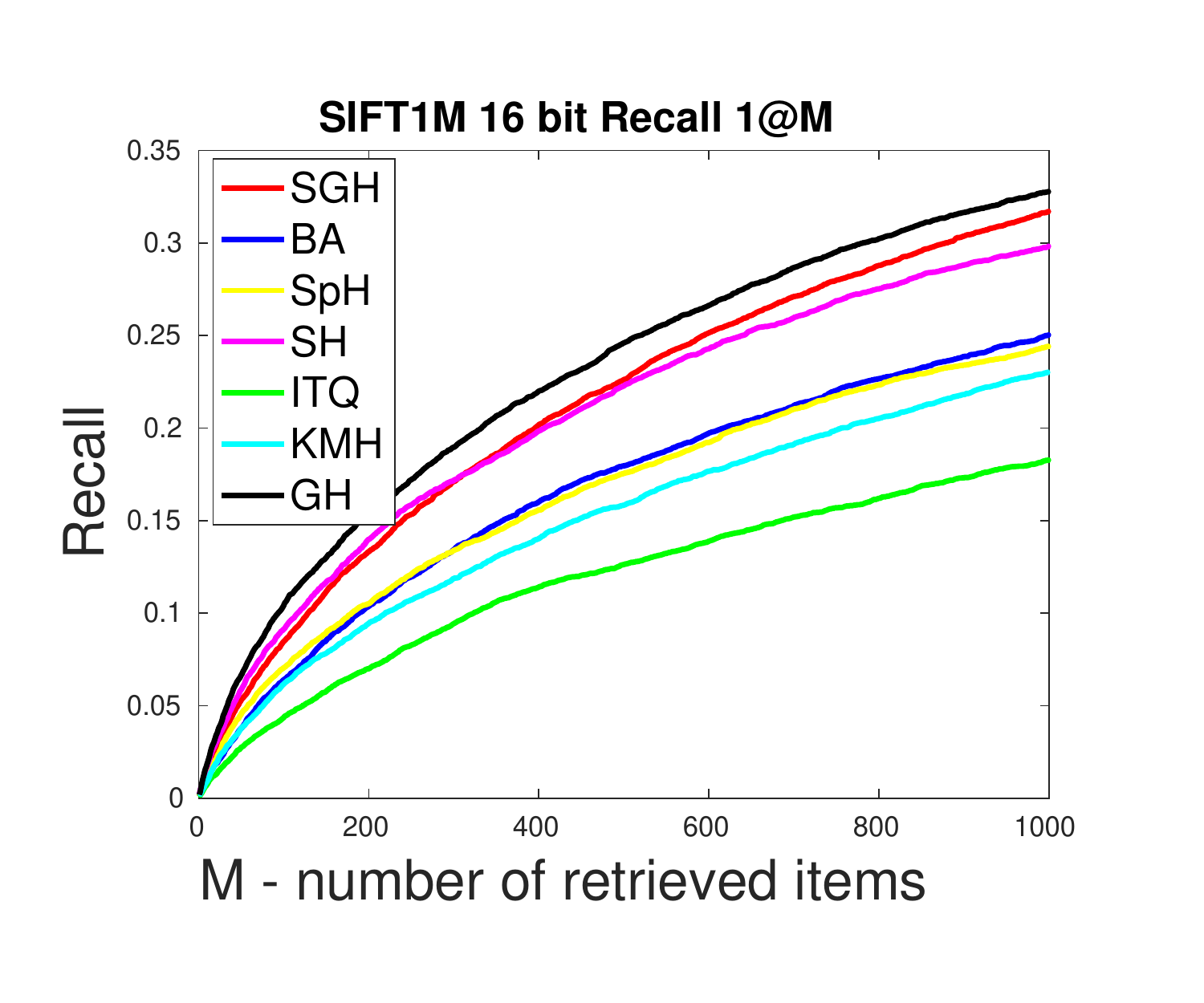}&\hspace{-5mm}
    \includegraphics[width=0.315\columnwidth, trim={0.25cm 0.3cm 1cm 0.5cm},clip]{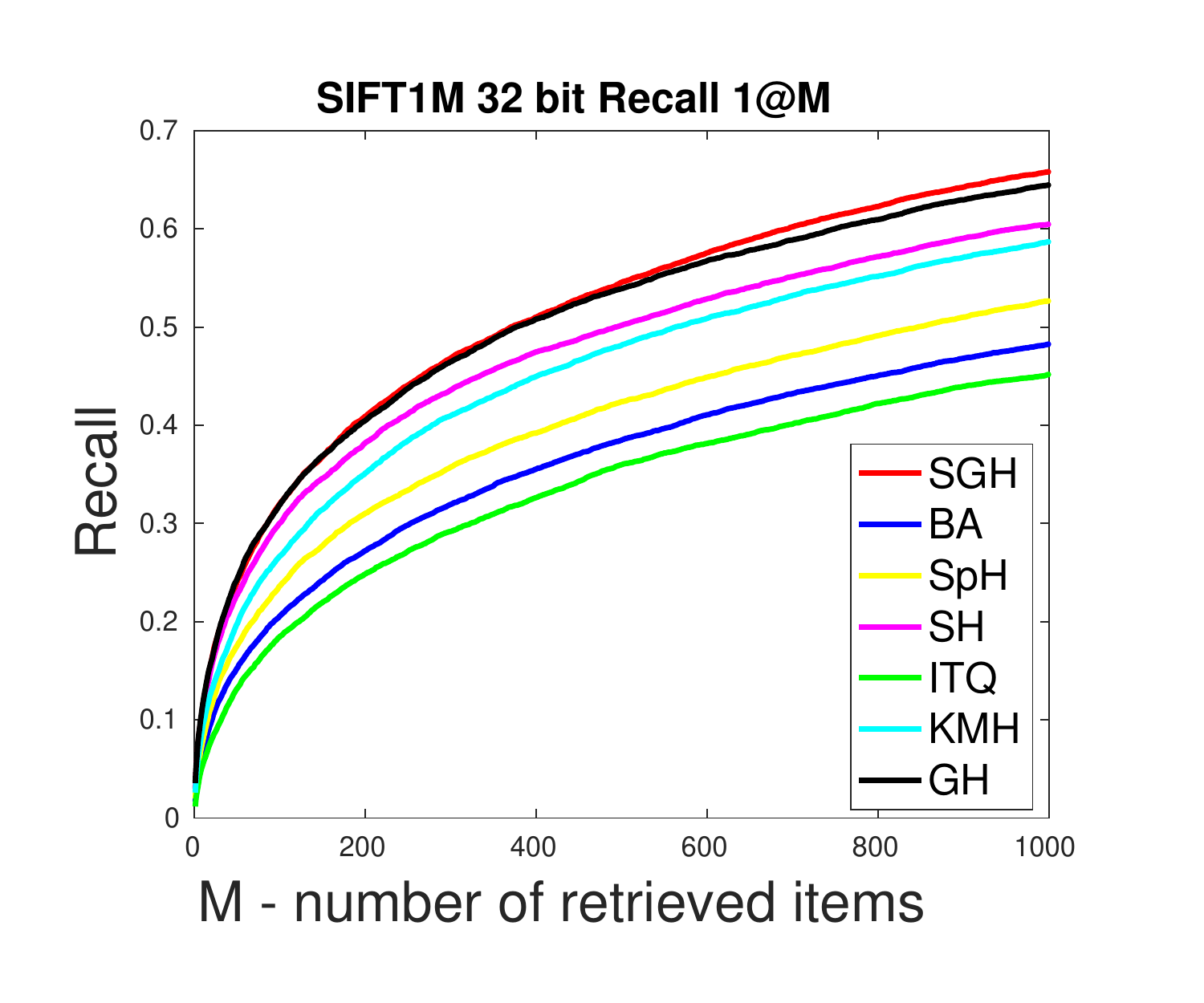}&\hspace{-5mm}
    \includegraphics[width=0.32\columnwidth, trim={0.25cm 0.3cm 1cm 0.5cm},clip]{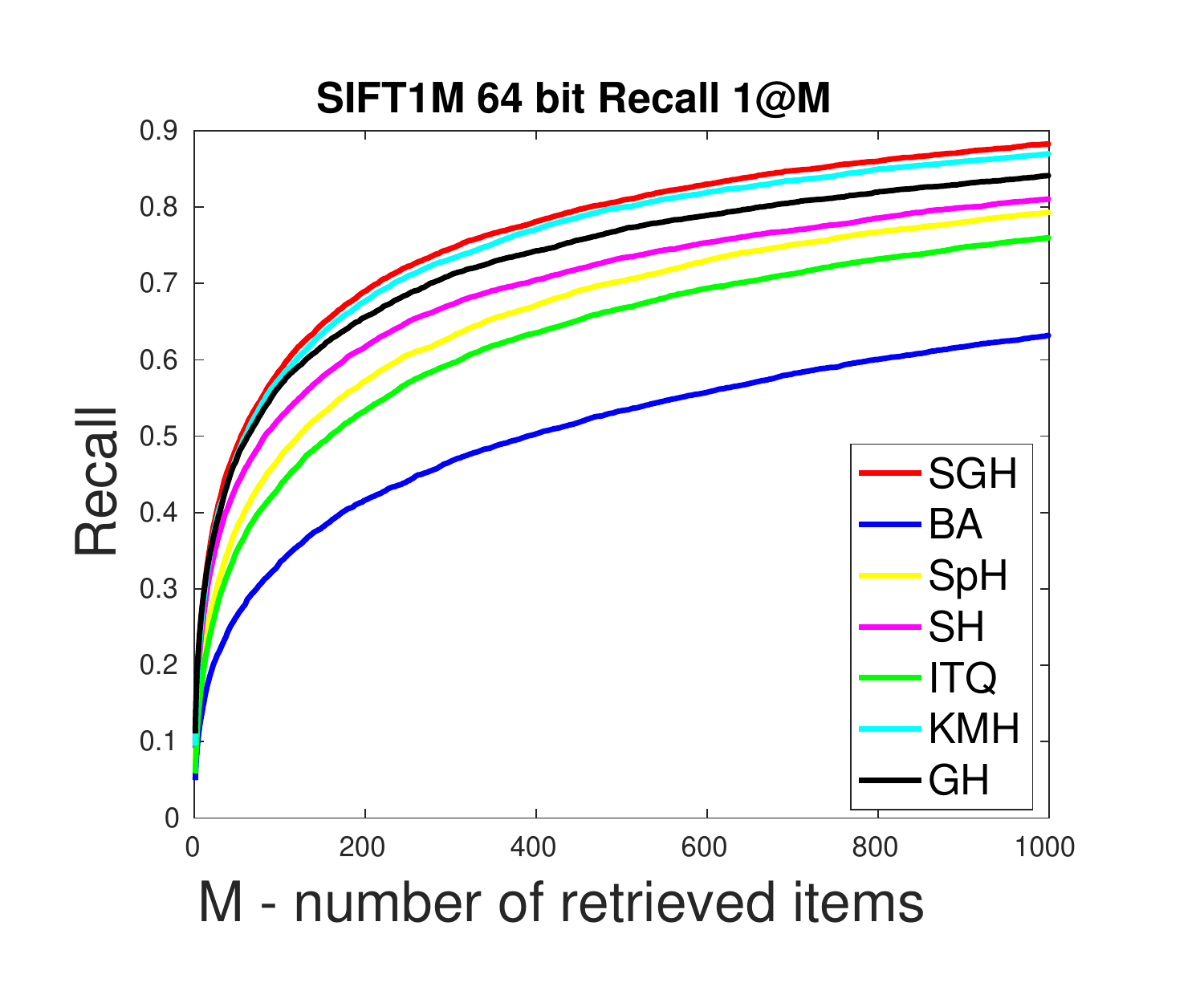}\\

    \includegraphics[width=0.32\columnwidth, trim={0.25cm 0.3cm 1cm 0.5cm},clip]{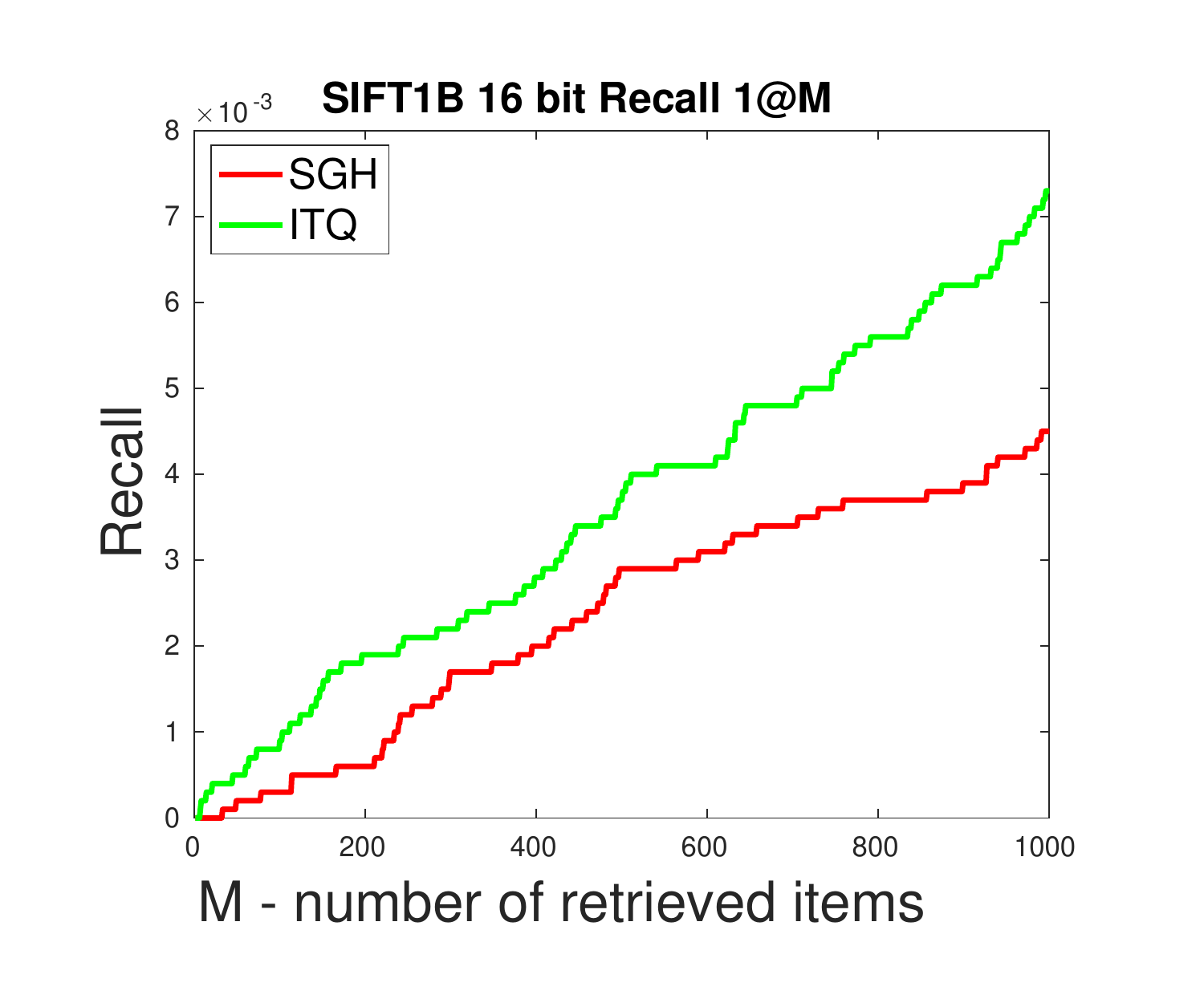}&\hspace{-5mm}
    \includegraphics[width=0.32\columnwidth, trim={0.25cm 0.3cm 1cm 0.5cm},clip]{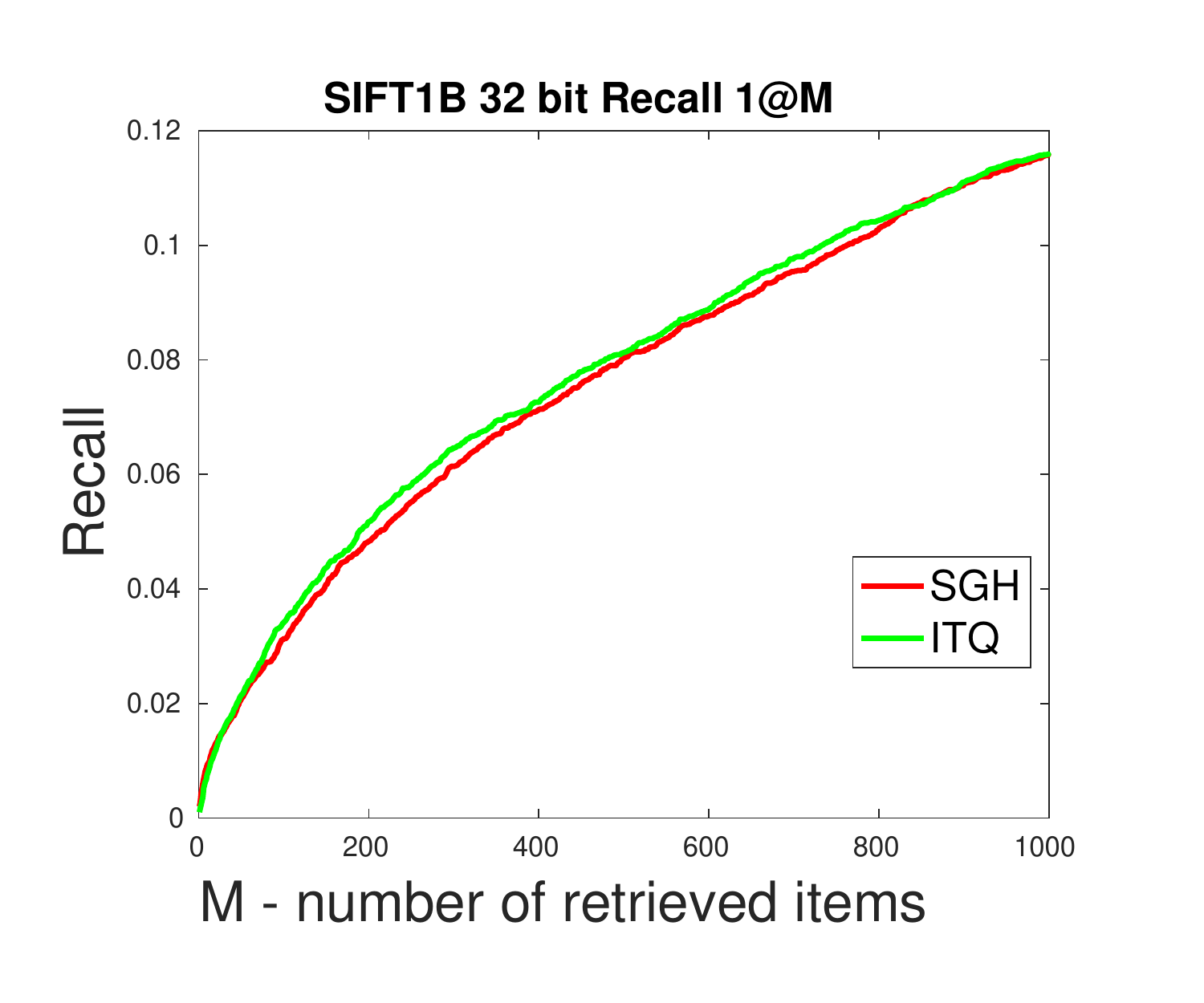}&\hspace{-5mm}
    \includegraphics[width=0.32\columnwidth, trim={0.25cm 0.3cm 1cm 0.5cm},clip]{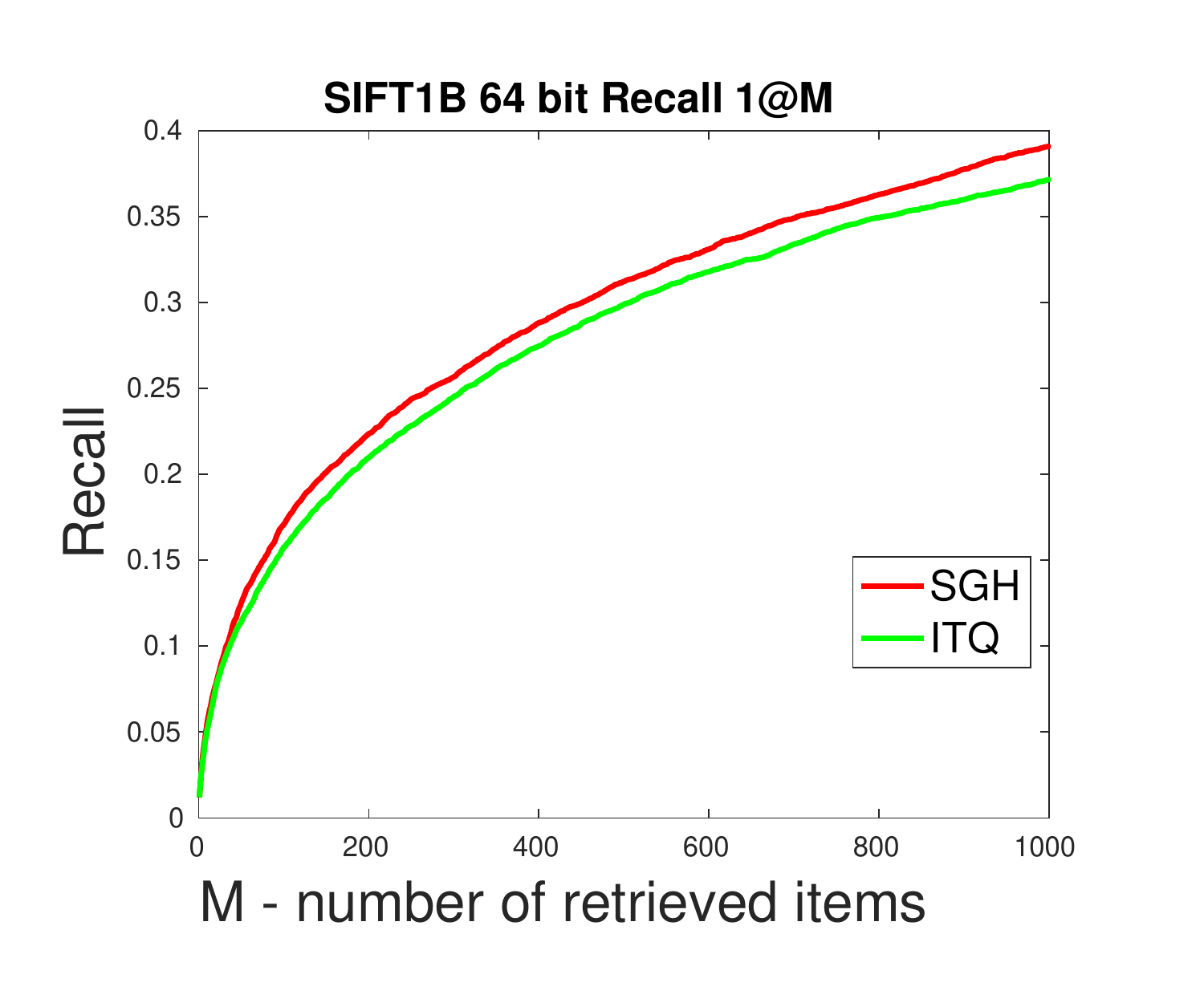}\\

	\includegraphics[width=0.32\columnwidth, trim={0.25cm 0.3cm 1cm 0.5cm},clip]{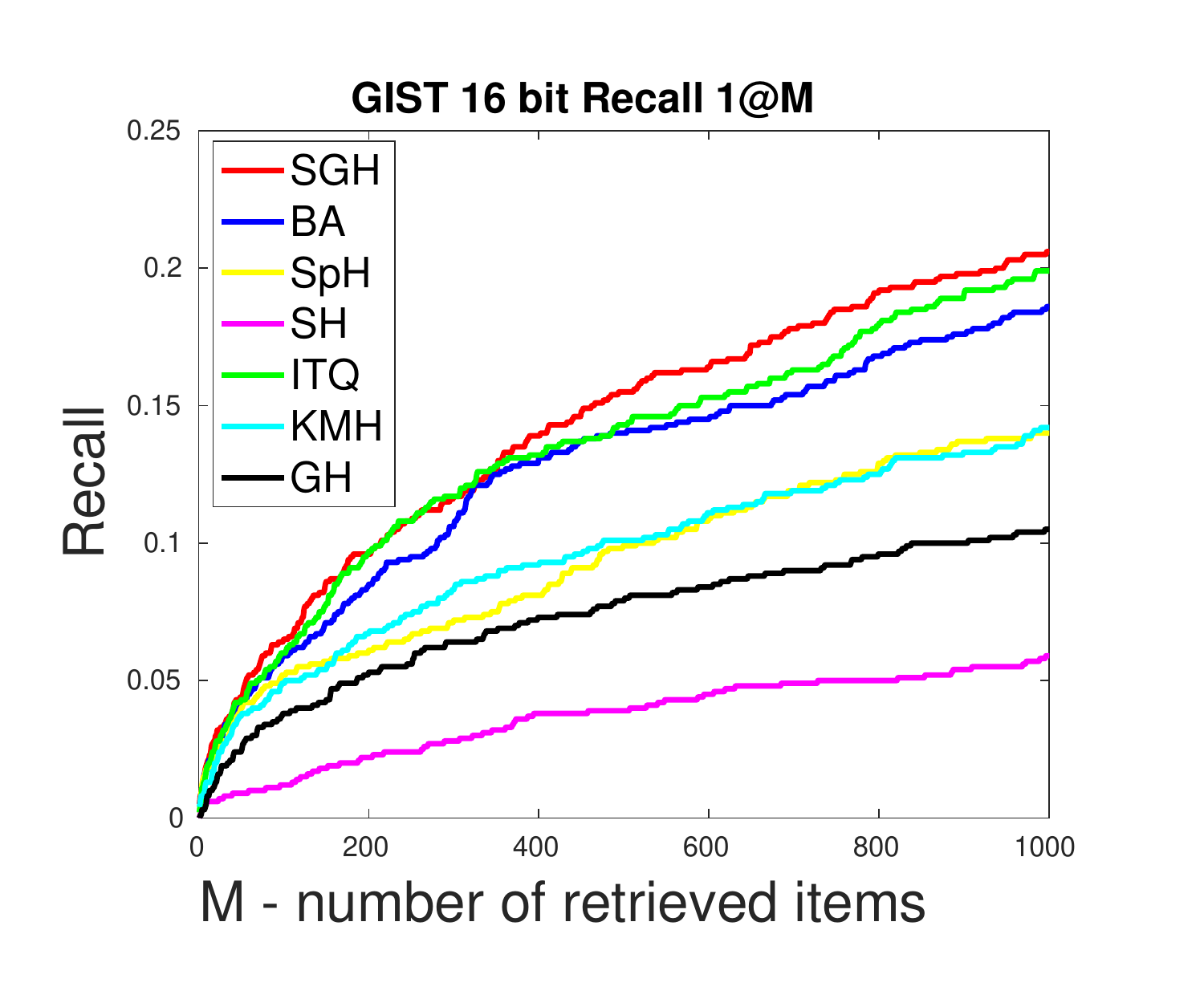}&\hspace{-5mm}
    \includegraphics[width=0.32\columnwidth, trim={0.25cm 0.5cm 1cm 0.5cm},clip]{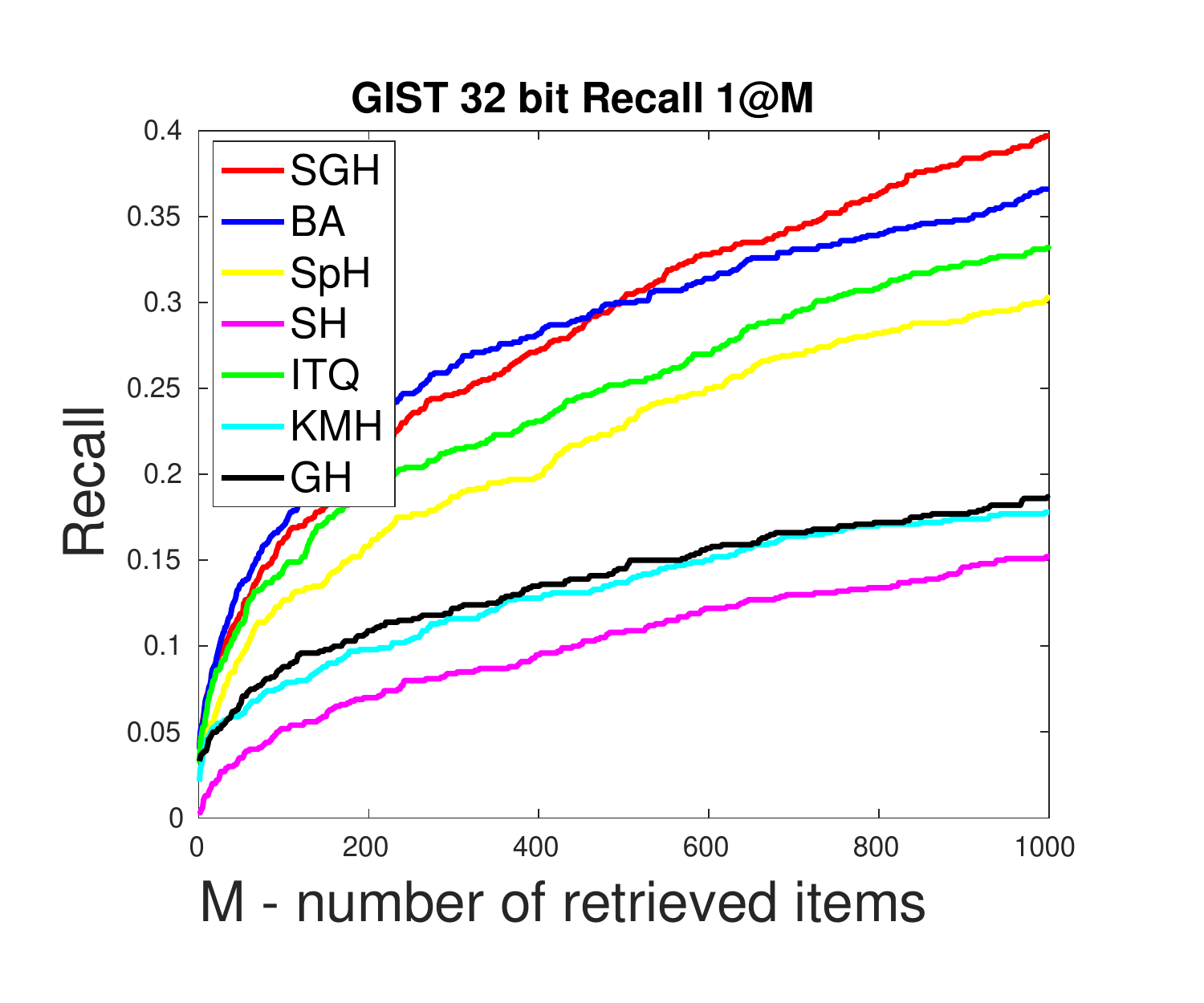}&\hspace{-5mm}
    \includegraphics[width=0.32\columnwidth, trim={0.25cm 0.5cm 1cm 0.5cm},clip]{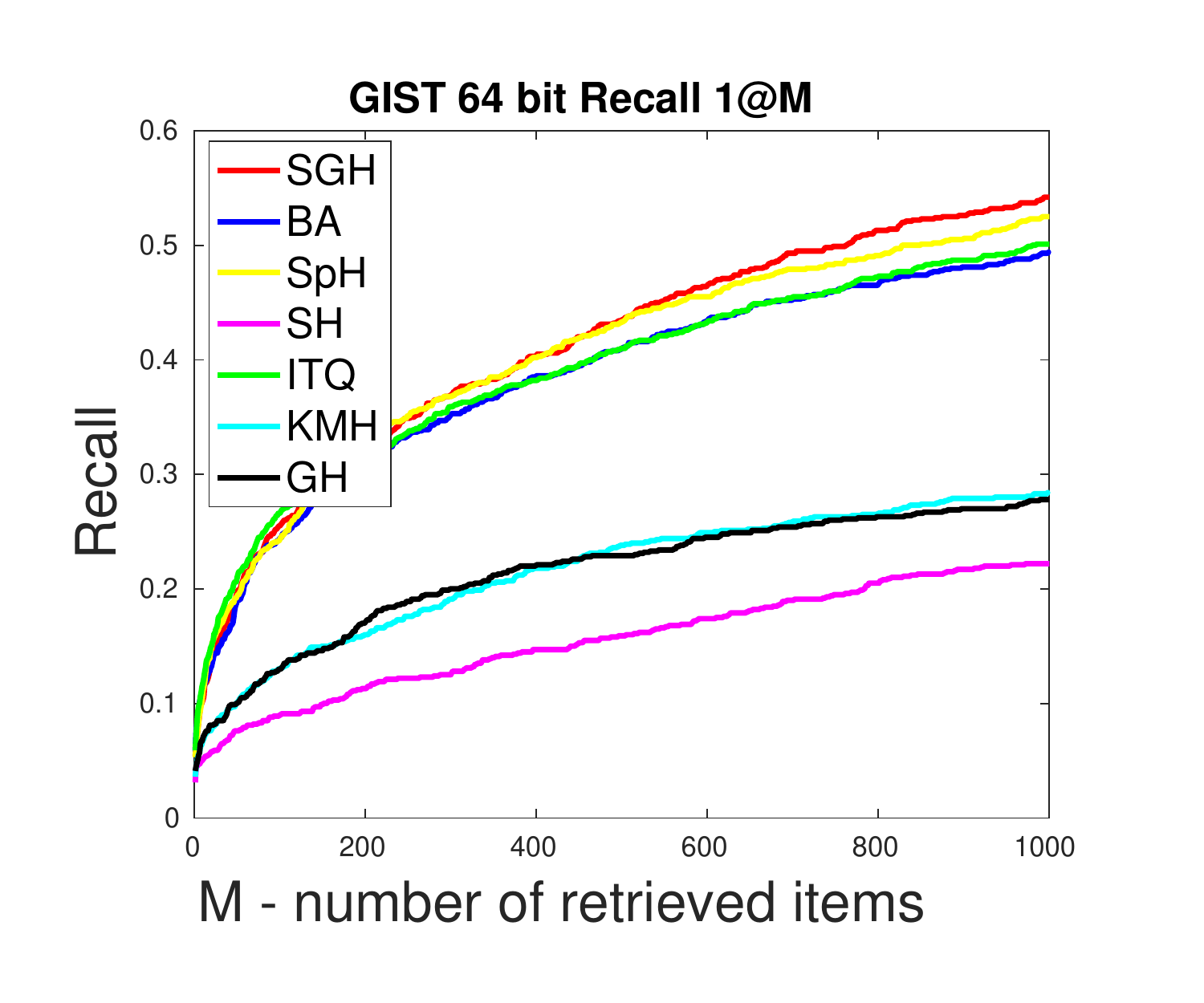}\\
  \end{tabular}
  \vspace{-4mm}
  \caption{L2NNS comparison on \texttt{MNIST}, \texttt{SIFT-1M}, \texttt{SIFT-1B}, and \texttt{GIST-1M} with the length of binary bits from $16$ to $64$. We evaluate the performance with Recall 1@$M$, where $M$ increasing to $1000$. }
  \label{fig:recall_1}
\end{center}
\end{figure*}
We shows the training time comparison between BA and SGH on \texttt{MNIST} and \texttt{GIST-1M} in Figure~\ref{fig:more_train_time}. The results are similar to the performance on \texttt{SIFT-1M}. The proposed distributional SGD learns the model much faster.

\subsection{More Evaluation on L2NNS Retrieval Tasks}

We also use different RecallK@N to evaluate the performances of our algorithm and the competitors. We first evaluated the performance of the algorithms with Recall 1@N in Figure~\ref{fig:recall_1}. This is an easier task comparing to $K=10$. Under such measure, the proposed SGH still achieves the state-of-the-art performance.

In Figure~\ref{fig:recall_100_bits}, we set $K, N=100$ and plot the recall by varying the length of the bits on \texttt{MNIST}, \texttt{SIFT-1M}, and \texttt{GIST-1M}. This is to show the effects of length of bits in different baselines. Similar to the Recall10@N, the proposed algorithm still consistently achieves the state-of-the-art performance under such evaluation measure. 
\begin{figure}[h]
\hspace{-8mm}
\begin{center}
  \begin{tabular}{ccc}
  	\includegraphics[width=0.31\columnwidth, trim={0.25cm 0.3cm 1cm 0.5cm},clip]{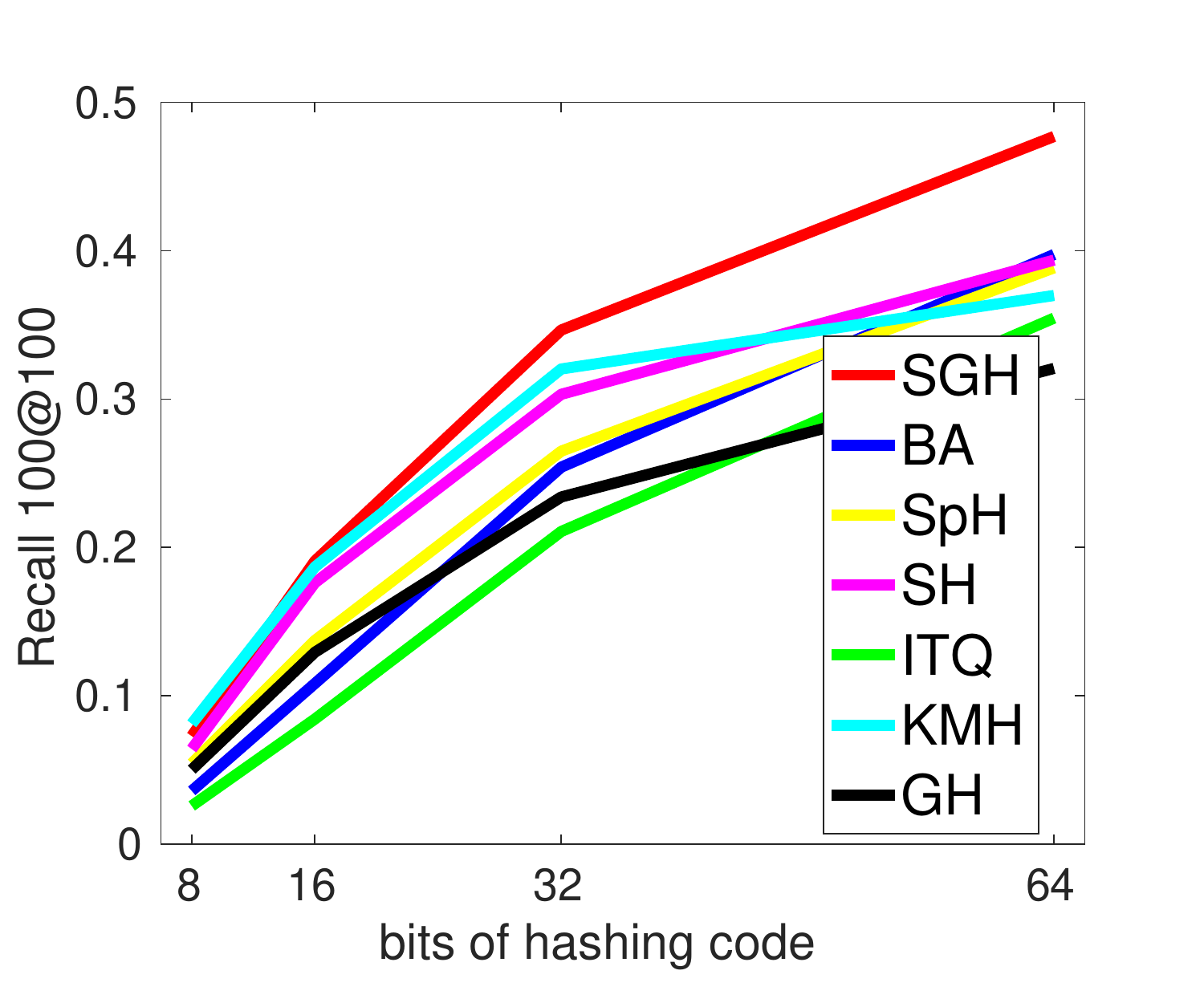}&\hspace{-5mm}
    \includegraphics[width=0.31\columnwidth, trim={0.225cm 0.3cm 1cm 0.5cm},clip]{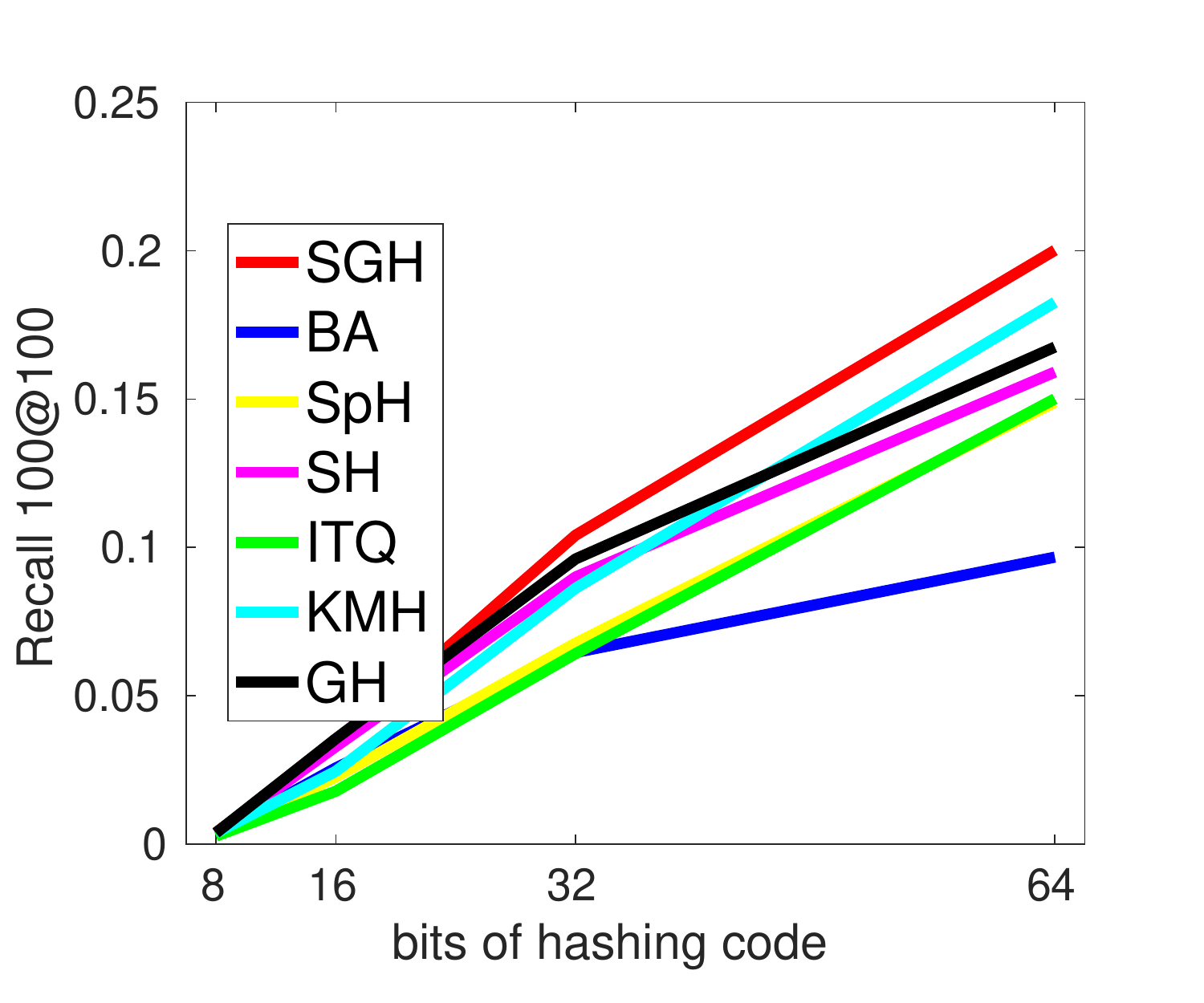}&\hspace{-5mm}
    \includegraphics[width=0.31\columnwidth, trim={0.25cm 0.3cm 1cm 0.5cm},clip]{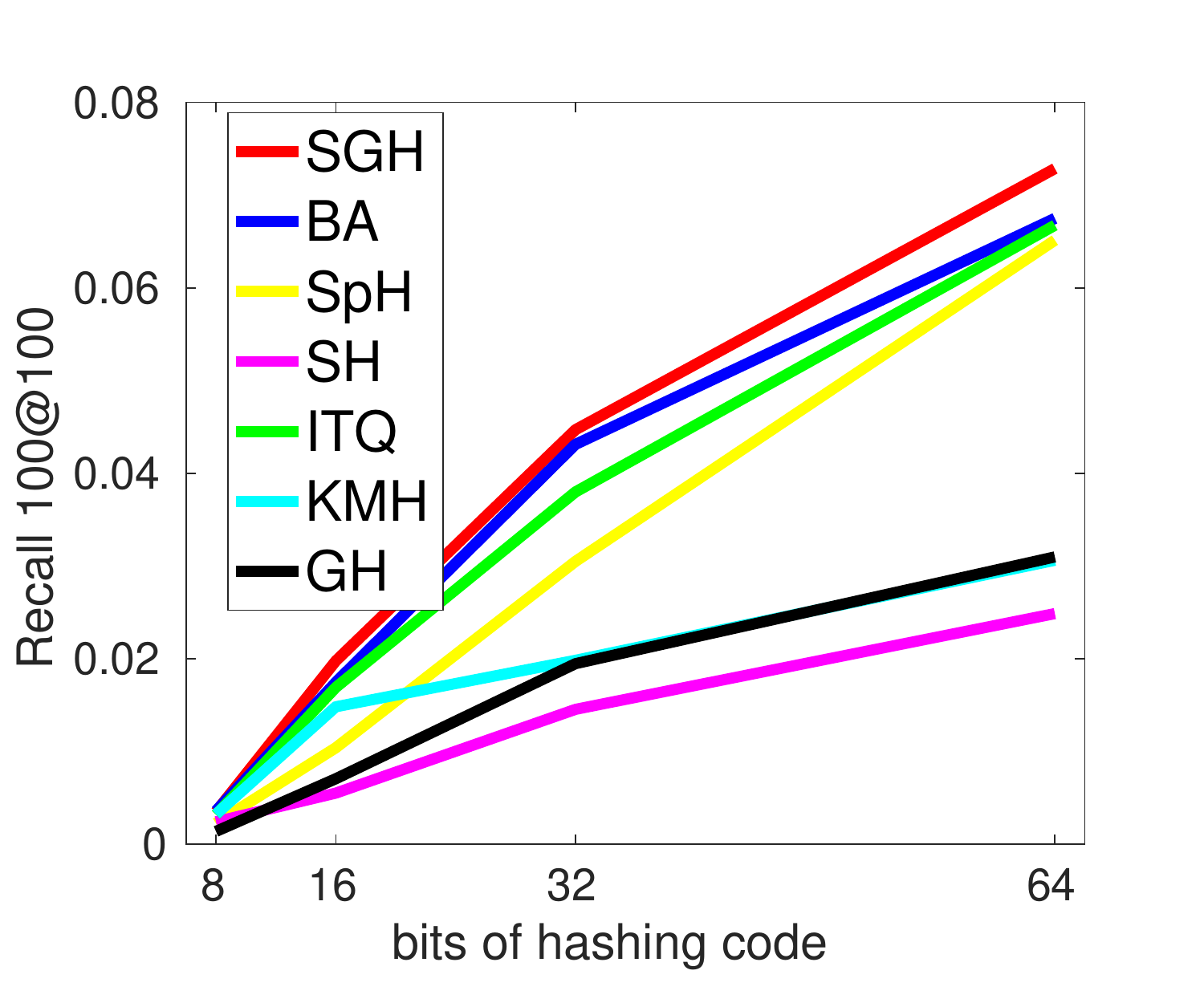}\\
    (a) L2NNS on \texttt{MNIST}  & (b) L2NNS on \texttt{SIFT-1M} & (c) L2NNS on \texttt{GIST-1M}\\
  \end{tabular}
  \caption{L2NNS comparison on \texttt{MNIST}, \texttt{SIFT-1M}, and \texttt{GIST-1M} with Recall 100@100 for the length of bits from $8$ to $64$.}
  \label{fig:recall_100_bits}
\end{center}
\end{figure}

\section{Stochastic Generative Hashing For Maximum Inner Product Search}\label{appendix:sgh_mips}

In Maximum Inner Product Search~(MIPS) problem, we evaluate the similarity in terms of inner product which can avoid the scaling issue, \ie, the length of the samples in reference dataset and the queries may vary. The proposed model can also be applied to the MIPS problem. In fact, the Gaussian reconstruction model also preserve the inner product neighborhoods. Denote the asymmetric inner product as $x^\top U h_y$, we claim 
\vspace{-3mm}
\begin{proposition}\label{prop:mips_preserve}
The Gaussian reconstruction error is a surrogate for asymmetric inner product preservation.
\end{proposition}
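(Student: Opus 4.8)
The plan is to mirror the argument behind Proposition~\ref{prop:ann_preserve}, but with the Cauchy--Schwarz inequality in place of the triangle inequality. The asymmetric setting is actually simpler than the Euclidean one: only the database point $y$ is replaced by its reconstruction $U h_y$ while the query $x$ is kept in raw form, so a single reconstruction-error term will appear rather than two.

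First I would write the discrepancy between the true inner product $x^\top y$ and the asymmetric inner product $x^\top U h_y$ as a single inner product and bound its magnitude by Cauchy--Schwarz:
$$
\big| x^\top y - x^\top U h_y \big| = \big| x^\top (y - U h_y) \big| \le \nbr{x}_2 \, \nbr{y - U h_y}_2 .
$$
The factor $\nbr{y - U h_y}_2$ is precisely the Gaussian reconstruction error of $y$: since $p(y|h_y) = \Ncal(U h_y, \rho^2 I)$, we have $-\log p(y|h_y) = \frac{1}{2\rho^2}\nbr{y - U h_y}_2^2 + \text{const}$, and this is exactly the term driven down by the MDL training objective~\eq{eqn:helmholtz}.

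To conclude, I would note that for a query set of bounded norm $\nbr{x}_2$, the inequality shows that the gap between the true and the asymmetric inner products is controlled by the reconstruction error of the database point. Hence minimizing $-\log p(y|h_y)$ across the database simultaneously pushes $x^\top U h_y$ toward $x^\top y$ for every query, which is exactly the assertion that the Gaussian reconstruction error serves as a surrogate for asymmetric inner product preservation.

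The one step warranting care is the role of the boundedness hypothesis. Unlike the Euclidean case, where the bound hinged on $\nbr{U}_F$ being bounded, here the controlling constant is the query norm $\nbr{x}_2$; I would state this assumption explicitly, observing that it holds automatically for any fixed finite query set and can otherwise be enforced by normalization. This is the analogue of the Frobenius-norm condition appearing in Proposition~\ref{prop:ann_preserve}, and it is the only place where anything beyond a direct Cauchy--Schwarz estimate is needed.
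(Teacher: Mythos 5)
Your proof is correct and matches the paper's own argument essentially verbatim: both bound $|x^\top y - x^\top U^\top h_y| = |x^\top(y - U^\top h_y)| \le \|x\|_2\,\|y - U^\top h_y\|_2$ via Cauchy--Schwarz and identify the second factor with the Gaussian reconstruction error $-\log p(y|h_y)$ up to constants. Your added remark making the bounded-query-norm assumption explicit is a minor refinement the paper leaves implicit, not a different route.
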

\vspace{-2mm}
\begin{proof} 
We evaluate the difference between inner product and the asymmetric inner product,
\begin{eqnarray*}
\|x^\top y - x^\top U^\top h_y\|_2 = \|x^\top \rbr{y - U^\top h_y}\|_2\le \|x\|_2\|y - U^\top h_y\|_2,
\end{eqnarray*}
which means minimizing the Gaussian reconstruction, \ie, $-\log p(x|h)$, error will also lead to asymmetric inner product preservation.
\vspace{-1mm}
\end{proof}
We emphasize that our method is designed for hashing problems primarily. Although it can be used for MIPS problem, it is different from the product quantization and its variants whose distance are calculated based on lookup table. The proposed distributional SGD can be extended to quantization. This is out of the scope of this paper, and we will leave it as the future work.

\subsection{MIPS Retrieval Comparison}
To evaluate the performance of the proposed SGH on MIPS problem, we tested the algorithm on \texttt{WORD2VEC} dataset for MIPS task. Besides the hashing baselines, since KMH is the Hamming distance generalization of PQ, we replace the KMH with product quantization~\cite{pq}. We trained the SGH with 71,291 samples and evaluated the performance with 10,000 query. Similarly, we vary the length of binary codes from $16$, $32$ to $64$, and evaluate the performance by Recall 10@N. We calculated the ground-truth via retrieval through the original inner product. The performances are illustrated in Figure~\ref{fig:MIPS_bits}. The proposed algorithm outperforms the competitors significantly, demonstrating the proposed SGH is also applicable to MIPS task.
\begin{figure}[t]
\begin{center}
\hspace{-8mm}
\begin{tabular}{ccc}
    \includegraphics[width=0.32\columnwidth, trim={0.25cm 0.3cm 1cm 0.5cm},clip]{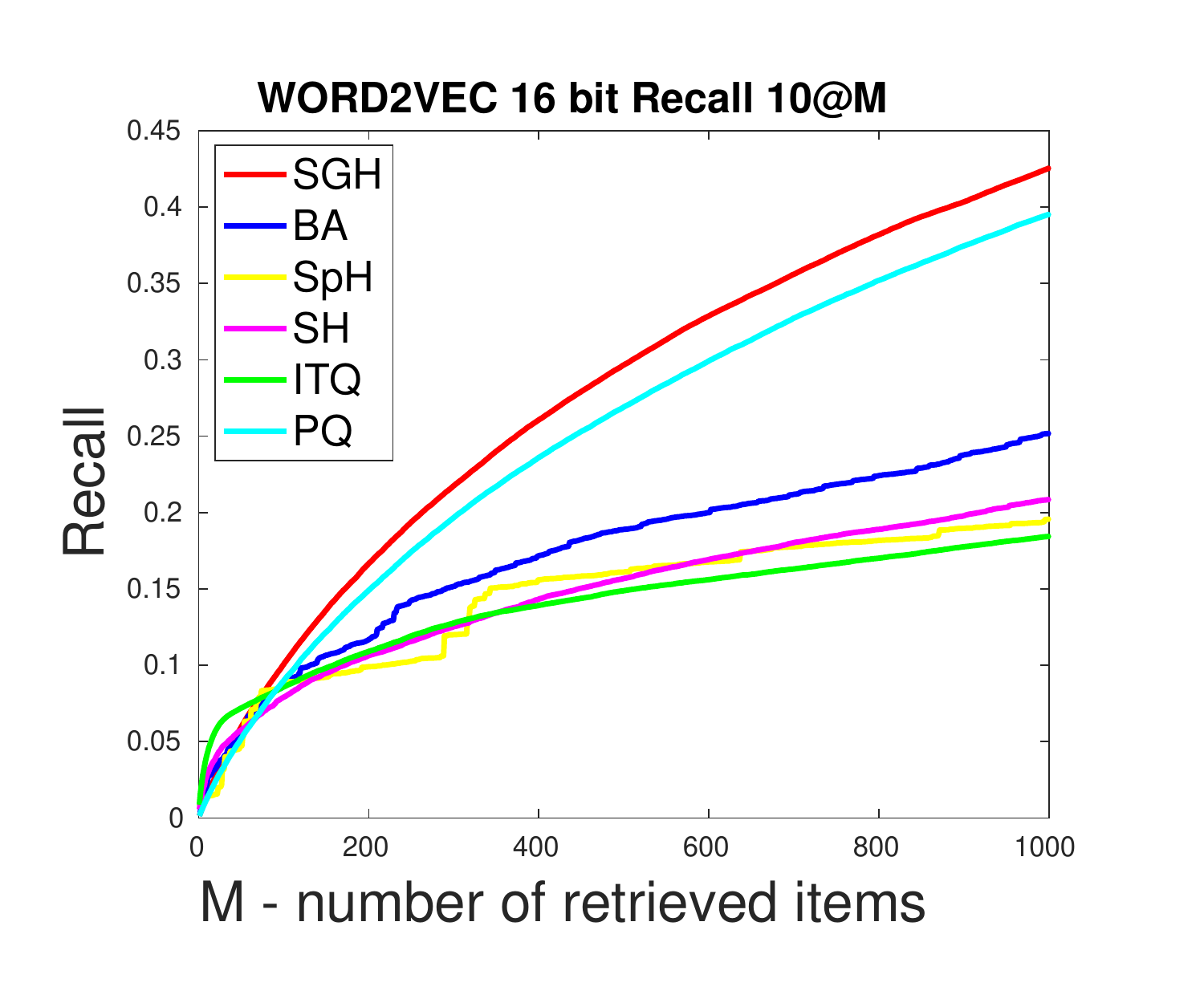}&\hspace{-5mm}
    \includegraphics[width=0.32\columnwidth, trim={0.25cm 0.3cm 1cm 0.5cm},clip]{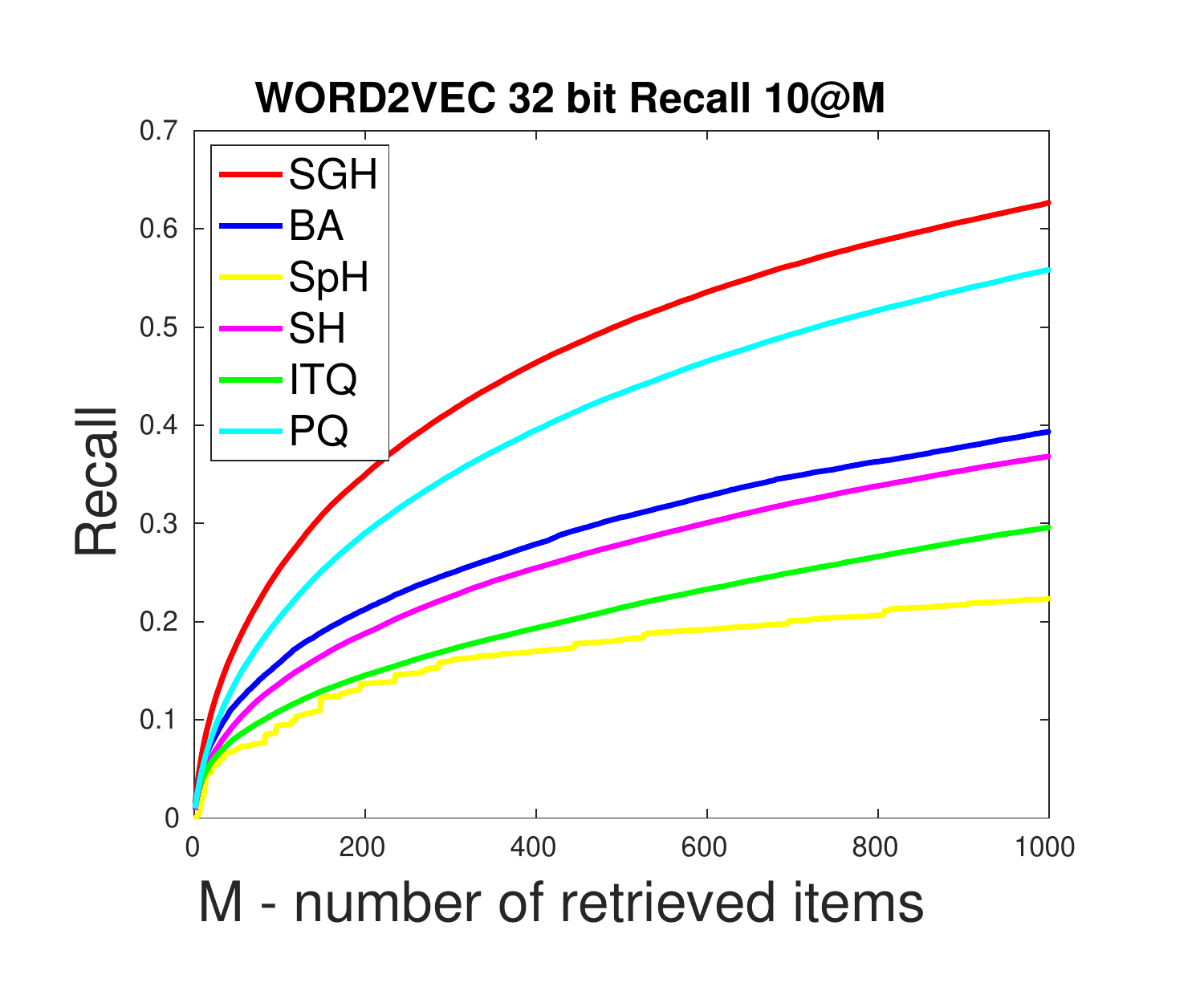}&\hspace{-5mm}
    \includegraphics[width=0.29\columnwidth, trim={0.25cm 0cm 0.7cm 0.5cm},clip]{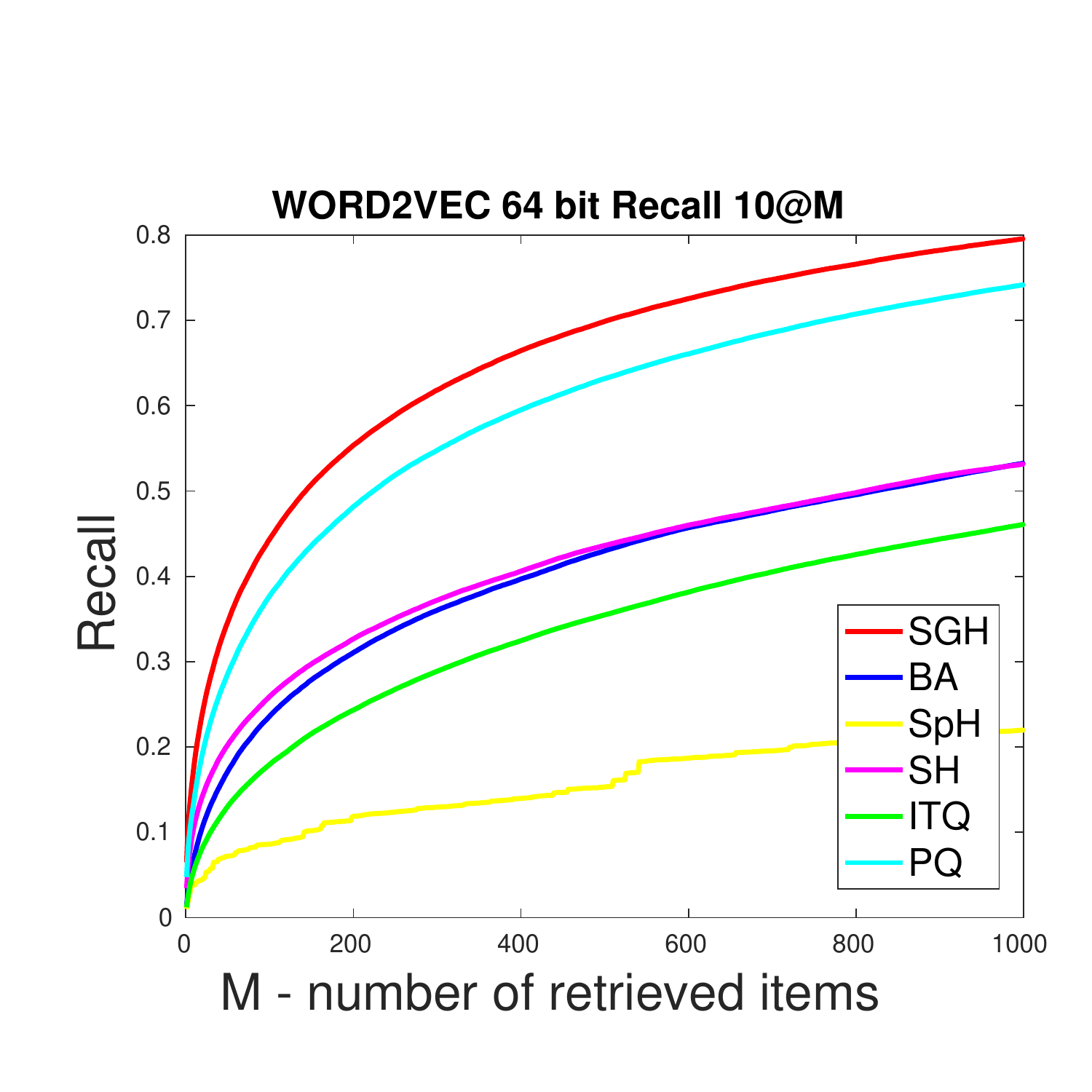}\\
  \end{tabular}
  \caption{MIPS comparison on \texttt{WORD2VEC} with the length of binary bits from $16$ to $64$. We evaluate the performance with Recall 10@$M$, where $M$ increasing to $1000$. }
  \label{fig:MIPS_bits}
\end{center}
\end{figure}
%

\section{Generalization}\label{appendix:generalization}

We generalize the basic model to translation and scale invariant extension, semi-supervised extension, as well as coding with $h\in \{-1, 1\}^l$. 

\subsection{Translation and Scale Invariant Reduced-MRFs}

As we known, the data may not zero-mean, and the scale of each sample in dataset can be totally different. To eliminate the translation and scale effects, we extend the basic model to translation and scale invariant reduced-MRFs by introducing parameter $\alpha$ to separate the translation effect and the latent variable $z$ to model the scale effect in each sample $x$, therefore, the potential function becomes
\begin{eqnarray}\label{eq:scale_free_mrf}
E(x, h, z) = -\beta^\top h + \frac{1}{2\rho^2}{(x -\alpha - U^\top(z\cdot h))^\top (x -\alpha - U^\top(z\cdot h))},
\end{eqnarray}
where $\cdot$ denotes element-wise product, $\alpha\in \RR^d$ and $z\in \RR^l$. Comparing to~\eq{eqn:reduced_mrf}, we replace $U^\top h$ with $U^\top(z\cdot h) + \alpha$ so that the translation and scale effects in both dimension and sample are modeled explicitly.

We treat the $\alpha$ as parameters and $z$ as latent variable. Assume the independence in posterior for computational efficiency, we approximate the posterior $p(z, h|x)$ with $q(h|x; W_h)q(z|x; W_z)$, where $W_h, W_z$ denotes the parameters in the posterior approximation. With similar derivation, we obtain the learning objective as 
\begin{equation}\label{eq:scale_free_opt}
\max_{U, \alpha, \beta, \rho; W_h, W_z} \frac{1}{N}\sum_{i=1}^N \EE_{q(h|x_i)q(z|x_i)}\sbr{-E(x, h, z) - \log q(h|x_i) - \log q(z|x_i)}.
\end{equation}
Obviously, the proposed distributional SGD is still applicable to this optimization.

\subsection{Semi-supervised Extension}
Although we only focus on learning the hash function in unsupervised setting, the proposed model can be easily extended to exploit the supervision information by introducing pairwise model, \eg, \cite{ZhaZhaLiGuo14,ZhuLonWanCao16}. Specifically, we are provided the (partial) supervision information for some pairs of data, \ie, $\Scal = \{x_i, x_i, y_{ij}\}_{i,j}^M$, where
$$
  y_{ij} = \begin{cases}
    1       & \quad \text{if } x_i \in \Ncal\Ncal(x_j)\text{ or } x_j \in \Ncal\Ncal(x_i)\\
    0  & \quad \text{o.w.} \\
  \end{cases},
$$
and $\Ncal\Ncal(x)$ stands for the set of nearest neighbors of $x$. Besides the original Gaussian reconstruction model in the basic model in~\eq{eqn:reduced_mrf}, we introduce the pairwise model $p(y_{ij}|h_i, h_j) = \Bcal(\sigma(h_i^\top h_j))$ into the framework, which results the joint distribution over $x, y, h$ as 
\begin{eqnarray*}
p(x_i, x_j, h_i, h_j, y_{ij})= 
p(x_i|h_i)p(x_j|h_j)p(h_i)p(h_j)p(y_{ij}|h_i, h_j)^{\one_{\Scal}(ij)},
\end{eqnarray*}
where $\one_{\Scal}(ij)$ is an indicator that outputs $1$ when $(x_i, x_j)\in \Scal$, otherwise $0$. Plug the extended model into the Helmholtz free energy, we have the learning objective as,
\begin{eqnarray*}
\max_{U, \beta, \rho; W}\frac{1}{N^2}\sum_{i, j=1}^{N^2} &&\Big(\EE_{q(h_i|x_i)q(h_j|x_j)}\sbr{\log p(x_i, x_j, h_i, h_j)} + \EE_{q(h_i|x_i)q(h_j|x_j)}\sbr{\one_{\Scal}(ij)\log p(y_{ij}|h_i, h_j)}\nonumber \\
&&- \EE_{q(h_i|x_i)q(h_j|x_i)}\sbr{\log q(h_j|x_j)q(h_j|x_i)}\Big), \nonumber
\end{eqnarray*}
Obviously, the proposed distributional SGD is still applicable to the semi-supervised extension.

\subsection{$\{\pm 1\}$-Binary Coding}

In the main text, we mainly focus on coding with $\{0, 1\}$. In fact, the proposed model is applicable to coding with $\{-1, 1\}$ with minor modification. Moreover, the proposed distributional SGD is still applicable. We only discuss the basic model here, the model can also be extended to scale-invariant and semi-supervised variants. 

If we set $h\in \{-1, 1\}^l$, the potential function of basic reduced-MRFs~\eq{eqn:reduced_mrf} does not have any change, \ie, 
\begin{eqnarray}\label{eq:pm_reduced_mrf}
E(x, h) =  - \beta^\top h + \frac{1}{2\rho^2}\rbr{x^\top x + h^\top U^\top  Uh -  2x^\top Uh}.
\end{eqnarray}
We need to modify the parametrization of $q(h|x)$ as
\begin{eqnarray}\label{eq:pm_encoder_param}
q(h|x) = \prod_{i=1}^l \sigma(w_i^\top x)^{\frac{1 + h_i}{2}}\rbr{1 - \sigma(w_i^\top x)}^{\frac{1 - h_i}{2}}.
\end{eqnarray}
Therefore, the stochastic neuron becomes
\begin{eqnarray*}
f(z, \xi):= \begin{cases}
    1       & \quad \text{if } \sigma(z) \ge \xi  \\
    -1  & \quad \text{if } \sigma(z) < \xi
\end{cases}.
\end{eqnarray*}
With similar derivation, we have the distributional derivative of the objective w.r.t. $W$ as 
\begin{eqnarray}\label{eq:pm_new_grad}
\nabla_{W}L_{sn} = \EE_{\xi}\sbr{\Delta_f \ell(f(z, \xi))\nabla_z \sigma(z) x^\top},
\end{eqnarray}
where $\sbr{\Delta_f \ell(f(z, \xi))}_k = \ell(f^1_{k}) - \ell(f^{-1}_{k})$. Furthermore, we have a similar biased gradient estimator as
\begin{eqnarray}\label{eq:pm_new_grad}
\tilde\nabla_{W}L_{sn} = \EE_{\xi}\sbr{\nabla_f \ell(f(z, \xi))\nabla_z \sigma(z) x^\top}.
\end{eqnarray}
Plug these modification into the model and algorithm, we can learn a $\{-1, 1\}$-encoding function.

\end{appendix}

\end{document}